
\documentclass{article}

\usepackage{microtype}
\usepackage{graphicx}
\usepackage{subcaption}
\usepackage{multirow}
\usepackage{makecell}

\usepackage{booktabs} 
\usepackage{float}  
\usepackage{placeins}

\usepackage{hyperref}




\usepackage[accepted]{icml2026}

\usepackage{amsmath}
\usepackage{amssymb}
\usepackage{mathtools}
\usepackage{amsthm}

\usepackage[capitalize,noabbrev]{cleveref}

\theoremstyle{plain}
\newtheorem{theorem}{Theorem}[section]
\newtheorem{proposition}[theorem]{Proposition}
\newtheorem{lemma}[theorem]{Lemma}
\newtheorem{corollary}[theorem]{Corollary}
\theoremstyle{definition}
\newtheorem{definition}[theorem]{Definition}

\theoremstyle{remark}

\usepackage[disable,textsize=tiny]{todonotes}

\newcommand{\erfan}[2][]{\todo[color=blue!20,#1]{{\bf EM:} #2}}


\icmltitlerunning{Generalization of Gibbs and Langevin Monte Carlo Algorithms in the Interpolation Regime}

\begin{document}

\twocolumn[
  \icmltitle{Generalization of Gibbs and Langevin Monte Carlo Algorithms in the Interpolation Regime}



  \icmlsetsymbol{equal}{*}

  \begin{icmlauthorlist}
    \icmlauthor{Andreas Maurer}{iit}
    \icmlauthor{Erfan Mirzaei}{iit,unige,ecole}
    \icmlauthor{Massimiliano Pontil}{iit,ucl}
  \end{icmlauthorlist}

  \icmlaffiliation{iit}{Computational Statistics and Machine Learning, Istituto Italiano di Tecnologia, Genoa, Italy}
  \icmlaffiliation{unige}{Department of Mathematics, Università di Genova, Genova, Italy}
  \icmlaffiliation{ecole}{CMAP, École Polytechnique, Institut Polytechnique de Paris, Palaiseau, France}
  \icmlaffiliation{ucl}{Department of Computer Science, University College London, London, UK}

  \icmlcorrespondingauthor{Erfan Mirzaei}{erfunmirzaei@gmail.com}

  \icmlkeywords{Machine Learning, ICML}

  \vskip 0.3in
]



\printAffiliationsAndNotice{}  

\begin{abstract}
This paper provides data-dependent bounds on the expected error of the Gibbs algorithm in the overparameterized interpolation regime, where low training errors are also obtained for impossible data, such as random labels in classification. The results show that generalization in the low-temperature regime is already signaled by small training errors in the noisier high-temperature regime. The bounds are stable under approximation with Langevin Monte Carlo algorithms. The analysis motivates the design of an algorithm to compute bounds, which on the MNIST, CIFAR-10 and SVHN datasets yield nontrivial, close predictions on the test error for true labeled data, while maintaining a correct upper bound on the test error for random labels.
\end{abstract}

\section{Introduction\label{Section introduction}}
Modern learning algorithms can achieve very small training errors on arbitrary data if the underlying hypothesis space is large enough. For meaningful data, the chosen hypotheses also tend to have small test errors, a fortunate circumstance that has given great technological and economic thrust to deep learning.
Unfortunately, the same algorithms also achieve very small training errors for data specifically designed to produce very large test errors, such as random labels in classification. In such a situation, which we will loosely refer to as the {\it interpolation regime}, the hypothesis space and the training error do not suffice to predict
the test error. The key to generalization must be more deeply buried in the data.
While not so disquieting to practitioners, this mystery has troubled theoreticians for many years \citep{zhang2021understanding}, and it seems safe to say that the underlying mechanisms still have not been completely understood.

We are far from solving this riddle in generality, but for sufficiently close approximations of the Gibbs posterior, we show how nontrivial bounds on the test error can be recovered from the training data. The Gibbs posterior assigns probabilities that decrease exponentially with the training error of the hypotheses. The exponential decay parameter $\beta $ can be interpreted as an inverse temperature in an analogy to statistical physics. The Gibbs measure is a sufficient idealization to have tractable theoretical properties, but it is also the limiting distribution of several concrete stochastic algorithms, here summarized as Langevin Monte Carlo (LMC), including
Stochastic Gradient Langevin Dynamics (SGLD), \citep{gelfand1991recursive,welling2011bayesian}, a popular modern learning algorithm. 

When $\beta$ is large, and the hypothesis space is rich, these algorithms can reproduce the dilemma described above by achieving very small training errors on data designed to have large test errors. 
Our paper addresses this regime of the Gibbs posterior and makes the following three
contributions:
\begin{itemize}
    \item We give high-probability data-dependent bounds on the true error, both for a hypothesis drawn from the Gibbs posterior and for the posterior mean, assuming that we can freely draw samples from it.  These bounds hold for the entire range of temperatures.
    \item For time-homogeneous Markov processes, which converge to the Gibbs posterior, we derive bounds valid along the entire training trajectory, sharpening a recent result of \cite{harel2025temperature} and extending it to the low-temperature regime.
    \item We show that these bounds are stable under approximations of the Gibbs posterior in relative entropy. Given enough computing resources, this yields bounds for LMC algorithms, based on known results for non-convex sampling.  
    \item Existing convergence guarantees for LMC are insufficient for both a practical and rigorous computation of these bounds on real-world problems.\erfan{shall we call it heuristic or practical, or just without any adjective?} A heuristic calibration step, based exclusively on the training data, leads to very close upper bounds on the test error for various neural networks trained with LMC on the MNIST, CIFAR-10, and SVHN datasets.
\end{itemize}

The idea underlying our bound is the following. The PAC-Bayesian theorem or its single draw variant \citep{mcallester1999pac,alquier2024user,rivasplata2020pac} bounds the generalization error roughly proportional to the logarithm of the posterior density or its posterior expectation (the relative entropy to the prior). The log-density of the Gibbs posterior at inverse temperature $\beta$ has an explicit expression in terms of an integral from 0 to $\beta$ of mean training errors, a fact which seems to have been overlooked in the PAC-Bayesian analysis of generalization (Lemma \ref{Lemma Derivatives} below). Substitution of this integral in the PAC-Bayesian theorem then gives a bound on the generalization error. \newline
As an illustrative example: if the loss $\ell $ has values in $\left[ 0,1\right] $, and we
happen to draw from the Gibbs posterior at $\beta $ a hypothesis $h$ with training
error $\hat{L}\left( h,\mathbf{x}\right) =0$, then we have the following
bound on the expected (true) error of this hypothesis.%
\begin{equation}
E_{x}\left[ \ell \left( h,x\right) \right] \leq \frac{2\left( A+\ln \left( 2%
\sqrt{n}/\delta \right) \right) }{n},
\label{example bound}
\end{equation}%
where $A$ is the area in Figure \ref{illustration plot}, $n$ is the sample size and $\delta $ is the confidence parameter.
    \begin{figure}[h!]
    \centering
    \includegraphics{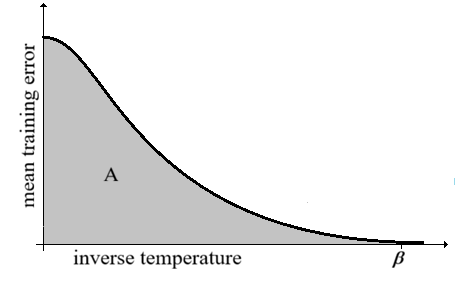}
    \caption{The mean training error of the Gibbs posterior is plotted against the inverse temperature.  If the training error $\hat{L}\left( h,\mathbf{x}\right) $ at $\beta$ is zero, then the
log density, $\ln \frac{dG_{\beta }\left( \mathbf{x}\right) }{d\pi }\left(
h\right) $, is equal to the area $A$.}
    \label{illustration plot}
    \end{figure}

For the Gibbs posterior, this simple but novel reasoning resolves the dilemma of the interpolation regime. The training error of sufficiently overparametrized systems at a large value of $\beta$ (low temperature) is typically near zero and does not distinguish between easy and hard (e.g., random-label) data, but the mean training losses at small values of $\beta$ (high temperatures) will be quite different, leading to different predictions also at large $\beta$ (low temperature). Paraphrased:

\textit{Better generalization in the low-temperature regime is already indicated by smaller training errors in the high-temperature regime.}

\begin{figure}[h!]
\centering
\includegraphics[width=\columnwidth]{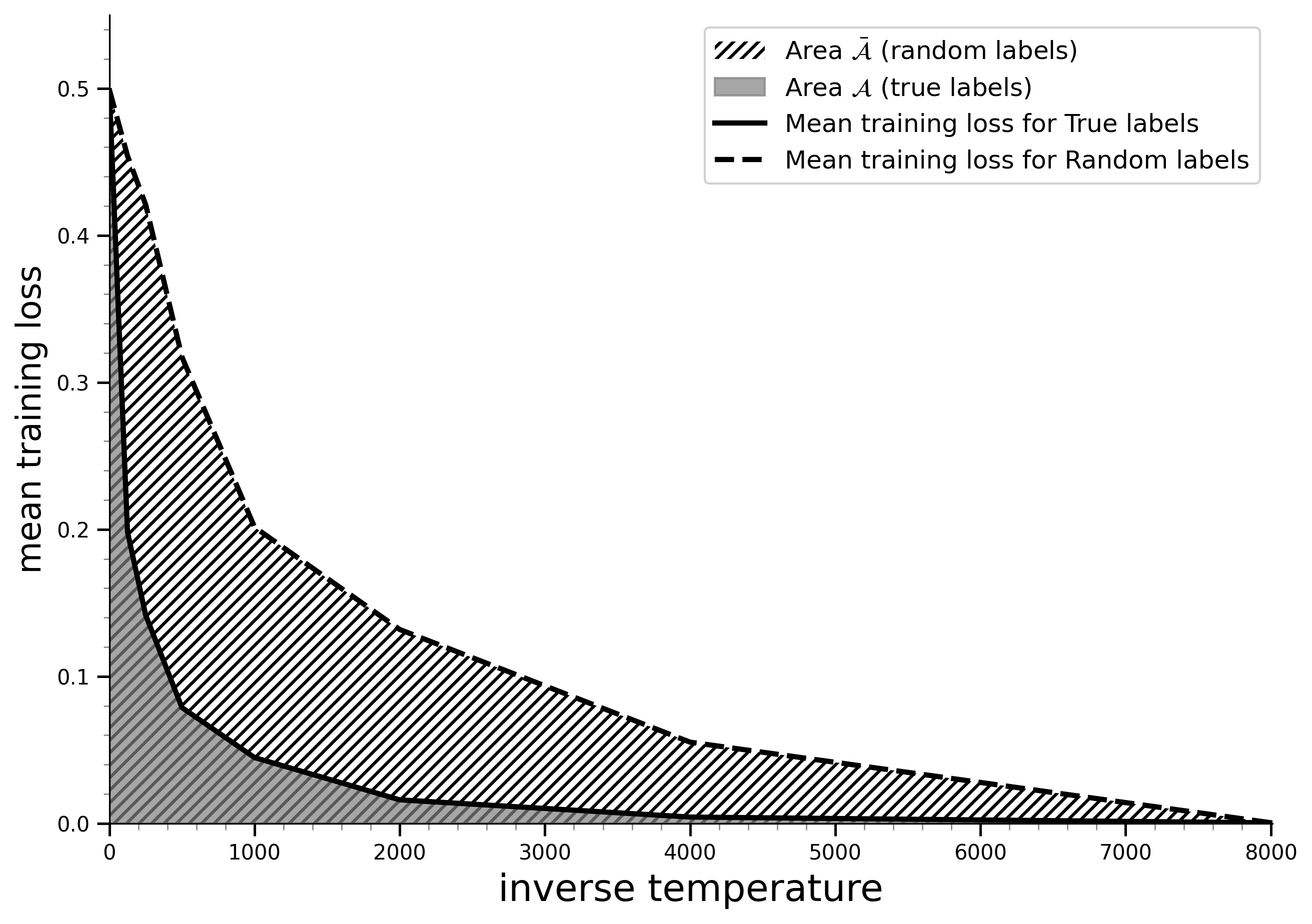}
\caption{Mean training error of a fully-connected neural network with 400,000 parameters trained with SGLD on 2,000 examples of MNIST, both with true and random labels.}
\label{fig:area_plot}
\end{figure}

This principle appears to hold also for the distributions generated by practical LMC algorithms trained on real-world data, as is witnessed by Figure \ref{fig:area_plot}. In Section \ref{Section calibration}, we use the area ratio $A/\bar{A}$ and the fact that the true classification error for random labels is $1/2$, to develop a practical method to compute bounds in realistic environments.

After a brief survey of related literature, we review the PAC-Bayesian theorem, introduce the Gibbs posterior, and present our bounds, followed by an application to Markov processes and a stability analysis. We conclude with a section describing our experiments. The appendix contains a glossary of notation with section references.

\subsection{Related literature}
Many papers address the generalization of the Gibbs algorithm and Langevin Monte Carlo, with special focus on SGLD, which is the most popular
algorithm. Most similar to this work is \cite{raginsky2017non}, which bounds
the distance of the distribution generated by SGLD to the Gibbs posterior and then the latter's generalization
error. Their bound applies only to the high-temperature regime $%
\beta <n$, but their convergence guarantees can be combined with our method to give bounds for SGLD in the entire temperature range.

Several works concentrate on the optimization path of SGLD. \citet{mou2018generalization} give both stability and PAC-Bayesian bounds. \citet{pensia2018generalization} apply the information-theoretic generalization
bounds of \cite{xu2017information}. These ideas are further developed by 
\citet{negrea2019information}, where random subsets of the training data
are used to define data-dependent priors.  \citet{clerico2025generalisation} track the evolution of the posterior density along the trajectory of gradient descent and use the single-draw version of PAC-Bayes (part (i) of Theorem \ref{theorem pac bayes} below) to obtain a bound in terms of norms of the Hessian summed over a fixed time horizon. Characteristically, the bound increases with training time and gives useful results only on a finite-time horizon. \citet{farghly2021time} give time-independent bounds for SGLD, which are further improved by \citet{futami2024time}. Most of the bounds in the above papers are in expectation.
The very recent paper of \citet{harel2025temperature} gives a very elegant
argument for Markov chain algorithms based on the second law of
thermodynamics. If the invariant distribution is the Gibbs posterior, the
bound on the generalization gap along the entire optimization path is of order $\sqrt{\beta /n}$ but
improvable to $\beta /n$.

Some papers give similar bounds for the Gibbs posterior, roughly of the form $\beta /n$ or $\sqrt{\beta /n}$ \citep{raginsky2017non,dziugaite2018data,kuzborskij2019distribution,rivasplata2020pac,maurer24hamiltonian, harel2025temperature}). These bounds hold equally for random labels and are therefore vacuous for overparametrized hypothesis spaces in the low
temperature regime $\beta >n$. To our knowledge, ours is the only bound for the Gibbs posterior which is still informative in this regime.

The bound by \citet{arora19a} is specialized to two-layer ReLU networks and derived from special properties of the gradient descent algorithm. Other bounds have been developed for specific algorithms designed to optimize them. The milestone paper by \citet{dziugaite2017computing} is the most prominent example; \cite{zhou2018non}, 
\cite{dziugaite2018data}
and \cite{perez2021tighter} are also in this category. Our bounds, by contrast, apply to the Gibbs posterior and LMC in their standard forms.

\section{The PAC-Bayesian bound}\label{Section preliminaries}

Throughout the following $\left( \mathcal{X},\Sigma \right) $ is a
measurable space of \textit{data} with probability measure $\mu $. The i.i.d. random vector $\mathbf{x}\sim \mu ^{n}$ is the training sample. 

$\left( \mathcal{H},\Omega \right) $ is a measurable space of \textit{hypotheses}, and $\ell :\mathcal{H\times X}\rightarrow \left[ 0,\infty \right)$ is a prescribed loss function. Members of $\mathcal{H}$ are denoted $h$ or $g$. For a function $f:\mathcal{H}\to \mathbb{R}$, the sup-norm is denoted $\vert\vert f\vert\vert_\infty$. We write $L\left( h\right) :=\mathbb{E}_{x\sim \mu }%
\left[ \ell \left( h,x\right) \right] $ and $\hat{L}\left( h,\mathbf{x}%
\right) :=\left( 1/n\right) \sum_{i}\ell \left( h,x_{i}\right) $
respectively for the true (expected) and empirical error of hypothesis $h\in 
\mathcal{H}$. 

The set of probability measures on $\left( \mathcal{H},\Omega \right) $ is
denoted $\mathcal{P}\left( \mathcal{H}\right) $. 
The relative entropy or Kullback-Leibler-divergence between two probability measures is the
function $\text{KL}:(\rho,\nu)\in \mathcal{P}( \mathcal{H})\times \mathcal{P}(\mathcal{H})\mapsto \mathbb{E}
_{h\sim \rho}\big[ \ln \frac{d\rho }{d\nu } (h)\big] $ if $\rho $ is absolutely
continuous w.r.t. $\nu $, otherwise the value is $\infty $. The Rényi-infinity  divergence \citep{renyi1961measures} is $R_\infty(\rho,\nu) =\sup_{h \in \mathcal{H}} \ln \frac{d\rho}{d\nu}(h)$ for $\nu,\rho\in\mathcal{P}(\mathcal{H})$.
There is an a-priori reference measure $\pi \in \mathcal{P}\left( \mathcal{H}%
\right) $, called the \textit{prior}. 
A stochastic algorithm is a
function $\nu :\mathcal{X}^{n}\rightarrow \mathcal{P}\left( \mathcal{H}%
\right) $, which assigns to a training sample $\mathbf{x}$ a probability
measure $\nu \left( \mathbf{x}\right) \in \mathcal{P}\left( \mathcal{H}%
\right) $.

The following general version of the PAC-Bayesian theorem appears in this form for the first time in \cite{rivasplata2020pac}. It gives a bound for single hypotheses drawn from the posterior (i) as well as  for posterior averages (ii). Appendix \ref{section proof of pac-bayes} gives an easy proof for the reader's benefit.

\begin{theorem}
\label{theorem pac bayes}Let $F:\mathcal{H}\times \mathcal{X}^{n}\rightarrow \mathbb{R}$ be some
measurable function, and let $\nu $ be a stochastic algorithm such that $\nu
\left( \mathbf{x}\right) $ is absolutely continuous w.r.t. $\pi $ for all $%
\mathbf{x}\in \mathcal{X}^{n}$. Then

(i) for $\delta >0$ with probability at least $1-\delta $ in $\mathbf{x}\sim
\mu ^{n}$ and $h\sim \nu \left( \mathbf{x}\right) $ 
\begin{equation*}
F\left( h,\mathbf{x}\right) \leq \ln  \frac{d \nu \left( \mathbf{x}%
\right)}{d\pi}  \left( h\right) +\ln \frac{\mathbb{E}_{\mathbf{x}}%
\mathbb{E}_{g\sim \pi }\left[ e^{F\left( g,\mathbf{x}\right) }\right]}{\delta}
\end{equation*}

(ii) for $\delta >0$ with probability at least $1-\delta $ in $\mathbf{x}%
\sim \mu ^{n}$ 
\begin{equation*}
\mathbb{E}_{h\sim \nu \left( \mathbf{x}\right) }\left[ F\left( h,\mathbf{x}%
\right) \right] \leq \text{KL}\left( \nu \left( \mathbf{x}\right) ,\pi \right) +\ln 
\frac{\mathbb{E}_{\mathbf{x}}\mathbb{E}_{g\sim \pi }\left[ e^{F\left( g,\mathbf{x}%
\right) }\right]}{\delta}
\end{equation*}
\end{theorem}
Here, $F$ is a placeholder for a random variable related to the generalization gap, which we want to bound. Suppose $\ell $ has
values in $\left[ 0,1\right] $. With a suitable choice of $F$, we can use part (i) above to derive, with 
probability at least $1-\delta $ as $\mathbf{x}\sim \mu ^{n}$ and $h\sim \nu
\left( \mathbf{x}\right) ,$ that%
\begin{eqnarray}
L\left( h\right)  &\!\!\!\!\!\leq\!\!\!\!\! &\hat{L}\left( h,\mathbf{x}\right) +\sqrt{\frac{2%
\hat{L}\left( h,\mathbf{x}\right) }{n}\left( \ln \frac{d\nu \left( \mathbf{x}%
\right) }{d\pi }+\ln \frac{2\sqrt{n}}{\delta }\right) }  \notag \\
&&~~~~~~~~+\frac{2}{n}\left( \ln \frac{d\nu \left( \mathbf{x}\right) }{d\pi }+\ln 
\frac{2\sqrt{n}}{\delta }\right) .  \label{Long PAC-Bayes}
\end{eqnarray}%
Note that for $\hat{L}\left( h,\mathbf{x}\right)=0$ and $\ln \frac{%
d\nu \left( \mathbf{x}\right) }{d\pi }(h) = A$ we obtain the bound in Equation (\ref{example bound}). From (ii) we get the analogous bound, if $L\left( h\right) $ is replaced by $\mathbb{E}%
_{h\sim \nu \left( \mathbf{x}\right) }\left[ L\left( h\right) \right] $, $%
\hat{L}\left( h,\mathbf{x}\right) $ by $\mathbb{E}_{h\sim \nu \left( \mathbf{%
x}\right) }\left[ \hat{L}\left( h,\mathbf{x}\right) \right] $ and $\ln \frac{%
d\nu \left( \mathbf{x}\right) }{d\pi }$ by KL$\left( \nu (\mathbf x ),\pi \right) $.
Details are in Appendix \ref{PAC Bayes}. For more information on PAC-Bayesian theory, we refer to the treatises of \cite{guedj2019primer, alquier2024user}.

Clearly, the crucial terms in these bounds are $\ln \frac{%
d\nu \left( \mathbf{x}\right) }{d\pi }(h)$  or KL$\left( \nu(\mathbf x ) ,\pi \right) $ respectively. In the sequel, we concentrate on bounding these quantities and often omit their mechanical resubstitution in Theorem \ref{theorem pac bayes}.

Several variants of the PAC-Bayesian theorem are minimized by the Gibbs algorithm, which we introduce in the next section (see \cite{mcallester1999pac,guedj2019primer,alquier2024user}). It is therefore a natural candidate to study the power and the limitations of PAC-Bayesian theory.

\section{Bounds for the Gibbs algorithm\label{Section Bounds for Gibbs}}
With a fixed prior, the Gibbs algorithm
at inverse temperature $\beta >0$ is the stochastic algorithm $G_{\beta }:%
\mathbf{x}\in \mathcal{X}^{n}\mapsto G_{\beta }\left( \mathbf{x}\right) \in 
\mathcal{P}\left( \mathcal{H}\right) $ defined by%
\begin{equation*}
G_{\beta }\left( \mathbf{x}\right) \left( A\right) =\frac{1}{Z_{\beta
}\left( \mathbf{x}\right) }\int_{A}e^{-\beta \hat{L}\left( h,\mathbf{x}%
\right) }d\pi \left( h\right) \text{ for }A\in \Omega \text{.}
\end{equation*}%
$G_{\beta }\left( \mathbf{x}\right) $ is called the \textit{Gibbs posterior, 
}the normalizing factor 
\begin{equation*}
Z_{\beta }\left( \mathbf{x}\right) :=\int_{\mathcal{H}}e^{-\beta \hat{L}%
\left( h,\mathbf{x}\right) }d\pi \left( h\right)
\end{equation*}%
is called the \textit{partition function}.

The Gibbs posterior provides a principled, albeit idealized, way to put larger weights on hypotheses with smaller empirical error. As $\beta \rightarrow \infty $, the Gibbs posterior concentrates
on the set of empirical risk minimizers \citep{athreya2010gibbs}, so the low-temperature regime (equivalent to large $\beta$) is particularly interesting. 

Evidently $\ln \left( dG_{\beta }\left( \mathbf{x}\right)/d\pi\right) \left( h\right) = -\beta \hat{L}\left( h,\mathbf{x}\right) -\ln Z_\beta\left( \mathbf{x}\right)$. %
 This function has an important integral representation, which is well known from statistical mechanics (see, e.g. \cite{huang2008statistical}). Despite its simplicity, it seems to have been overlooked in the literature on generalization. We found the work by \citet{ujvary2023estimating} that uses parts of the following lemma for a similar idea, only after submitting this work.
\bigskip 
\begin{lemma}
\label{Lemma Derivatives}For all $\beta \geq 0$, $\mathbf{x}\in \mathcal{X}^{n}$ and $h\in \mathcal{H}
$%
\begin{align}
    -\ln Z_{\beta }(\mathbf{x}) 
    &= \int_{0}^{\beta } \mathbb{E}_{g\sim G_{\gamma }(\mathbf{x}) }
       \big[ \hat{L}( g,\mathbf{x}) \big] \, d\gamma 
    \label{Integral rep} \\
    \begin{split}
        \ln \frac{dG_{\beta }(\mathbf{x}) }{d\pi } ( h) 
        &= \int_{0}^{\beta } \bigg( \mathbb{E}_{g\sim G_{\gamma }(\mathbf{x}) }
           \big[ \hat{L}( g,\mathbf{x}) \big] \\
        &\quad \quad \quad \quad \quad \quad \quad \quad \quad- \hat{L}( h,\mathbf{x}) \bigg) \, d\gamma 
    \end{split}
    \label{Integral rep of density} \\
    \begin{split}
        \text{KL}( G_{\beta }(\mathbf{x}) ,\pi ) 
        &= \int_{0}^{\beta } \bigg( \mathbb{E}_{g\sim G_{\gamma }(\mathbf{x}) }
           \big[ \hat{L}( g,\mathbf{x}) \big] \\
        &\quad \quad \quad \quad  - \mathbb{E}_{h\sim G_{\beta }(\mathbf{x}) }
           \big[ \hat{L}( h,\mathbf{x}) \big] \bigg) \, d\gamma .
    \end{split}
    \label{Integral rep of KL}
\end{align}

Also the function $\beta \mapsto \mathbb{E}_{g\sim G_{\beta
}(\mathbf{x})}\big[\hat{L}( g,\mathbf{x})\big]$
is non-increasing in $\beta$.
\end{lemma}
\begin{proof}
Let $A(\beta) =-\ln Z_{\beta}(\mathbf{x})$. One
verifies the identities
\begin{eqnarray*}
A(0) &=&0, \\
A^{\prime}(\beta) &=&\frac{1}{Z_{\beta,\pi}(\mathbf{x})}\int_{\mathcal{H}}\hat{L}(h,\mathbf{x})e^{-\beta\hat{L}(h,\mathbf{x}) }d\pi(h)\\&=& \mathbb{E}_{h\sim G_{\beta}(\mathbf{x})}\big[\hat{L}(h,\mathbf{x})
\big],\\
A^{\prime \prime}(\beta) &=&- \text{Var} _{h\sim G_{\beta
}(\mathbf{x})}\big[\hat{L}(h,\mathbf{x})
\big]  \leq 0,
\end{eqnarray*}
where Var denotes variance. Equation (\ref{Integral rep}) then follows from the first two of these identities and the fundamental theorem of calculus, and the last assertion in the Lemma follows from the last identity. Since the logarithm of the density of the Gibbs posterior is $-\beta \hat{L}%
\left( .,\mathbf{x}\right) -\ln Z_{\beta }\left( \mathbf{x}\right) $ we
get Eq. (\ref{Integral rep of density}). Then Eq. (\ref{Integral rep of KL}) follows from taking the expectation of Eq. (\ref{Integral rep of density}) in $G_{\beta}\left( \mathbf{x}\right)$. 
\end{proof}
    \begin{figure}[h!]
    \centering
    \includegraphics{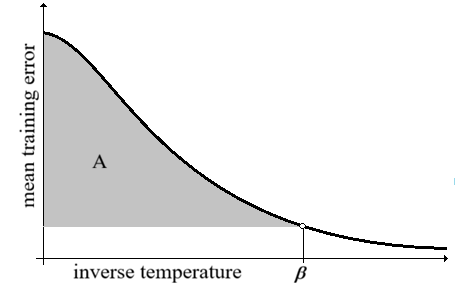}
    \caption{The mean training error of the Gibbs posterior is plotted against the inverse temperature.  The relative entropy $KL\left(G_\beta\left( \mathbf{x}\right),\pi\right)$ is equal to the shaded area $A$.}
    \label{temperature plot}
    \end{figure}

Figure \ref{temperature plot} provides a simple geometrical interpretation for Eq. (\ref{Integral rep of KL}). For Eq. (\ref{Integral rep of density}), if the shaded area were equal to $\ln (dG_{\beta }\left( \mathbf{x}\right) /d\pi )\left(
h\right) $, its lower boundary would fluctuate with a variance equal to the negative slope at $\beta$, depending on the draw of $h$  (compare the proof of Lemma \ref{Lemma Derivatives}).

The identities (\ref{Integral rep of density}) and (\ref{Integral rep of KL}) 
of Lemma \ref{Lemma Derivatives} can now be substituted into the PAC-Bayesian Theorem (Theorem \ref{theorem pac bayes}) to give our bounds for the Gibbs algorithm. That they are completely determined by the mean training errors at higher temperatures (smaller $\beta$) is evident from Figure \ref{temperature plot}. If these errors decrease quickly, we expect better generalization; if they decrease slowly, we expect worse generalization. 

\section{Bounds for Markov processes and Langevin Monte Carlo}\label{Langevin Theory}

The Gibbs posterior is an idealization, from which it is impossible to sample directly. Nevertheless, several works describe Markov processes, here summarized as Langevin Monte Carlo (LMC), capable of approximating a probability measure $\nu$ on $\mathbb{R}^{d}$ of the form $\nu \propto \exp
\left( -V\right) $, or some nearby limiting distribution.

The classical prototype is Continuous Langevin Dynamics (CLD), a Markov process in continuous time describing thermalization in statistical physics and originating in the study of Brownian motion \citep{langevin1908theorie}. To turn the continuous process into an iterative algorithm, several discretized versions have been proposed. There is the Metropolis Adjusted Langevin Algorithm (MALA) \citep{roberts1996exponential}, which is the Euler-discretization of CLD with an additional Metropolis-style accept-reject step to ensure that the invariant distribution is indeed the desired Gibbs distribution. Hamiltonian Monte Carlo Markov Chain (HMCMC) \citep{duane1987hybrid,betancourt2017conceptual} is a refinement
of MALA. The Unadjusted Langevin Algorithm (ULA) \citep{parisi1981correlation} is the discretization of CLD without the accept-reject
step and converges to a slightly different distribution. Stochastic Gradient Langevin Dynamics (SGLD) \citep{gelfand1991recursive,welling2011bayesian} is an accelerated version
of ULA, replacing the true gradient of $V$ by an unbiased estimate realized
with minibatches. All these processes are of theoretical interest as models
for Stochastic Gradient Descent (SGD), but at least SGLD is also used as a
learning algorithm in practice. 
The method of parallel tempering \cite{syed2022non} seems promising for the computation of our bounds. In Appendix \ref{Section ULA}, we give more recent references containing convergence guarantees and discuss CLD and ULA in some detail. 

In the next section, we show how the PAC-Bayesian bound and the integral representation can be applied to time-homogeneous Markov processes. \newline
Throughout this section, we assume $\mathcal{H}=\mathbb{R}^{d}$ and an isotropic Gaussian prior $\pi$ of width $\sigma$. We condition on the
training data $\mathbf{x}$, reference to which we often omit.

\subsection{Bounds for Markov processes}\label{section Markov}

Here, we both apply and sharpen the method in the recent work of \citet{harel2025temperature} on Markov processes. We take $\left\{ h_{t}\left( \mathbf{x}\right) \right\}_{t\in I}$ to be a time-homogeneous Markov process in real or discrete time, 
$I=\left[ 0,\infty \right) $ or $I=\mathbb{N}_{0}$, with values in $\mathcal{%
H}$. This is our model of a training process, such as CLD or the practically
implementable MALA, ULA, or SGLD. The distribution of $h_{t}\left( \mathbf{x}%
\right) $ will be denoted $\nu _{t}\left( \mathbf{x}\right) $. We assume that these and possible invariant distributions have positive densities with respect to $\pi$. This assumption captures the relevant practical cases under consideration. The following lemma is sometimes referred to as the \textit{second law of thermodynamics}. A proof is given in Section \ref{section second law}.

\begin{lemma}
(Second law of thermodynamics) \label{Lemma 2nd law}If $\nu $ is a stationary distribution of $\left\{ h_{t}\right\} _{t\in I}$ and $s< t$
then $KL\left( \nu _{t},\nu \right) \leq KL\left( \nu _{s},\nu\right) $ and $R_{\infty }\left( \nu _{t},\nu\right) \leq R_{\infty
}\left( \nu _{s},\nu\right)$, with equality in either case if and only if 
$\nu_s=\nu$.
\end{lemma}

Now for any $t\in I$ and any $\beta >0$%
\begin{eqnarray}\label{rbf}
\!\!\!\!\!\!\!\!KL\left( \nu _{t},\pi \right)\!\!\!&=&\!\!\!\mathbb{E}_{h\sim \nu _{t}}\left[ \ln 
\frac{d\nu _{t}}{dG_{\beta }}\right] +\mathbb{E}_{h\sim \nu _{t}}\left[ \ln 
\frac{dG_{\beta }}{d\pi }\right]   \notag \\
&=&\!\!\!KL\left( \nu _{t},G_{\beta }\right)\!-\!\beta\mathbb{E}_{h\sim \nu _{t}}%
\left[\hat{L}\left( h\right) \right]\!-\!\ln Z_{\beta }.\label{klboundgibbs}  
\end{eqnarray}
There is an analogous identity for the single draw, omitting the expectations in $\nu _{t}$ and replacing $KL$ by $R_{\infty }$. Now assume that $%
G_{\beta }$ is a stationary distribution of the process, and
that $\nu _{0}=\pi $, so the process is started from the prior. Then \citet{harel2025temperature} use Lemma \ref{Lemma 2nd law} above, to get $%
KL\left( \nu _{t},G_{\beta }\right) \leq KL\left( \pi ,G_{\beta }\right)
=\beta \ \mathbb{E}_{h\sim \pi }\left[ \hat{L}\left( h\right) \right] +\ln
Z_{\beta }$, and they substitute this bound in the above identity. Since the
partition functions cancel each other, and since $\beta \mathbb{E}_{h\sim \nu _{t}}%
\left[ \hat{L}\left( h\right) \right] \geq 0$, they end up with $KL\left(
\nu _{t},\pi \right) \leq \beta \ \mathbb{E}_{h\sim \pi }\left[ \hat{L}%
\left( h\right) \right] $, to be substituted in the PAC-Bayesian bound.
There is a similar bound for the single draw in terms of $R_{\infty }$. The resulting generalization bounds are valid
along the \textit{entire training trajectory}. They are, however, largely
independent of distribution and data, and vacuous for $\beta >n$, since
typically $\mathbb{E}_{h\sim \pi }\left[ \hat{L}\left( h\right) \right] $and 
$\left\Vert \hat{L}\right\Vert _{\infty }$are on the order of unity or
larger (here, we omitted several substantial refinements in \cite{harel2025temperature}).

The use of the second law is elegant, but in this form, it forgoes the potential benefit of a process converging to $G_{\beta }$, such as CLD, MALA, or HMCMC, or to some nearby distribution such as ULA or SGLD. The following proposition takes advantage of convergence as well as of close approximation.

\begin{proposition}
\label{Proposition Markov}Assume $\nu_0=\pi$.\\
(i) Let $\lambda _{t}=R_{\infty }\left( \nu
_{t},G_{\beta }\right) /R_{\infty }\left( \pi ,G_{\beta }\right) $. Then 
\begin{equation*}
R_{\infty }\left( \nu _{t},\pi \right) \leq \lambda _{t}\beta ~\left\Vert 
\hat{L}\right\Vert _{\infty }+\left( 1-\lambda _{t}\right) \int_{0}^{\beta }%
\mathbb{E}_{g\sim G_{\gamma }}\left[ \hat{L}\left( g\right) \right] d\gamma .
\end{equation*}

(ii) If instead $\lambda _{t}=KL\left( \nu _{t},G_{\beta }\right) /KL\left(
\pi ,G_{\beta }\right) $, then 
\begin{equation*}
KL( \nu _{t},\pi ) \leq \lambda _{t}\beta ~\mathbb{E}_{g\sim \pi }%
[ \hat{L}( g) ] \\+( 1-\lambda _{t})
\int_{0}^{\beta }\mathbb{E}_{g\sim G_{\gamma }}[ \hat{L}( g) %
] d\gamma .
\end{equation*}
\end{proposition}

\begin{proof}
We only prove (i), the proof of (ii) being analogous.%
\begin{eqnarray*}
\!\!R_{\infty }\left( \nu _{t},\pi \right)\!\!&=&\!\!\sup_{h\in \mathcal{H}}\left( \ln 
\frac{d\nu _{t}}{dG_{\beta }}\left( h\right) +\ln \frac{dG_{\beta }}{d\pi }%
\left( h\right) \right)  \\
&\!\!\leq &\lambda _{t}R_{\infty }\left( \pi ,G_{\beta }\right) +\sup_{h\in 
\mathcal{H}}\left( -\beta \hat{L}\left( h\right) -\ln Z_{\beta }\right)  \\
&\!\!\leq &\lambda _{t}\sup_{h\in \mathcal{H}}\left( \beta \hat{L}\left(
h\right) +\ln Z_{\beta }\right) -\ln Z_{\beta } \\
&=&\!\!\lambda _{t}\beta ~\left\Vert \hat{L}\right\Vert _{\infty}\!\!+(
1-\lambda _{t}) \int_{0}^{\beta }\mathbb{E}_{h\sim G_{\gamma }}[ 
\hat{L}( h) ] d\gamma ,
\end{eqnarray*}%
where we used Lemma \ref{Lemma Derivatives} in the last step.
\end{proof}

{\bf Remarks:} 1. Without any additional assumptions, these bounds hold for all $\beta $ and along the entire
training trajectory. By the last assertion of Lemma \ref{Lemma Derivatives} the integral is generically smaller than $\beta ~\left\Vert \hat{L}\right\Vert _{\infty }$ or $%
\beta ~\mathbb{E}_{h\sim \pi }\left[ \hat{L}\left( h\right) \right] $
respectively. Thus, whenever $\lambda_t<1$, the bounds are strictly smaller than those of \cite{harel2025temperature}. 

2. If $G_\beta$ is invariant, then by the second law (Lemma \ref{Lemma 2nd law}) $\lambda _{t}$ is strictly decreasing in $t$ and $\lambda _{t}< 1$ for $t>0$ in all non-trivial cases. The bounds then move in convex interpolation towards the integral.

3. If the process converges to $G_{\beta }$ in relative entropy, meaning
that $KL\left( \nu _{t},G_{\beta }\right) \rightarrow 0$, then $\lambda
_{t}\rightarrow 0$ as $t\rightarrow \infty $. In this case, the bounds asymptotically approach those of the Gibbs
posterior in Lemma \ref{Lemma Derivatives}, with the modification that the negative terms are
omitted. For large $t$, they exhibit the same sensitivity of generalization
to mean training errors of the Gibbs posterior at higher temperatures. This
is the case for CLD and all algorithms containing a Metropolis-style
accept-reject mechanism, such as MALA or HMCMC.

To illustrate this, we adapt the stochastic differential
equation for CLD to temperature and prior (see Section \ref{Section ULA}). It becomes%
\begin{equation*}
dh_{\beta ,t}=-\left( \nabla \hat{L}\left( h_{\beta ,t}\right) +\frac{%
h_{\beta ,t}}{\beta \sigma ^{2}}\right) dt+\sqrt{\frac{2}{\beta}}dB_{t},
\end{equation*}%
where $B_t$ is standard centered Brownian motion in $\mathbb{R}^{d}$. Let $\nu _{\beta ,t}$ be the distribution of $h_{\beta ,t}$ at time $t$. In
Lemma \ref{Lemma cld converge} we show that,
if $G_{\beta }$ satisfies a logarithmic Sobolev inequality with constant $\alpha $ (see Section \ref{Section ULA}), then $KL\left( \nu
_{\beta ,t},G_{\beta }\right) \leq e^{-2\alpha t/\beta }KL\left( \nu _{\beta
,0},G_{\beta }\right) $. So if CLD starts from the prior, we can use
Proposition \ref{Proposition Markov} (ii) with $\lambda _{t}=e^{-2\alpha
t/\beta }$ and obtain 
\begin{eqnarray*}
KL\left( \nu _{\beta ,t},\pi \right) &\leq&\!\!e^{-2\alpha t/\beta }~\beta ~%
\mathbb{E}_{h\sim \pi }\left[ \hat{L}\left( h\right) \right]\\&&\!+(
1-e^{-2\alpha t/\beta }) \int_{0}^{\beta }\mathbb{E}_{G_{\gamma }}%
\left[ \hat{L}\left( g\right) \right] d\gamma .
\end{eqnarray*}%

\subsection{Stability of the Bounds\label{Section stability}}

If $\nu:\mathcal{X}^{n}\to \mathcal{P}(\mathcal{H})$ is the stochastic algorithm for which we want the bound, then Eq. (\ref{klboundgibbs}) and the results of the previous section show that with sufficient approximation of $\nu \left( \mathbf{x}\right) $ by $G_{\beta }\left(\mathbf{x}\right) $ in relative entropy, we can largely recover the bounds for the Gibbs posterior. But these bounds, though data-dependent, are still inaccessible because of the continuous nature of the temperature integral and the impossibility to sample directly from the Gibbs posterior. We now study the following question of stability: given some algorithm to approximate $G_{\beta }$ for any $\beta$ to arbitrary precision in relative entropy, can we also approximate our bounds to arbitrary precision?

To this end, we discretize the temperature scale of the integral and approximate each $G_{\beta_k}$ by some distribution $\nu_k$. The error incurred on the corresponding expectations of $ \hat{L}$ can then be controlled under either boundedness or Lipschitz conditions on the loss $\ell$.
\begin{definition}
\label{Definition Gamma}
For $\mathbf{x\in }\mathcal{X}^{n}$, an increasing sequence $\beta_0^{K}=\left( 0=\beta _{0} < \beta _{1}<\cdots<\beta _{K}=\beta\right) 
$ of positive numbers, and a corresponding vector of data-dependent distributions $\nu_0^{K-1}(\mathbf{x})=\left( \nu
_{0}\left( \mathbf{x}\right) ,\nu _{1}\left( \mathbf{x}\right)
,\cdots,\nu _{K-1}\left( \mathbf{x}\right) \right) \in \mathcal{P}%
\left( \mathcal{H}\right) ^{K}$ we denote \begin{equation*}
\Gamma(\nu_0^{K-1},\mathbf{x},\beta_0^{K})=\sum_{k=1}^{K}(\beta_{k}-\beta_{k-1})\mathbb{E}_{g\sim \nu_{k-1}(\mathbf{x})}\big[\hat{L}(g,\mathbf{x})\big].
\end{equation*}%
\end{definition}
For an illustration, see Figure \ref{Gamma plot} in Section \ref{Proofs stability}.
The next lemma bounds the estimation error relative to the temperature integral in terms of the relative entropies.

\begin{lemma}

\label{lemma delta}
With $\beta_0^{K}$ and $\nu_0^{K-1}$ as in Definition \ref{Definition Gamma} denote%
\begin{equation*}
\Delta :=\int_{0}^{\beta }\mathbb{E}_{h\sim
G_{\gamma }\left( \mathbf{x}\right) }\left[ \hat{L}\left( h,\mathbf{x}%
\right) \right] d\gamma - \Gamma(\nu_0^{K-1},\mathbf{x},\beta_0^{K})
\end{equation*}%
(i) If $\mathbb{E}_{h\sim G_{\beta _{k}}\left( \mathbf{x}\right) }\left[ 
\hat{L}\left( h,\mathbf{x}\right) \right] \leq \mathbb{E}_{h\sim \nu
_{k}\left( \mathbf{x}\right) }\left[ \hat{L}\left( h,\mathbf{x}\right) %
\right] $ for all $k$ and $\mathbf{x}$, then $\Delta\le 0$.%

(ii) If $\ell \left( h,\mathbf{x}\right) $ is bounded in $h$ for all $%
\mathbf{x}$, $\left\Vert \ell \left( h,\mathbf{x}\right) \right\Vert \leq m$
then 
\begin{equation*}
\Delta \leq m\sum_{k=1}^{K}(\beta_{k}-\beta_{k-1})\sqrt{KL\left( \nu _{k-1}\left( \mathbf{x}%
\right) ,G_{\beta _{k-1}}\left( \mathbf{x}\right) \right) /2}.
\end{equation*}

(iii) If instead $\ell \left( h,\mathbf{x}\right) $ is $m$-Lipschitz in $%
h$ for all $\mathbf{x}$, $\ell \left( h,\mathbf{x}\right) -\ell \left( g,%
\mathbf{x}\right) \leq m\left\Vert h-g\right\Vert $ and $G_{\beta
_{k}}\left( \mathbf{x}\right) $ satisfies an LSI with constant $\alpha $ for
all $k$ and $\mathbf{x}$, then 
\begin{equation*}
\Delta \leq \frac{2m}{\alpha}\sum_{k=1}^{K}(\beta_{k}-\beta_{k-1}){KL\left( \nu _{k-1}\left( \mathbf{x}%
\right) ,G_{\beta _{k-1}}\left( \mathbf{x}\right) \right) }.
\end{equation*}
\end{lemma}

By the last conclusion of Lemma 3.1, part (i) is immediate. Proofs of parts (ii) and (iii) are given in Appendix \ref{Proofs stability}. The assumption in case (i) is not implausible if we start LMC from a non-informative prior, and in our experiments, we always observed decreasing losses along the LMC path. The analysis sketched in Section \ref{sec:calibration justification} justifies this assumption for ULA in a simplified linear scenario.
 
 The next theorem gives our final bound in terms of arbitrary distributions and their relative entropies to Gibbs distributions.

\begin{theorem}
\label{Theorem BV approximations}Let $F:\mathcal{H}\times \mathcal{X}^{n}\rightarrow \mathbb{R}$ be some
measurable function and $\beta_0^{K}$ and $\nu_0^{K-1}$ as in Definition \ref{Definition Gamma}. Let $\nu(\mathbf x)$ be any data-dependent distribution on $\mathcal{H}$. Let $\Delta$ be bounded as in Lemma \ref{lemma delta}, depending on which of the conditions is fulfilled by $\ell$. Then

(i) with probability at least $1-\delta $ as $\mathbf{x}\sim \mu ^{n}$ and 
$h\sim \nu ( \mathbf{x}) $
\begin{eqnarray*}
F\left( h,\mathbf{x}\right) &\leq& -\beta \hat{L}\left( h,\mathbf{x}\right)
+\Gamma(\nu_0^{K-1},\mathbf{x},\beta_0^{K})\\
&&+\ln \mathbb{%
E}_{\mathbf{x}}\mathbb{E}_{h\sim \pi }\left[ e^{F\left( h,\mathbf{x}\right) }%
\right]  + \ln \frac{1}{\delta} \\
&& +R_{\infty }\left( \nu\left( \mathbf{x}\right) ,G_{\beta }\left( 
\mathbf{x}\right) \right) +\Delta .
\end{eqnarray*}
If $F$ and $\ell$ are bounded, then  $R_{\infty }\left( \nu \left( \mathbf{x}\right) ,G_{\beta }\left( 
\mathbf{x}\right) \right)$ can be replaced by 
$$\max\left\{0,\beta 
  \left\Vert \ell\right\Vert _{\infty}+\left\Vert F\right\Vert _{\infty}+
\ln \sqrt{2
  KL\left(
   \nu\left( \mathbf{x}\right) ,G_{\beta }
    \left(\mathbf{x}\right)
  \right)} \right\}.$$

(ii) with probability at least $1-\delta $ as $\mathbf{x}\sim \mu ^{n}$ 
\begin{eqnarray*}
\mathbb{E}_{\nu\left( \mathbf{x}\right)}\left[ F(h,\mathbf{x}) \right]\!\!\!\!\!&\leq&\!\!\!-\beta \ \mathbb{E}_{\nu\left( \mathbf{x}\right) }\big[ \hat{L}\left( h,\mathbf{x}\right) \big]+\Gamma(\nu_0^{K-1},\mathbf{x},\beta_0^{K})\\&&\!+\ln \mathbb{%
E}_{\mathbf{x}}\mathbb{E}_{h\sim \pi }\left[ e^{F\left( h,\mathbf{x}\right) }%
\right]+\ln \frac{1}{\delta}\\
&&\!+KL\left( \nu\left( \mathbf{x}\right) ,G_{\beta }\left( \mathbf{x%
}\right) \right) +\Delta .
\end{eqnarray*} 

\end{theorem}

The left-hand side in both inequalities is the random variable that we wish to bound, depending on the distribution $\nu$. The right-hand side of the first two lines is the bound, as computed from the distributions $\nu(\mathbf{x})$ and $\nu_0^{K-1}(\mathbf{x})$ and includes the dependence on the exponential moment of $F$ and the confidence parameter $\delta$.
The third line gives the error incurred by the fact that none of the distributions is really the right Gibbs distribution. The first term there gives the error for the target distribution $\nu(\mathbf{x})$, and is different in the single-draw and classical PAC-Bayesian cases. The term $\Delta$ results from approximating the temperature integral by the expectations in a finite number of distributions, as described in Lemma \ref{lemma delta}. \newline
The amendment to (i) is necessary, since we know of no process with useful bounds on $R_{\infty }$. The replacement indeed converges to zero with relative entropy, but, since $\beta \left\Vert \ell\right\Vert _{\infty}+\left\Vert F\right\Vert _{\infty}$ is typically of order $n$, it requires the relative entropy to be exponentially small in $n$.
Nevertheless, if the bounds in Corollary \ref{Corollary approximations} are substituted in part (ii) or in the amendment to part (i), they guarantee, with sufficient computational budget and appropriate choices of $t$ and $\eta$, the convergence of LMC to our bounds for the Gibbs posterior. 

A detailed proof of Theorem \ref{Theorem BV approximations} is given in Appendix \ref{Proofs
		stability}. Parts (ii) and (i) without amendment follow more or less directly from the PAC-Bayesian theorem, Lemma \ref{lemma delta}, and equation (\ref{rbf}) and a reasoning using $R_\infty$ analogous to (\ref{rbf}). The amendment to (i) requires a special adapted proof of the PAC-Bayesian theorem.
        
\section{Experiments}\label{Section experiments}
The purpose of our experiments is twofold. For one, we want to verify that the theoretical dependence of the generalization performance of the Gibbs posterior at low temperatures on its training errors at high temperatures carries over to practical temperature-regularized algorithms like SGLD in real-world settings, with overparametrized neural networks. Indeed, in all our experiments, the temperature plots of the mean training errors, computed as described below, verify the qualitative prediction that both the failure of generalization for random labels and the success for true labels are related to the areas under the curves at higher temperatures (see, for example, Figure \ref{fig:area_plot}).

Second, we would like to use the training data, and only the training data,
to make realistic quantitative predictions of test errors in such settings.
This is more difficult, since the guarantees of Corollary \ref{Corollary approximations} in combination with Theorem \ref{Theorem BV approximations} are
inadequate in high-dimensional situations. We achieve this goal with a principled calibration scheme described below.

\subsection{Experimental Environment and Algorithms}

The real-world data are either the MNIST dataset, subdivided into the two classes of characters 0--4 and 5--9, or the CIFAR-10 dataset to distinguish between animals and vehicles. For impossible data, we randomize the labels of the training data. Our experiments are computationally heavy, so we generally use small sample sizes, from 2000 to 8000 examples. For comparison to the baselines, we used the whole MNIST dataset samples for a multi-class classification task to compare our results with the empirical baseline. The hypothesis space is the set of weight vectors for a neural network with ReLU activation functions constrained by a Gaussian prior distribution with $\sigma =5$. Neural network architectures are described in Section \ref{sec:net_arch} of the appendix. To approximately sample the weight vectors in the vicinity of the Gibbs posterior, we use ULA as in (\ref{Our LMC}) or SGLD \citep{welling2011bayesian} with constant step size $\eta $.

\subsection{The Loss Function $\ell $}
Most experiments were done with bounded loss functions $\ell $, either bounded binary
cross-entropy as described in Appendix D of \cite{dziugaite2018data} or
the Savage loss \citep{masnadi2008design}. 
As an unbounded loss function, we tried binary cross-entropy (BCE) (Section \ref{sec:unbounded_loss}), but with a smaller value of $\sigma$, so as to avoid excessive training errors for small values of $\beta$. We compute bounds 
for the 0-1 loss, using the method described in Section \ref{section PB for 01-loss}.
\subsection{Approximating the Ergodic Mean} \label{sec:ergodic_mean}
As we know of no sufficient criterion for convergence, we terminate
iterations at time $T$, when a very slow running mean $\mathbb{M}_{\rm stop}$ of
the loss trajectory $\big( \hat{L}( h_{\beta _{k},t},\mathbf{x})
\big) _{t=0}^{T}$ stops decreasing. A second running mean 
$\mathbb{M}_{\rm erg}$ is used as an approximation of the ergodic mean and thus of
expectations in the invariant distribution. We thus replace all expectations 
$\mathbb{E}_{h\sim G_{\beta _{k}}}\big[ \hat{L}( h,\mathbf{x})\big] $ occurring in the bounds by $\mathbb{M}_{\rm erg}\big[ \big( \hat{L}( h_{\beta _{k},t},\mathbf{x}) \big) _{t=0}^{T}\big] $. Both
running means $\mathbb{M}_{\rm stop}$ and $\mathbb{M}_{\rm erg}$ are implemented as
first-order, recursive lowpass filters described in Section \ref{sec:mov_avg} of the appendix.

\subsection{Computation of the Bounds and Calibration\label{Section calibration}} 
For the 01-error (cf. Section \ref{section PB for 01-loss}) we compute our bounds from the PAC-Bayesian theorem in the form 
\begin{align}
& \mathbb{E}_{h\sim \nu _{\beta }\left( \mathbf{x}\right) }\left[ L_{\text 01}\left(
h\right) \right]   \label{experiments bound} \\
& \leq \kappa ^{-1}\left( \mathbb{E}_{h\sim \nu _{\beta }\left( \mathbf{x}%
\right) }\left[ \hat{L}_{\text 01}\left( h,\mathbf{x}\right) \right] ,\frac{\mathcal{Q}+\ln
\left( \frac{2\sqrt{n}}{\delta }\right) }{n}\right) ,  \notag
\end{align}%
where $\nu _{\beta }\left( \mathbf{x}%
\right)$ is an LMC approximation to $G_\beta\left( \mathbf{x}%
\right)$ and $\mathcal{Q}$ is a proxy for $KL\left( \nu _{\beta }\left( \mathbf{x}\right)
,\pi \right) $, the computation of which is described below. Using $\kappa ^{-1}$ is
somewhat more accurate than the analog of (\ref{Long PAC-Bayes}). The
function $\kappa $ is the relative entropy of two Bernoulli variables, the
derivation of the above bound and the inverse function $\kappa ^{-1}$ are
explained in section \ref{section PB for 01-loss}.

Our approximation to $KL\left( \nu _{\beta }\left( \mathbf{x}\right)
,\pi \right) $ is $$A=-\beta\  
\mathbb{E}_{h\sim \nu _{\beta }}\left[ \hat{L}\left( h,\mathbf{x}\right) %
\right] +\Gamma(\nu_0^{K-1},\mathbf{x},\beta_0^{K})\label{define A}.$$
For a rigorous bound following Theorem \ref{Theorem BV approximations} we would have to set $\mathcal{Q}= A$ plus all the terms bounding the approximation errors as in Corollary \ref{Corollary approximations}. Unfortunately, the quantities $R$ and $\alpha $ are impossible to
estimate in practice. But even if we assume these to be in the order of
unity, the bounds are too coarse to distinguish between different
temperatures with realistic step sizes.
 A simple calculation (see Lemma \ref{Lemma KLG2G} in Appendix \ref{Section
miscellaneous lemmata}) shows that $$
\text{KL}\left( G_{\beta },G_{2\beta }\right) \leq
\beta \left( \mathbb{E}_{G_{\beta }}\left[ \hat{L}\right] -\mathbb{E}%
_{G_{2\beta }}\left[ \hat{L}\right] \right) \leq \beta \mathbb{E}_{G_{\beta
}}\left[ \hat{L}\right]. 
$$
By Corollary \ref{Corollary approximations} we
should therefore have at least $8\eta dR^{2}/\alpha <\mathbb{E}_{G_{\beta
}\left( \mathbf{x}\right) }\left[ \hat{L}\right] $ to distinguish between
the expectations in the Gibbs posterior for $\beta $ and $2\beta $. The
smallest neural network we use has $d=392,500$. If $\ell $ has values in $%
\left[ 0,1\right] $ then $\mathbb{E}_{G_{\beta }\left( \mathbf{x}\right) }%
\left[ \hat{L}\right] \leq 1$, so even if $R$ and $\alpha $ are set to $1$,
we would need step sizes in the order of $10^{-7}$. Safe values of $\eta $,
as suggested by the theoretical results in Section \ref{Langevin Theory},
are therefore impossible in practice, and the bound has to be adapted to a
realistic choice of $\eta $.

But the data for the true labels $\mathbf{x}$ and the data $\mathbf{\bar{x}}$
for the random labels are strongly related. Dimension and input marginals
are equal. This suggests that the inaccuracy of the LMC approximations
affects both in a similar way, and we make the following calibration\
assumption:
\begin{equation}
\label{eq:kl_ration}
\frac{KL\left( \nu _{\beta }\left( \mathbf{x}\right) ,\pi \right) }{KL\left(
\nu _{\beta }\left( \mathbf{\bar{x}}\right) ,\pi \right) }=\frac{A}{\bar{A}},
\end{equation}%
where $\bar{A}$ is given by the expression analogous to $A$ for the random labels, compare Figure \ref{fig:area_plot}. 

The expected error of the random labels in binary classification is $1/2.$ Now
let $r$ be the smallest positive factor such that $r\bar{A}$, when
substituted for $\mathcal{Q}$ in (\ref{experiments bound}) yields a value greater or
equal $1/2$. If the PAC-Bayesian theorem is tight, then this means that $r%
\bar{A}\geq KL\left( \nu _{\beta }\left( \mathbf{\bar{x}}\right) ,\pi
\right) $. Our calibration assumption above then also implies that 
\begin{equation*}
rA\geq KL\left( \nu _{\beta }\left( \mathbf{x}\right) ,\pi \right) ,
\end{equation*}%
and substitution of $rA$ for $\mathcal{Q}$ in (\ref{experiments bound}) should result in a
correct upper bound for the true labels. For a precise definition of $r$ see
Section \ref{calibration factor}.
\newline
In Section \ref{sec:calibration justification}, we sketch an argument that Eq. \ref{eq:kl_ration} holds as $\beta$ goes to infinity for Bayesian linear regression with quadratic loss under the assumption of a Gaussian prior. The main idea is that we can decompose the KL term and the area into label-dependent and label-independent parts. The label-dependent term remains constant, while the label-independent term grows logarithmically. Therefore, the ratio holds in the low-temperature limit.

In addition to having a justification for this simplified setting, we empirically found that the choice of $r$ leads to correct and surprisingly tight upper bounds on the test error of correctly labeled data in all cases we tried. We emphasize that our calibration procedure
depends only on the training data.


\subsection{Results} \label{sec:results}

\begin{figure}[t!]
  \centering
    \makebox[0.495\linewidth]{Random Labels} \hfill \makebox[0.495\linewidth]{True labels}
  \vspace{1pt}
  \begin{subfigure}[b]{0.495\linewidth} 
    \includegraphics[width=\linewidth]{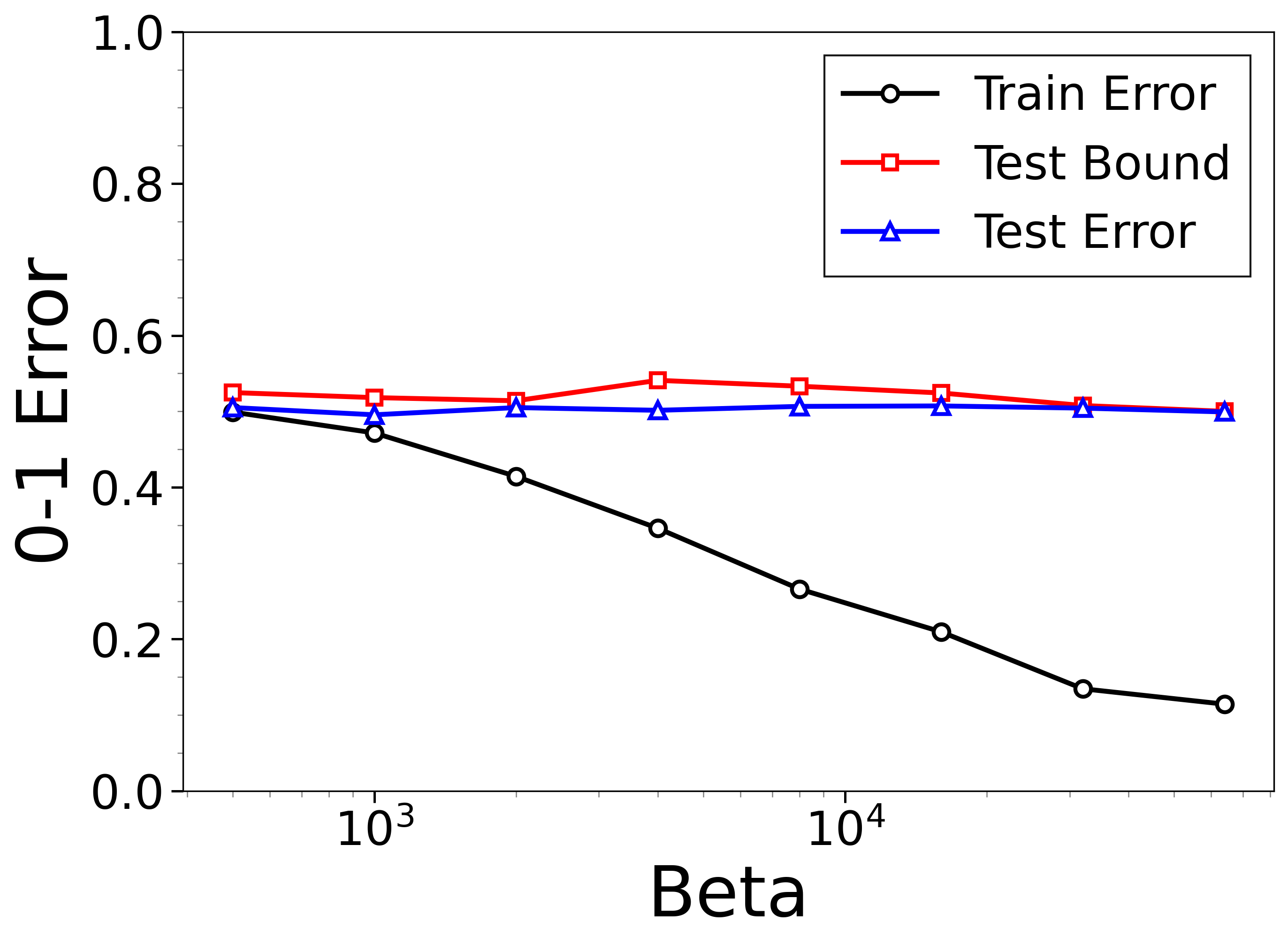}
  \end{subfigure}  
  \begin{subfigure}[b]{0.495\linewidth} 
	\includegraphics[width=\linewidth]{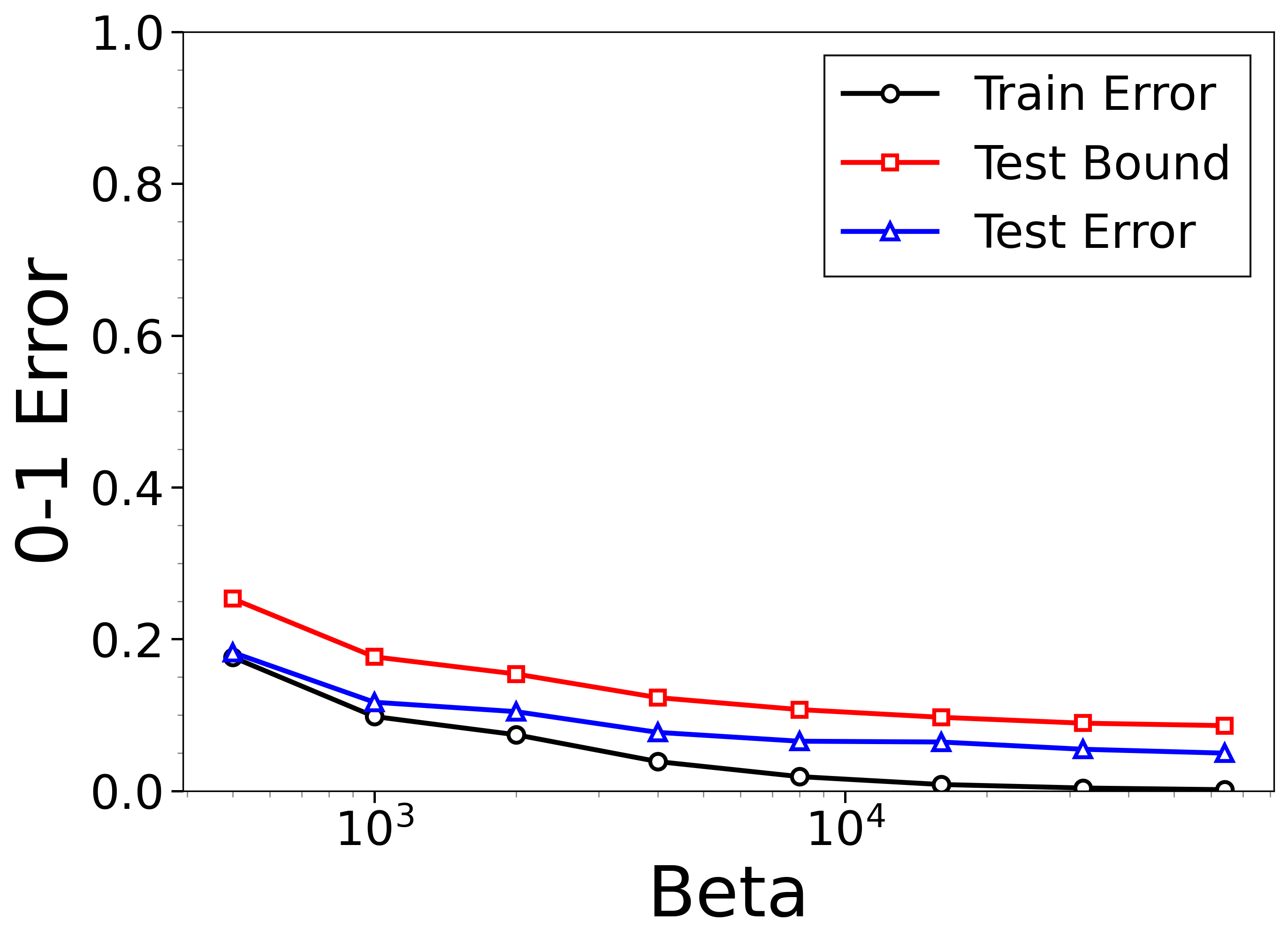}
  \end{subfigure}  
    \begin{subfigure}[b]{0.495\linewidth} 
    \includegraphics[width=\linewidth]{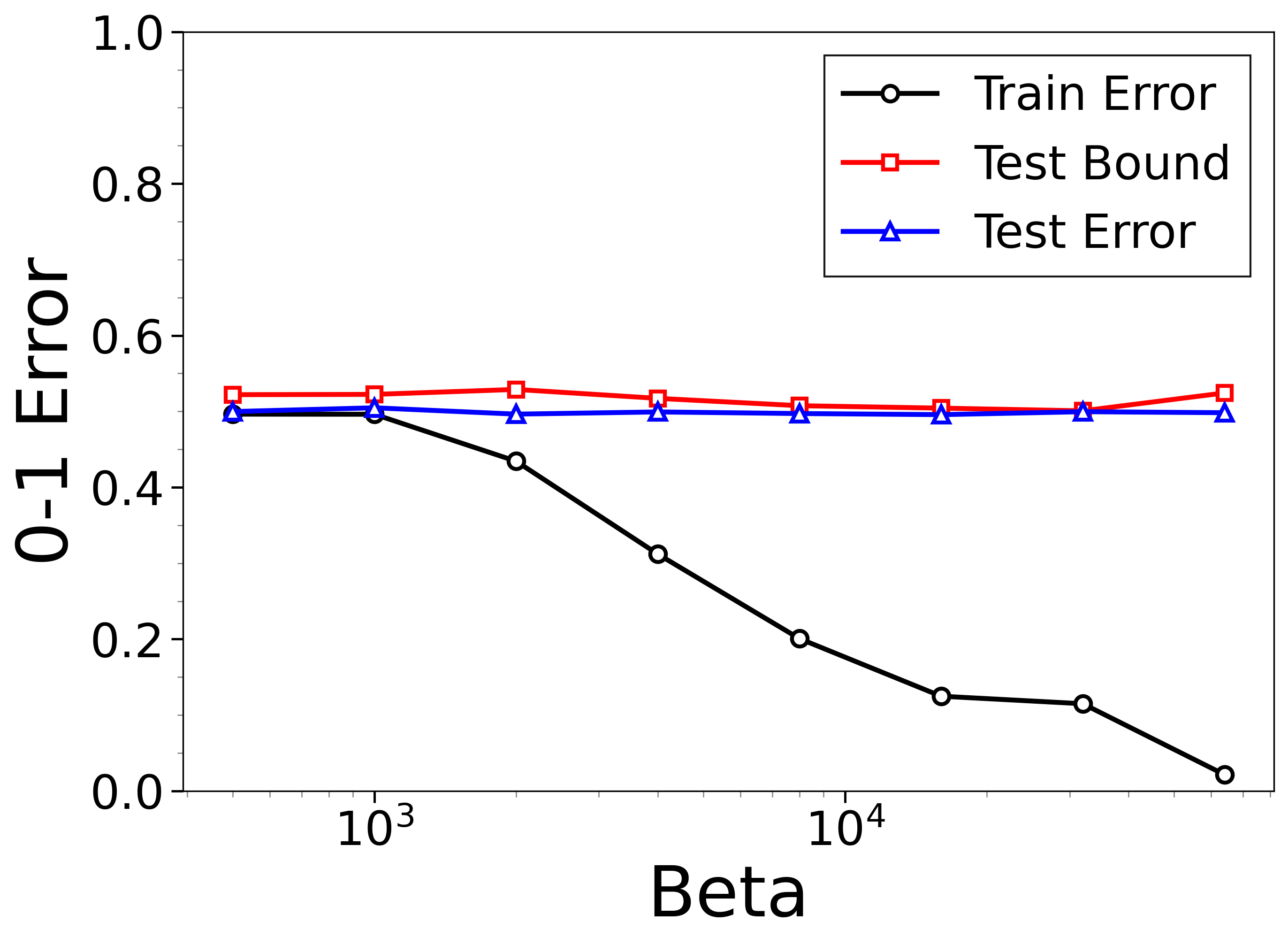}
  \end{subfigure}  
  \begin{subfigure}[b]{0.495\linewidth} 
	\includegraphics[width=\linewidth]{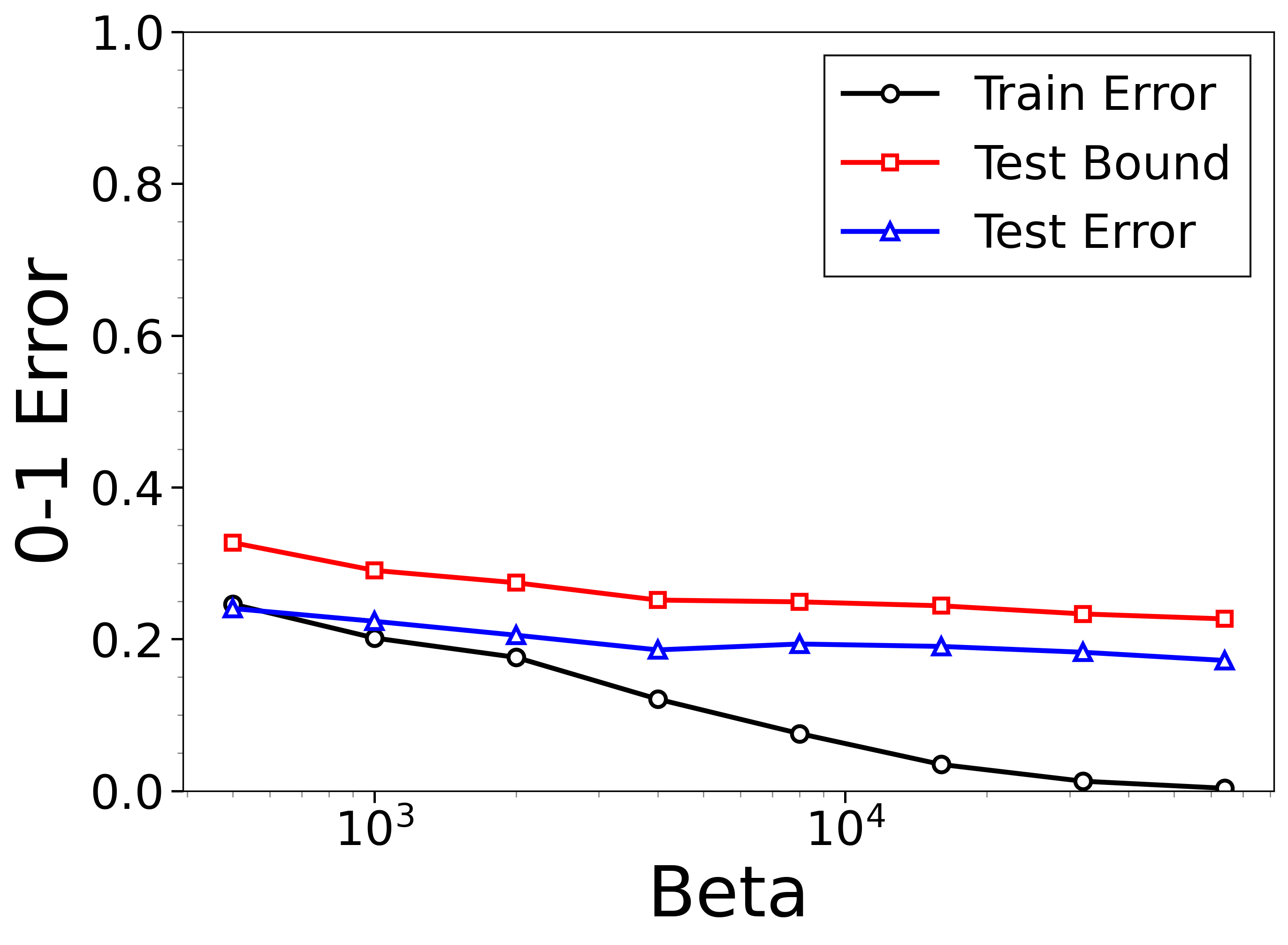}
  \end{subfigure}  
  \caption{SGLD on MNIST and CIFAR-10 with 8000 training examples, MNIST above and CIFAR-10 below, random labels on the left, true labels on the right. Both random and true labels are trained with the same algorithm and parameters on a fully connected ReLU network with two hidden layers of 1000 and 1500 units, respectively. The calibration factor for MNIST is 0.77, for CIFAR-10, 0.89. Train error, test error, and our bound for the Gibbs posterior average of the 0-1 loss are plotted against $\beta$.}
  \label{MC8k}
\end{figure}

Several experiments confirm the validity of the proposed bounds. An example is shown in Figure \ref{MC8k}, where a fully connected ReLU network with two hidden layers of 1000 (respectively 1500) units each is trained with SGLD at inverse temperatures $\beta=$ 0, 500, 1000, 2000, 4000, 8000, 16000, 32000, and 64000. The train error for random labels is about 0.1 (or even less) at $\beta=64000$, where the bound is above 0.5. The test error for true labels, however, is tightly bounded above. \\
Notice that for MNIST, which has the tightest bounds, the training error for the true labels is rapidly decreasing from 0.5 to 0.17 at $\beta=500$ and to 0.1 at $\beta=1000$.
The more moderate initial decrease for CIFAR-10 corresponds to the tendency to overfit on this more difficult dataset. This confirms the intuition that good generalization at low temperatures is already announced in the high-temperature regime.
Experimental bounds for single posterior draws, along with additional experiments including applications to model selection, are presented in Section~\ref{sec:exp_res}.

We provide a comparison of our results with the risk certificates of \cite{perez2021tighter}. In Table \ref{tab:baseline_comparison}, we compare our results on the full MNIST 10-class classification task against their certificates, using the same FCN and CNN architectures with a data-independent prior. For direct comparison, the baseline certificates for their method reported in Table \ref{tab:baseline_comparison} are taken from \citep[][Table 1]{perez2021tighter}.
Even though their method requires a specialized self-certified training algorithm, our approach yields tighter certificates for models achieving similar test error, without needing to change the standard training objective. Finally, we emphasize that these experiments evaluate our bounds on a multi-class classification task. This demonstrates the flexibility of our approach and its capability to scale beyond simple binary tasks into more practical settings.

\begin{table}[!h]
\centering
\caption{Train and test 01 errors and empirical true risk bounds, on MNIST for FCN and CNN architectures. The table contrasts our risk certificates for standard models with the self-certified networks of \cite{perez2021tighter} using a data-independent prior.}
\label{tab:baseline_comparison}
\footnotesize
\setlength{\tabcolsep}{3pt}
\resizebox{\columnwidth}{!}{%
\begin{tabular}{@{}llccc@{}}
\toprule
\textbf{Architecture} &
\textbf{Method} &
\makecell{\textbf{Train 01}\\\textbf{Error}} &
\makecell{\textbf{Test 01}\\\textbf{Error}} &
\makecell{\textbf{True 01}\\\textbf{Risk Bound}} \\ 
\midrule

\multirow{5}{*}{FCN}
& $f_{\mathrm{quad}}$    & 0.0951 & 0.0921 & \underline{0.3155} \\
& $f_{\mathrm{lamda}}$   & 0.0742 & 0.0732 & 0.3275 \\
& $f_{\mathrm{classic}}$ & 0.1531 & 0.1411 & 0.3304 \\
& $f_{\mathrm{bbb}}$     & \underline{0.0235} & \textbf{0.0293} & 0.5516 \\
& Ours                   & \textbf{0.0118} & \underline{0.0338} & \textbf{0.2470} \\
\midrule

\multirow{5}{*}{CNN}
& $f_{\mathrm{quad}}$    & 0.0535 & 0.0513 & \underline{0.2165} \\
& $f_{\mathrm{lamda}}$   & 0.0430 & 0.0397 & 0.2202 \\
& $f_{\mathrm{classic}}$ & 0.0932 & 0.0869 & 0.2277 \\
& $f_{\mathrm{bbb}}$     & \underline{0.0120} & \textbf{0.0154} & 0.3645 \\
& Ours                   & \textbf{0.0032} & \underline{0.0182} & \textbf{0.2025} \\
\bottomrule
\end{tabular}%
}
\end{table}

As observed, our empirical results suggest that the main principle is not limited to Gibbs distributions, but also extends to nearby distributions approximated by LMC algorithms such as SGLD. Consequently, since Stochastic Gradient Descent (SGD) can be viewed as the zero-temperature limit of SGLD, our generalization guarantees at very low temperatures may potentially transfer to SGD. While our preliminary results in Table \ref{tab:combined_results_sgd} in Section \ref{sec:transfer_sgd} support this hypothesis, further empirical validation is required.

\section{Conclusion}
Using the integral representation of the log-partition function, the
Gibbs posterior admits the computation of upper bounds on the true error based on the training data, and for any temperature.  These bounds are stable
under perturbation in relative entropy and can
be approximated by Langevin Monte Carlo (LMC) algorithms. However, for realistic experiments, the approximations obtained by these algorithms
are coarse and require calibration, which leads to rather tight bounds in the interpolation regime of overparametrized neural networks. \\
The fact that the calibrated bounds are very tight is, at this point, a purely experimental finding, requiring more theoretical investigation in future work.

\section*{Impact Statement}
This paper presents work whose goal is to advance the field of Machine
Learning. There are many potential societal consequences of our work, none of 
which we feel must be specifically highlighted here.

\subsubsection*{Acknowledgments}
We acknowledge financial support from NextGenerationEU and MUR PNRR project PE0000013 CUP J53C22003010006 ``Future Artificial Intelligence Research (FAIR)'', and EU Project ELIAS (GA no 101120237). We would like to thank the CSML-IIT lab members for their valuable discussions and support during this research. In particular, we are especially grateful to Matia Bojovic and Alek Fröhlich for their time and help.

\bibliographystyle{icml2026}
\bibliography{Bibfile}
\newpage
\appendix
\onecolumn
\section*{Appendix}

In this appendix, we provide a glossary of notation, give additional theoretical results and missing proofs, and provide more information on the numerical experiments, as well as additional experimental results.
\section{Table of notation\label{glossary}}
\begin{tabular}{l|l|l}
\hline
Notation & Brief description & Section \\ \hline
$\mathcal{X}$ & space of data & \ref{Section preliminaries}, \ref{Section Bounds for Gibbs}, \ref{Langevin Theory} \\ \hline
$\Sigma $ & sigma algebra (events) on $\mathcal{X}$ & \ref{Section
preliminaries} \\ \hline
$\mu $ & probability of data & \ref{Section preliminaries}, \ref{Section stability} \\ \hline
$n$ & sample size & \ref{Section introduction}, \ref{Section preliminaries}, \ref{Section Bounds for Gibbs}, \ref{Langevin Theory} \\ \hline
$\delta$ & confidence parameter in high probability bounds & \ref{Section introduction}, \ref{Section preliminaries}, \ref{Section stability} \\ \hline
$\mathbf{x}$ & generic member $\left( x_{1},...,x_{n}\right) \in \mathcal{X}%
^{n}$, training sample & \ref{Section introduction}, \ref{Section preliminaries}, \ref{Section Bounds for Gibbs}, \ref{Langevin Theory} \\ \hline
$\mathcal{H}$ & hypothesis space & \ref{Section preliminaries},  \ref{Section Bounds for Gibbs}, \ref{Section stability} \\ \hline
$\Omega $ & sigma algebra (events) on $\mathcal{H}$ & \ref{Section
preliminaries}, \ref{Section Bounds for Gibbs} \\ \hline
$\ell $ & $\ell :\mathcal{H\times X\rightarrow }\left[ 0,\infty \right) $
loss function & \ref{Section introduction}, \ref{Section preliminaries}, \ref{Section stability}, \ref{Section experiments} \\ \hline
$\mathcal{P}\left( \mathcal{H}\right) $ & probability measures on $\mathcal{H%
}$ & \ref{Section preliminaries}, \ref{Section Bounds for Gibbs} \\ \hline
$\pi $ & nonnegative a-priori measure on $\mathcal{H}$ & \ref{Section
preliminaries}, \ref{Section Bounds for Gibbs}, \ref{Langevin Theory} \\ \hline
$\|f\|_{\infty}$ & The sup-norm is defined: $\|f\|_{\infty} = \sup \{ |f(s)|: s \in S\}$& \ref{Section preliminaries}, \ref{Langevin Theory} \\ \hline
$\sigma$ & width of Gaussian prior & \ref{section Markov}, \ref{Section experiments}
\\ \hline 
$L\left( h\right) $ & $L\left(h\right) =\mathbb{E}_{x\sim \mu }\left[
\ell(h, x) \right] $, expected loss of $h\in \mathcal{H}$ & \ref%
{Section preliminaries} \\ \hline
$\hat{L}\left( h,\mathbf{x}\right) $ & $\hat{L}\left( h,\mathbf{x}\right)
=\left( 1/n\right) \sum_{i=1}^{n}\ell \left( h,x_{i}\right) $, empirical
loss of $h\in \mathcal{H}$ & \ref{Section preliminaries}, \ref{Section Bounds for Gibbs}, \ref{Langevin Theory}, \ref{Section experiments} \\ \hline
$\beta $ & inverse temperature & \ref{Section introduction}, \ref{Section preliminaries}, \ref{Section Bounds for Gibbs}, \ref{Langevin Theory}, \ref{Section experiments}\\ \hline
$Z_{\beta }\left( \mathbf{x}\right) $ & partition function & \ref{Section Bounds for Gibbs}, \ref{section Markov} \\ \hline
$G_{\beta ,\pi }\left( \mathbf{x}\right) $ & Gibbs posterior with energy $%
\hat{L}$ and prior $\pi $ & \ref{Section introduction}, \ref{Section Bounds for Gibbs}, \ref{Langevin Theory}, \ref{Section experiments} \\ \hline
$\mathbb{E}_{g\sim G_{\beta }\left( \mathbf{x}\right) }$ & posterior
expectation & \ref{Section Bounds for Gibbs} \\ \hline
$\beta_0^{K}$ & increasing sequence $\left( 0=\beta _{0} < \beta _{1}<\cdots<\beta _{K}=\beta\right) 
$ of positive reals & \ref{Section stability}, \ref{Section experiments} \\ 
\hline
$\Gamma(\nu_0^{K-1},\mathbf{x},\beta_0^{K})$ & bounding functional & \ref{Section stability}, \ref{Section experiments} \\ \hline
$F\left( h,\mathbf{x}\right) $ & placeholder for generalization gap & \ref{Section preliminaries}, \ref{Section stability} \\ 
\hline
$\kappa(p,q)$ & $kl\left( p,q\right) =p\ln \frac{p}{q}+\left( 1-p\right) \ln 
\frac{1-p}{1-q}$, rel. entropy of Bernoulli variables & \ref{Section experiments} \\ \hline
$\kappa^{-1}(p,t) $ & $\inf \left\{ q:q\geq p,\kappa \left(
p,q\right) \geq t\right\}$& \ref{Section
calibration} \\ \hline
$\text{KL}\left( \rho ,\nu \right) $ & $\int \left( \ln \frac{d\rho }{d\nu }\right)
d\rho $, KL-divergence of $\rho ,\nu \in \mathcal{P}\left( \mathcal{H}%
\right) $ & \ref{Section
preliminaries}, \ref{Section Bounds for Gibbs}, \ref{Langevin Theory}, \ref{Section experiments} \\ \hline
$R_{\infty}\left( \rho ,\nu \right) $ & $\sup_{h \in \mathcal{H}} \ln \frac{d\rho}{d\nu}(h) $, Rényi-infinity divergence of $\rho ,\nu \in \mathcal{P}\left( \mathcal{H}%
\right) $ & \ref{Section
preliminaries}, \ref{Section stability}\\ \hline
$d_{TV}\left( \rho ,\nu \right) $ & total variation distance &\ref{Proofs stability} \\ \hline
$W_{p}\left( \rho ,\nu \right) $ & $p$-Wasserstein distance & \ref{Proofs stability} \\ \hline
$\eta$ & step size or learning rate & \ref{Section ULA} \\ \hline
$\nu _{\beta ,\eta }$ & invariant measure of LMC approximation of $G_{\beta }
$ with step size $\eta $ & \ref{Section ULA} \\ \hline
$\nu _{\beta ,\eta ,t}$ & LMC approximation of $G_{\beta }$ with step size $%
\eta $ at iteration $t$ & \ref{Section ULA} \\ \hline
$\nu _{\beta ,\epsilon }$ & invariant measure of CLD process with step size $\epsilon $ & \ref{Section ULA} \\ \hline
$\nu _{\beta ,\epsilon ,t}$ & CLD with step size $\epsilon$ at iteration $t$ & \ref{Section ULA} \\ \hline
$r$ & calibration factor & \ref{Section calibration} \\ \hline
$\mathbf{\bar{x}}$ & randomly labeled  data & \ref{Section calibration} \\ \hline
$\mathbb{M}_{\rm stop}$,$\mathbb{M}_{\rm erg}$ & filters for stopping and ergodic mean & \ref{sec:ergodic_mean} \\ \hline
$L_{01}\left( h\right) $ & $L_{01}\left( h\right) =\mathbb{E}_{x\sim \mu }\left[
\ell_{01}(h,x) \right] $, expected 01-loss of $h\in \mathcal{H}$ & \ref{Section calibration} \\ \hline
$\hat{L}_{01}\left( h,\mathbf{x}\right) $ & $\hat{L}\left( h,\mathbf{x}\right)
=\left( 1/n\right) \sum_{i=1}^{n}\ell_{01} \left( h,x_{i}\right) $, empirical
01-loss of $h\in \mathcal{H}$ & \ref{Section calibration} \\ \hline
\end{tabular}%
\newpage
\section{PAC-Bayes}\label{PAC Bayes}
We review PAC-Bayesian theory. There is no claim to novelty.
\subsection{Proof of the PAC-Bayesian Theorem}\label{section proof of pac-bayes}

\begin{definition}Given a stochastic algorithm $\nu $ we define a probability measure $\rho
_{\nu }$ on $\mathcal{H}\times \mathcal{X}^{n}$ by 
\begin{equation}
\rho _{\nu }\left( A\right) =\mathbb{E}_{\mathbf{x}\sim \mu ^{n}}\mathbb{E}%
_{h\sim \nu \left( \mathbf{x}\right) }\left[ 1_{A}\left( h,\mathbf{x}\right) %
\right] \ \text{  for  } \  A\in \Omega \otimes \Sigma ^{\otimes n}.
\end{equation}%
\end{definition}
Then, $\mathbb{E}_{\left( h,\mathbf{x}\right) \sim \rho _{\nu }}\left[ \phi
\left( h,\mathbf{x}\right) \right] =\mathbb{E}_{\mathbf{x}}\mathbb{E}_{h\sim
\nu \left( \mathbf{x}\right) }\left[ \phi \left( h,\mathbf{x}\right) \right] 
$ for measurable $\phi :\mathcal{H}\times \mathcal{X}^{n}\rightarrow \mathbb{%
R}$. To draw the pair $\left( h,\mathbf{x}\right) $ from $\rho _{\nu }$ we
first draw the training sample $\mathbf{x}$, and then sample $h$ from $\nu
\left( \mathbf{x}\right) $.

Restatement of Theorem \ref{theorem pac bayes}

\begin{theorem}
Let $F:\mathcal{H}\times \mathcal{X}^{n}\rightarrow \mathbb{R}$ be some
measurable function, and let $\nu $ be a stochastic algorithm such that $\nu
\left( \mathbf{x}\right) $ is absolutely continuous w.r.t. $\pi $ for all$%
\mathbf{x}\in \mathcal{X}^{n}$. Then

(i) for $\delta >0$ with probability at least $1-\delta $ in $\mathbf{x}\sim
\mu ^{n}$ and $h\sim \nu \left( \mathbf{x}\right) $ 
\begin{equation*}
F\left( h,\mathbf{x}\right) \leq \ln  \frac{d \nu \left( \mathbf{x}%
\right)}{d\pi}  \left( h\right) +\ln \mathbb{E}_{\mathbf{x}}%
\mathbb{E}_{g\sim \pi }\left[ e^{F\left( g,\mathbf{x}\right) }\right] +\ln
\left( \frac{1}{\delta} \right) 
\end{equation*}

(ii) for $\delta >0$ with probability at least $1-\delta $ in $\mathbf{x}%
\sim \mu ^{n}$ 
\begin{equation*}
\mathbb{E}_{h\sim \nu \left( \mathbf{x}\right) }\left[ F\left( h,\mathbf{x}%
\right) \right] \leq \text{KL}\left( \nu \left( \mathbf{x}\right) ,\pi \right) +\ln 
\frac{\mathbb{E}_{\mathbf{x}}\mathbb{E}_{g\sim \pi }\left[ e^{F\left( g,\mathbf{x}%
\right) }\right]}{\delta}
\end{equation*}
\end{theorem}

\begin{proof}
By Markov's inequality, for any real random variable $Y$ 
\begin{equation*}
\Pr \left\{ Y>\ln \mathbb{E}\left[ e^{Y}\right] +\ln \left( 1/\delta \right)
\right\} =\Pr \left\{ e^{Y}>\mathbb{E}\left[ e^{Y}\right] /\delta \right\}
\leq \delta .
\end{equation*}%
To prove (i), we apply this to the random variable $Y=F\left( h,\mathbf{x}%
\right) -\ln \left( d\left( \nu \left( \mathbf{x}%
\right) /d\pi \right) \right) \left( h\right) $ on the probability space $\left( \mathcal{H\times X}%
^{n},\Omega \otimes \Sigma ^{\otimes n},\rho_\nu \right) $. This
gives, with probability at least $1-\delta $ as  $\mathbf{x}\sim \mu ^{n}$ and $h\sim
\nu\left( \mathbf{x}\right) $),%
\begin{align*}
& F\left( h,\mathbf{x}\right) - \ln \frac{d\nu\left( \mathbf{x}\right) }{d\pi }\left( h\right) \\&\leq \ln \mathbb{E}_{\mathbf{x}}\mathbb{E}_{g\sim \nu\left( \mathbf{x}\right) }\left[ e^{F\left( g,\mathbf{x}\right) - \ln \left( d\left( \nu \left( \mathbf{x}%
\right) /d\pi \right) \right) \left( g\right)}\right] +\ln\left( 1/\delta \right)  \\& =\ln \mathbb{E}_{\mathbf{x}}\mathbb{E}_{g\sim \pi }\left[ e^{F\left( g,\mathbf{x}\right) - \ln \left( d\left( \nu \left( \mathbf{x}%
\right) /d\pi \right) \right) \left( g\right)+\ln \left( d\left( \nu \left( \mathbf{x}%
\right) /d\pi \right) \right) \left( g\right) }\right] +\ln \left( 1/\delta \right) \\ & =\ln \mathbb{E}_{\mathbf{x}}\mathbb{E}_{g\sim \pi }\left[ e^{F\left( g, \mathbf{x}\right) }\right] +\ln \left( 1/\delta \right).
\end{align*}
For (ii)\ apply Markov's
inequality on the probability space $\left( \mathcal{X}
^{n},\Sigma ^{\otimes n},\mu^n\right) $ to the random variable $Y=\mathbb{E}_{g\sim \nu \left( \mathbf{x}\right) }\left[ F\left( g,\mathbf{x}%
\right) \right] - KL\left( \nu \left( \mathbf{x}\right) ,\pi \right) =\mathbb{E}_{g\sim \nu\left( \mathbf{x}\right) }\left[
F\left( g,\mathbf{x}\right) - \ln \left( d\left( \nu \left( \mathbf{x}%
\right) /d\pi \right) \right) \left( g\right) \right] $ instead. By Jensen's inequality 
\begin{equation*}
e^{\mathbb{E}_{g\sim \nu\left( \mathbf{x}\right)}\left[F\left( g,\mathbf{x}\right) - \ln \left( d\left( \nu \left( \mathbf{x}%
\right) /d\pi \right) \right) \left( g\right)\right] }\leq \mathbb{E}_{g\sim \nu\left( 
\mathbf{x}\right) }\left[e^{F\left( g,\mathbf{x}\right) - \ln \left( d\left( \nu \left( \mathbf{x}%
\right) /d\pi \right) \right) \left( g\right)}\right].
\end{equation*}%
Then proceed as before.
\end{proof}
\subsection{Concrete PAC-Bayesian Bounds and the 0-1 Loss}\label{section PB for 01-loss}
Assume that $\ell$ has values in $[0,1]$. To derive (\ref{Long PAC-Bayes}) from Theorem \ref{theorem pac bayes} let  $F\left( h,\mathbf{x}%
\right) =n~\kappa \left( \hat{L}\left( h,\mathbf{x}\right) ,L\left( h\right)
\right) $, where $\kappa $ is the relative entropy of two Bernoulli
variables with expectations $p$ and $q$  
\begin{equation}
\kappa \left( p,q\right) =p\ln \frac{p}{q}+\left( 1-p\right) \ln \frac{1-p}{%
1-q}.  \label{define little kl}
\end{equation}%
for which \cite{tolstikhin2013pac} give the inversion rule $\kappa \left(
p,q\right) \leq B$ $\implies $ $q-p\leq \sqrt{2pB}+2B$. Tonelli's theorem
and Theorem 1 of \cite{maurer2004note} then give $\mathbb{E}_{\mathbf{x}}%
\mathbb{E}_{g\sim \pi }\left[ e^{F\left( g,\mathbf{x}\right) }\right] =%
\mathbb{E}_{g\sim \pi }\mathbb{E}_{\mathbf{x}}\left[ e^{F\left( g,\mathbf{x}%
\right) }\right] \leq 2\sqrt{n}$ for $n\geq 8$. Substitution in (i) of
Theorem \ref{theorem pac bayes} and division by $n$ then gives with high probability as $\mathbf{%
x}\sim \mu ^{n}$ and $h\sim \nu \left( \mathbf{x}\right) $%
\begin{equation}
\kappa \left( \hat{L}\left( h,\mathbf{x}\right) ,L\left( h\right) \right)
\leq \frac{1}{n}\left( \ln \frac{d\nu \left( \mathbf{x}\right) }{d\pi }+\ln
\left( \frac{2\sqrt{n}}{\delta }\right) \right) ,
\end{equation}%
and the inversion rule of \cite{tolstikhin2013pac} gives (\ref{Long
PAC-Bayes}). The derivation of the bound for $\mathbb{E}_{h\sim \nu \left( 
\mathbf{x}\right) }\left[ L\left( h\right) \right] $ from Theorem \ref{theorem pac bayes} (ii) is
analogous, we just have to use $\kappa \left( \mathbb{E}_{h\sim \nu \left( 
\mathbf{x}\right) }\left[ \hat{L}\left( h,\mathbf{x}\right) \right] ,\mathbb{%
E}_{h\sim \nu \left( \mathbf{x}\right) }\left[ L\left( h\right) \right]
\right) \leq \mathbb{E}_{h\sim \nu \left( \mathbf{x}\right) }\kappa \left( 
\hat{L}\left( h,\mathbf{x}\right) ,L\left( h\right) \right) $ by the joint
convexity of $\kappa $ (see \cite{cover1999elements}). This gives%
\begin{equation}
\kappa \left( \mathbb{E}_{h\sim \nu \left( \mathbf{x}\right) }\left[ \hat{L}%
\left( h,\mathbf{x}\right) \right] ,\mathbb{E}_{h\sim \nu \left( \mathbf{x}%
\right) }\left[ L\left( h\right) \right] \right) \leq \frac{1}{n}\left(
KL\left( \nu \left( \mathbf{x}\right) ,\pi \right) +\ln \left( \frac{2\sqrt{n%
}}{\delta }\right) \right) 
\label{actual bound}
\end{equation}%
and the corresponding inequality obtained from the inversion rule.

The inversion rule produces directly interpretable bounds, but stronger is
the direct inversion using the function $\kappa ^{1}:\left[ 0,1\right]
\times \left[ 0,\infty \right) $  by 
\begin{equation*}
\kappa ^{-1}\left( p,t\right) =\inf \left\{ q:q\geq p,\kappa \left(
p,q\right) \geq t\right\} .
\end{equation*}%

In the definition of $F$, using the function $\kappa$, we can use other loss functions, possibly different from the loss function $\ell$ that defines the Gibbs posterior. If these loss functions satisfy the conditions of  Theorem 1 in \cite{maurer2004note}, we obtain analogous bounds. In particular, for binary classification, we can use the 01 loss. Momentarily changing notation by replacing $x\in \mathcal{X}$
by $\left( x,y\right) $, where $y$ is the label corresponding to $x$, the 01 loss is defined as 
$$\hat L_\textbf{01}(h,\mathbf{x})=\frac{1}{n}\sum_{i=1}^n 1_{\left( -\infty ,0
\right) }\left( y_i\ell \left( h,x_i\right) \right)$$ 
and $L_\textbf{01}(h)=\mathbb{E}_{x\sim \mu^n}\left[\hat L_\textbf{01}(h,\mathbf{x})\right]$. 
We then obtain the bounds
\begin{eqnarray}
L_\textbf{01}\left( h\right)  &\leq &\kappa ^{-1}\left( \hat L_\textbf{01}(h,\mathbf{x}) ,\frac{1}{n}\left( \ln \frac{d\nu \left( \mathbf{x}\right) }{d\pi }%
+\ln \left( \frac{2\sqrt{n}}{\delta }\right) \right) \right) \notag \\
\mathbb{E}_{h\sim \nu \left( \mathbf{x}\right) }\left[ L_\textbf{01}\left( h\right) %
\right]  &\leq &\kappa ^{-1}\left( \mathbb{E}_{h\sim \nu \left( \mathbf{x}%
\right) }\left[ \hat L_\textbf{01}(h,\mathbf{x}) \right] ,\frac{1}{n}\left(
KL\left( \nu \left( \mathbf{x}\right) ,\pi \right) +\ln \left( \frac{2\sqrt{n%
}}{\delta }\right) \right) \right). 
\label{bound01}\end{eqnarray}%
This is the bound used in our experiments. 

\section{Supplementary material for Section \ref{Langevin Theory}}

\subsection{Examples of LMC: CLD and ULA\label{Section ULA}}
A number of recent works give convergence guarantees for LMC algorithms and processes \citep{raginsky2017non,
dalalyan2017user,brosse2018promises,vempala2019rapid,dwivedi2019log,nemeth2021stochastic,balasubramanian2022towards,chen2022improved}. Here we focus on the results of \cite{vempala2019rapid}, which do not
require convexity of $V$ and instead assume that the measure $\nu $
satisfies a log-Sobolev inequality (LSI) in the sense that for all smooth $f:%
\mathbb{R}^{d}\rightarrow \mathbb{R}$%
\begin{equation}
\begin{split}
    \mathbb{E}_{h\sim \nu }\big[ f^{2}(h) \ln f^{2}(h) \big]
    - \mathbb{E}_{h\sim \nu }\big[ f^{2}(h) \big] \ln \mathbb{E}_{h\sim \nu }\big[ f^{2}(h) \big] 
     \leq \frac{2}{\alpha }\mathbb{E}_{h\sim \nu }\big[ \| \nabla f (h) \| ^{2}\big]
\end{split}
\label{log sobolev}
\end{equation}
for some $\alpha >0$. An LSI is satisfied when $V$ is strongly convex, but,
importantly, also for measures which are bounded perturbations of measures
satisfying an LSI \citep{holley1986logarithmic}. In our applications, this will be ensured for bounded losses, because of the Gaussian prior, but $\alpha$ will deteriorate as $\beta$ increases. \citet{vempala2019rapid}
give further examples and a list of references for measures that are not
log-concave and satisfy an LSI. \cite{raginsky2017non} show that
under dissipativity conditions of the loss, the Gibbs posterior $G_{\beta
}\left( \mathbf{x}\right) $ satisfies an LSI with constant independent of $%
\mathbf{x}$. There are a number of more recent works on this topic, in particular proximal methods (see, e.g., \cite{chen2022improved}), but the work of \cite{vempala2019rapid} is convenient for our purposes, because they guarantee convergence in relative entropy, and our emphasis is not on sampling.

Continuous Langevin Dynamics
(CLD) is specified by the stochastic differential equation
\begin{equation*}
dh_{t}=- \nabla V\left( h_{t}\right) dt+\sqrt{2}dB_{t},
\end{equation*}%
where $B_{t}$ is centered standard Brownian motion in $\mathbb{R}^{d}$. The distribution of CLD converges exponentially to the Gibbs posterior under mild conditions \citep{chiang1987diffusion}. In Section \ref{converge cld}, we give a convergence result for CLD adapted to temperature dependence and prior, on the condition of a log-Sobolev inequality, with convergence in relative entropy.

The Euler-discretization of CLD is the iterative algorithm 
\begin{equation}
h_{t+1}=h_{t}-\epsilon \nabla V\left( h_{t}\right) +\sqrt{2\epsilon }\xi
_{t},  \label{LMC}
\end{equation}%
where $\epsilon >0$ is a step size, the $\xi _{t}\sim \mathcal{N}\left(
0,I\right) $ are independent Gaussian vectors and $h_{0}$ is drawn from some
initial distribution $\nu _{0}$. Some authors call this algorithm simply
LMC, for Langevin Monte Carlo. We call it ULA, alongside \cite{10.1214/16-AAP1238,dwivedi2019log,vempala2019rapid}, for Unadjusted Langevin Algorithm. A popular variant of ULA is Stochastic Gradient Langevin Dynamics (SGLD) \citep{welling2011bayesian,raginsky2017non}, where the gradient is replaced by an
unbiased estimate, typically realized with random minibatches. Here, we restrict ourselves to ULA with a constant step size, because it has
the fewest parameters to adjust, but in experiments, we also use the computationally more efficient SGLD.

The distribution $\nu _{\epsilon ,t}$ of ULA converges as $
t\rightarrow \infty $ to a biased limiting distribution $\nu _{\epsilon }$,
which is generally different from $\nu $, but expected to be closer to $\nu $
as $\epsilon $ becomes smaller. \cite{vempala2019rapid} use the LSI
assumption to control the difference between CLD and ULA along
their path and prove the following result.

\begin{theorem}\label{Theorem Vempala}
Assume that $\nu $ satisfies the log-Sobolev
inequality (\ref{log sobolev}) with $\alpha >0$, that the Hessian of $V$
satisfies $-LI\preceq \nabla ^{2}V\left( h\right) \preceq LI$ for all $h$
and some $L<\infty $, and that $0<\epsilon \leq \alpha /\left( 4L^{2}\right) 
$. Then, for $t\geq 0$%
\begin{equation*}
\text{KL}\left(\nu _{\epsilon ,t},\nu\right) \leq e^{-\alpha \epsilon t}\text{KL}\left(
\nu _{0}, \nu\right) +\frac{8\epsilon dL^{2}}{\alpha }.
\end{equation*}
\end{theorem}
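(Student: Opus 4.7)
The plan is to follow the continuous-time interpolation argument of Vempala and Wibisono. I would first embed the discrete ULA iterates into a continuous process: for $t \in [k\epsilon,(k+1)\epsilon]$, define
\begin{equation*}
d\bar h_t = -\nabla V(\bar h_{k\epsilon})\, dt + \sqrt{2}\, dB_t, \qquad \bar h_{k\epsilon} = h_k,
\end{equation*}
so that $\bar h_{(k+1)\epsilon}$ has the same law as $h_{k+1}$. It then suffices to track $H_\nu(\rho_t) := KL(\rho_t,\nu)$ along $\rho_t = \mathrm{Law}(\bar h_t)$ (I would read the KL in the theorem in the natural Vempala--Wibisono direction).

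Writing out the joint density of $(\bar h_t,\bar h_{k\epsilon})$ and marginalising, $\rho_t$ satisfies a Fokker--Planck equation with drift $-\,\mathbb{E}[\nabla V(\bar h_{k\epsilon})\mid \bar h_t]$. Differentiating $H_\nu(\rho_t)$ in time and integrating by parts one obtains
\begin{equation*}
\frac{d}{dt} H_\nu(\rho_t) = -\mathbb{E}_{\rho_t}\Bigl[\bigl\langle \nabla \log\tfrac{\rho_t}{\nu},\; \nabla V(\bar h_t) - \mathbb{E}[\nabla V(\bar h_{k\epsilon})\mid \bar h_t]\bigr\rangle + \bigl\|\nabla \log\tfrac{\rho_t}{\nu}\bigr\|^2\Bigr].
\end{equation*}
A Young-type split $\langle a,b\rangle \le \tfrac14\|a\|^2 + \|b\|^2$ isolates a clean dissipation term from the discretisation error,
\begin{equation*}
\frac{d}{dt} H_\nu(\rho_t) \le -\tfrac34 I_\nu(\rho_t) + \mathbb{E}\bigl[\|\nabla V(\bar h_t) - \nabla V(\bar h_{k\epsilon})\|^2\bigr],
\end{equation*}
with $I_\nu$ the relative Fisher information.

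Next I would control the error term with the Hessian bound: $\|\nabla V(\bar h_t) - \nabla V(\bar h_{k\epsilon})\|^2 \le L^2\|\bar h_t - \bar h_{k\epsilon}\|^2$, and the increment has squared expectation at most $\epsilon^2\mathbb{E}\|\nabla V(\bar h_{k\epsilon})\|^2 + 2\epsilon d$. To make this uniform in $k$, the gradient-squared is bounded by the Fisher information of $\rho_{k\epsilon}$ plus a $dL$ piece (the standard Stein-type inequality $\mathbb{E}_\rho\|\nabla V\|^2 \le 2I_\nu(\rho) + 2dL$, valid under $\|\nabla^2 V\|\le L$). Using $\epsilon\le \alpha/(4L^2)$, the drift-squared contribution is absorbed into the remaining slack $\tfrac14 I_\nu(\rho_t)$, producing
\begin{equation*}
\frac{d}{dt} H_\nu(\rho_t) \le -\tfrac12 I_\nu(\rho_t) + 8\epsilon dL^2.
\end{equation*}
Finally the log-Sobolev inequality $I_\nu(\rho) \ge 2\alpha H_\nu(\rho)$ converts this into $\tfrac{d}{dt} H_\nu(\rho_t) \le -\alpha H_\nu(\rho_t) + 8\epsilon dL^2$, and Gronwall on $[0,\epsilon t]$ (reading $t$ as the iteration index) yields exactly the stated bound.

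The main obstacle is the middle step: because the drift is frozen at $\bar h_{k\epsilon}$, the derivative of $H_\nu(\rho_t)$ is not simply $-I_\nu(\rho_t)$ but picks up a cross term involving the past gradient, and the whole argument hinges on showing that, under the specific step-size condition $\epsilon\le\alpha/(4L^2)$, the resulting discretisation error is strictly dominated by a fraction of the dissipation. The Fokker--Planck derivation, the smoothness estimate on the increment, the LSI application and Gronwall are all routine; the delicate book-keeping of constants in that absorption is what requires real work.
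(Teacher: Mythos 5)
This statement is not proved in the paper at all: it is imported verbatim from \cite{vempala2019rapid} as an external result, so there is no in-paper argument to compare against. Your sketch is a faithful reconstruction of the Vempala--Wibisono proof (the one-step interpolation of ULA, the Fokker--Planck equation with the conditional-expectation drift, the differential inequality $\frac{d}{dt}H_\nu(\rho_t)\le -\frac34 I_\nu(\rho_t)+\mathbb{E}\|\nabla V(\bar h_t)-\nabla V(\bar h_{k\epsilon})\|^2$, the smoothness plus the Stein-type lemma $\mathbb{E}_\rho\|\nabla V\|^2\le I_\nu(\rho)+dL$, and the LSI), and you correctly flag that the theorem must be read with the KL in the direction $KL(\nu_{\epsilon,t}\,\|\,\nu)$ --- with the paper's stated convention the arguments as written are transposed relative to what the cited result actually bounds. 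Two small points of bookkeeping where your route deviates from the source: first, the discretisation error contains $\epsilon^2L^2 I_\nu(\rho_{k\epsilon})$, i.e.\ the Fisher information at the \emph{start} of the step, which cannot be absorbed directly into the dissipation $-\frac14 I_\nu(\rho_t)$ at the current time; Vempala--Wibisono handle this by integrating the inequality over a single step to get a recursion of the form $H_{k+1}\le e^{-\alpha\epsilon}H_k+C\epsilon^2 dL^2$ rather than by a single global differential inequality. Second, the specific constant $8\epsilon dL^2/\alpha$ arises from summing that geometric recursion using $1-e^{-\alpha\epsilon}\ge \tfrac34\alpha\epsilon$ for $\alpha\epsilon\le 1/4$, not from a uniform-in-time bound of $8\epsilon dL^2$ on the error term; your global Gr\"onwall would need that uniform bound, which is slightly stronger than what the step-size condition cleanly delivers. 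Neither issue changes the substance --- your proposal is the right proof of the right theorem, modulo these constant-tracking details.
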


The first exponential term is due to the mismatch of the initial
distribution and $\nu$. \cite{vempala2019rapid}  show that $\nu_0$ may be chosen to make $\text{KL}\left(
\nu _{0}, \nu\right)$  of order $d$. The second term bounds the divergence between the
limiting distribution $\nu _{\epsilon }\,\ $and $\nu $. Similar results
exist under different conditions on the potential $V$; \cite{cheng2018sharp}
for example, require $V$ to be strongly convex outside of a ball instead of
the log-Sobolev inequality and give bounds in terms of the $W_{1}$%
-Wasserstein metric. \cite{raginsky2017non} give bounds for $W_{2}$ under dissipativity assumptions. We are not aware of similar bounds for the Rényi-infinity divergence. \newline
The next corollary adapts Theorem \ref{Theorem Vempala}
to the situation studied in this paper.

\begin{corollary}
\label{Corollary approximations}For $\beta >0$ consider the Gibbs posterior $%
G_{\beta }$ corresponding to $\hat{L}\left( h\right) $, with centered
Gaussian prior of width $\sigma $. Assume that it satisfies the log-Sobolev
inequality (\ref{log sobolev}) with $\alpha >0$, that the Hessian of $\hat{L}
$ satisfies $-RI\preceq \nabla ^{2}\hat{L}\left( h\right) \preceq RI$ for
all $h$ and some $R<\infty $, and that $0<\eta \leq \alpha /\left( 4\left(
\beta R+\frac{1}{\sigma ^{2}}\right) ^{2}\right) $. Consider the algorithm
\begin{equation}
h_{t+1}=h_{t}-\eta \nabla _{h}\hat{L}\left( h_{t}\right) -\frac{\eta h_{t}}{%
\beta \sigma ^{2}}+\sqrt{\frac{2\eta }{\beta }}\xi _{t},  \label{Our LMC}
\end{equation}%
where $h_{0}\sim \nu _{0}$ and the $\xi _{t}\sim \mathcal{N}\left(
0,I\right) $ are independent Gaussian random variables. Let $D\left( \beta
\right) =\text{KL}\left(\nu _{0}, G_{\beta }\right) $ and let $\nu _{\beta ,\eta ,t}
$ be the distribution of $h_{t}$ after $t$ steps. Then
$$
\text{KL}(\nu_{\beta,\eta,t} \| G_{\beta}) 
    \leq e^{-\frac{\alpha \eta t}{\beta}} D(\beta) 
    + \tfrac{8\eta d}{\beta \alpha} \big(\beta R \,{+} \tfrac{1}{\sigma^2}\big)^2 
$$
\end{corollary}

\begin{proof}
{This follows directly from Theorem \ref{Theorem Vempala} and the substitutions $V\left( h\right) =\beta \hat{L}\left( h\right)
+\left\Vert h\right\Vert ^{2}/\left( 2\sigma ^{2}\right) $, $\epsilon =\eta
/\beta $ and $L=\beta R+\frac{1}{\sigma ^{2}}$. Then $\nu =G_{\beta }$ with
Gaussian prior of width $\sigma $ and ULA becomes (\ref{Our LMC}).}
\end{proof}

\subsection{Convergence of CLD under LSI\label{converge cld}}

The following is a straightforward adaptation of Theorem 1 and its proof in \cite{vempala2019rapid}. There is no claim to originality.

\begin{lemma}
\label{Lemma cld converge}Let the process $h_{\beta ,t}$ on $\mathbb{R}^{d}$
be defined by the stochastic differential equation%
\begin{equation}
dh_{\beta ,t}\left( \mathbf{x}\right) =-\left( \nabla V\left( h_{\beta
,t}\right) +\frac{h_{\beta ,t}}{\beta \sigma ^{2}}\right) dt+\sqrt{2/\beta }%
dB_{t},  \label{MySDE}
\end{equation}%
and suppose that the measure with density $G_{\beta }\left( h\right) :=\exp
\left( -\left( \beta V+\frac{\left\Vert h\right\Vert ^{2}}{2\sigma ^{2}}%
\right) \right) $ satisfies an LSI with constant $\alpha $. Then, if $\nu
_{\beta ,t}$ is the distribution of $h_{\beta ,t}$,%
\[
KL\left( \nu _{\beta ,t},G_{\beta }\right) \leq e^{-2\alpha t /\beta}KL\left(
\nu _{\beta ,0},G_{\beta }\right) .
\]
\end{lemma}

\begin{proof}
Let $U_{\beta ,t}=-\beta ^{-1}\left( \ln \nu _{\beta ,t}+\left\Vert
h\right\Vert ^{2}/2\sigma ^{2}\right) $, so that $\nu _{\beta ,t}=\exp
\left( -\beta U_{\beta ,t}-\frac{\left\Vert h\right\Vert ^{2}}{2\sigma ^{2}}%
\right) $ and%
\[
KL\left( \nu _{\beta ,t},G_{\beta }\right) =\beta \int_{\mathbb{R}%
^{d}}\left( V-U_{\beta ,t}\right) \nu _{\beta ,t}d\lambda ,
\]%
where $\lambda $ is Lebesgue measure on $\mathbb{R}^{d}$. The Fokker-Planck
equation for (\ref{MySDE}) becomes%
\begin{eqnarray*}
\frac{\partial \nu _{\beta ,t}}{\partial t} &=&\nabla \cdot \left( \nu
_{\beta ,t}\nabla \left( V+\frac{\left\Vert h\right\Vert ^{2}}{2\beta \sigma
^{2}}\right) \right) +\beta ^{-1}\Delta \nu _{\beta ,t} \\
&=&\nabla \cdot \left( \nu _{\beta ,t}\nabla \left( V-U_{\beta ,t}\right)
\right) .
\end{eqnarray*}%
We have $0=\frac{d}{dt}\int \nu _{\beta ,t}d\lambda =-\int \left( \frac{%
\partial }{\partial t}\beta U_{\beta ,t}\right) \nu _{\beta ,t}dt$, so with
integration by parts
\begin{eqnarray*}
\frac{d}{dt}KL\left( \nu _{\beta ,t},G_{\beta }\right)  &=&\int_{\mathbb{R}%
^{d}}\beta \left( V-U_{\beta ,t}\right) \left( \nabla \cdot \left( \nu
_{\beta ,s}\nabla \left( V-U_{\beta ,t}\right) \right) \right) d\lambda  \\
&=&-\beta \int_{\mathbb{R}^{d}}\left\langle \nabla \left( V-U_{\beta
,t}\right) ,\nabla \left( V-U_{\beta ,t}\right) \right\rangle \nu _{\beta
,s}d\lambda  \\
&=&-\beta ^{-1}\int_{\mathbb{R}^{d}}\left\Vert \nabla \left( \beta V-\beta
U_{\beta ,t}\right) \right\Vert ^{2}\nu _{\beta ,t}d\lambda  \\
&=&-\beta ^{-1}J\left( \nu _{\beta ,t},G_{\beta }\right)  \\
&\leq &-2\alpha \beta ^{-1}KL\left( \nu _{\beta ,t},G_{\beta }\right) ,
\end{eqnarray*}%
where $J\left( \nu ,\rho \right) =E_{\nu }\left\Vert \nabla \ln \frac{d\nu }{%
d\rho }\right\Vert ^{2}$ is the relative Fisher information, and the last inequality follows from the LSI. Integrating this inequality concludes the
proof.
\end{proof}

\subsection{The "Second law of thermodynamics"}\label{section second law}

Restatement of Lemma \ref{Lemma 2nd law}
\begin{lemma}
Under the assumptions of Section \ref{section Markov}, if $\nu $ is a stationary distribution of $\left\{ h_{t}\right\} _{t\in I}$ and $s< t$
then $KL\left( \nu _{t},\nu \right) \leq KL\left( \nu _{s},\nu\right) $ and $R_{\infty }\left( \nu _{t},\nu\right) \leq R_{\infty
}\left( \nu _{s},\nu\right)$, with equality in either case if and only if 
$\nu_s=\nu$.\end{lemma}

\begin{proof}
We identify measures with their densities with respect to the base
distribution $\pi $, The 
transition kernel is denoted $K\left( h,g\right) =\mathbb{P}\left(
h_{t-s}=h|h_{0}=g\right)=\mathbb{P}\left( h_{t}=h|h_{s}=g\right) $.

\begin{eqnarray}
\nu _{t}\left( h\right) \ln \frac{\nu _{t}\left( h\right) }{\nu \left(
h\right) } &=&\int K\left( h,g\right) \nu _{s}\left( g\right) d\pi
\left( g\right) \ln \frac{\int K\left( h,g\right) \nu _{s}\left(
g\right) d\pi \left( g\right) }{\int K\left( h,g\right) \nu \left(
g\right) d\pi \left( g\right) }  \notag \\
&\leq &\int K\left( h,g\right) \nu _{s}\left( g\right) \ln \frac{\nu
_{s}\left( g\right) }{\nu \left( g\right) }d\pi \left( g\right) ,
\label{log sum}
\end{eqnarray}%
with equality if and only if $\nu_s=\nu$. The first identity is owed to the invariance of $\nu$. The inequality is the log-sum inequality \citep{cover1999elements}, followed by  cancellation of $K\left(
h,g\right) $. Then by Fubini's theorem%
\begin{eqnarray*}
KL\left( \nu _{t},\nu \right)  &=&\int \nu _{t}\left( h\right) \ln \frac{\nu
_{t}\left( h\right) }{\nu \left( h\right) }d\pi \left( h\right)  \\
&\leq &\int \left( \int K\left( h,g\right) d\pi \left( h\right)
\right) \nu _{s}\left( g\right) \ln \frac{\nu _{s}\left( g\right) }{\nu
\left( g\right) }d\pi \left( g\right)  \\
&=&\int \nu _{s}\left( g\right) \ln \frac{\nu _{s}\left( g\right) }{\nu
\left( g\right) }d\pi \left( g\right) =KL\left( \nu _{s},\nu \right) ,
\end{eqnarray*}%
which gives the first inequality. To prove the second inequality, divide (\ref{log sum}) by $%
\nu _{t}\left( h\right) $ to get with H\"{o}lder's inequality%
\begin{eqnarray*}
\ln \frac{\nu _{t}\left( h\right) }{\nu \left( h\right) } &\leq &\frac{1}{%
\nu _{t}\left( h\right) }\int K\left( h,g\right) \nu _{s}\left(
g\right) \ln \frac{\nu _{s}\left( g\right) }{\nu \left( g\right) }d\pi
\left( g\right)  \\
&\leq &\frac{1}{\nu _{t}\left( h\right) }\int K\left( h,g\right) \nu
_{s}\left( g\right) d\pi \left( g\right) \left(\sup_{g}\ln \frac{\nu _{s}\left(
g\right) }{\nu \left( g\right) }\right) \\
&=&R_{\infty }\left( \nu _{s},\nu \right) .
\end{eqnarray*}%
Take the supremum in $h$ to get the second inequality.
\end{proof}

\subsection{Proofs for Section \protect\ref{Section stability}}\label{Proofs stability}
We assume that $\mathcal{H}=\mathbb{R}^{d}$ and prepare the proof of Lemma \ref{lemma delta}.

The total
variation distance is defined as $d_{TV}:( \rho,\nu) \in 
\mathcal{P}(\mathcal{H}) \times \mathcal{P}(\mathcal{H})\mapsto \sup_{A\in \Omega }\vert \rho(A) -\nu(A)\vert$. If $f$ is a bounded measurable function, then $$\left\vert \mathbb{%
E}_{\rho}\left[ f\right] -\mathbb{E}_{\nu}\left[ f\right]
\right\vert \leq \left\Vert f\right\Vert _{\infty }d_{TV}\left( \rho,\nu
\right).$$ By Pinsker's inequality (see, e.g. \cite{Boucheron13}) 
$$d_{TV}\left( \rho ,\nu \right) \leq \sqrt{2KL\left( \rho ,\nu \right) }.$$

The $W_{p}$-Wasserstein distance is $
W_{p}(\rho,\nu) =( \inf_{W}\mathbb{E}_{(x,y)
\sim W}[\Vert x-y\Vert^{p}]) ^{1/p}$ with the
infimum being over all probability measures on $\mathcal{P}( \mathcal{H} \times \mathcal{H}) $ with $\rho $ and $\nu $
as marginals. We will use the
following fact: Since $W_{1}\leq W_{2}$ it follows from the
Kantorovich-Rubinstein Theorem \citep{villani2009optimal}, that for any
real Lipschitz function $f$ on $\mathcal{H}$ and probability measures $\nu
_{1},\nu _{2}\in \mathcal{P}\left( \mathcal{H}\right) $%
\begin{equation*}
\left\vert \mathbb{E}_{h\sim \nu _{1}}\left[ f\left( h\right) \right] -%
\mathbb{E}_{h\sim \nu _{2}}\left[ f\left( h\right) \right] \right\vert \leq
\left\Vert f\right\Vert _{\rm Lip}W_{1}\left( \nu _{1},\nu _{2}\right) \leq
\left\Vert f\right\Vert _{\rm Lip}W_{2}\left( \nu _{1},\nu _{2}\right) ,
\end{equation*}%
where $\left\Vert .\right\Vert _{\rm Lip}$ is the Lipschitz-seminorm, $\left\Vert f\right\Vert _{Lip}=\inf \left\{ s:f\left( h\right) -f\left(
g\right) \leq s\left\Vert h-g\right\Vert \text{ for all }h,g\in \mathbb{R}%
^{d}\right\}$. If $\nu$ satisfies an LSI with constant $\alpha$ as in (\ref{log sobolev}), then  \cite%
{otto2000generalization} $$W_{p}(\rho,\nu) \le \frac{2}{\alpha} KL(\rho,\nu).$$  

Restatement of Lemma \ref{lemma delta}.
\begin{lemma}

Denote%
\begin{equation*}
\Delta :=\int_{0}^{\beta }\mathbb{E}_{h\sim
G_{\gamma }\left( \mathbf{x}\right) }\left[ \hat{L}\left( h,\mathbf{x}%
\right) \right] d\gamma - \Gamma(\nu_0^{K-1},\mathbf{x},\beta_0^{K})
\end{equation*}%
(i) If $\mathbb{E}_{h\sim G_{\beta _{k}}\left( \mathbf{x}\right) }\left[ 
\hat{L}\left( h,\mathbf{x}\right) \right] \leq \mathbb{E}_{h\sim \nu
_{k}\left( \mathbf{x}\right) }\left[ \hat{L}\left( h,\mathbf{x}\right) %
\right] $ for all $k$ and $\mathbf{x}$, then $\Delta\le 0$.%

(ii) If $\ell \left( h,\mathbf{x}\right) $ is bounded in $h$ for all $%
\mathbf{x}$, $\left\Vert \ell \left( h,\mathbf{x}\right) \right\Vert \leq m$
then 
\begin{equation*}
\Delta \leq m\sum_{k=1}^{K}(\beta_{k}-\beta_{k-1})\sqrt{KL\left( \nu _{k-1}\left( \mathbf{x}%
\right) ,G_{\beta _{k-1}}\left( \mathbf{x}\right) \right) /2}.
\end{equation*}

(iii) If instead $\ell \left( h,\mathbf{x}\right) $ is $m$-Lipschitz in $%
h$ for all $\mathbf{x}$, $\ell \left( h,\mathbf{x}\right) -\ell \left( g,%
\mathbf{x}\right) \leq m\left\Vert h-g\right\Vert $ and $G_{\beta
_{k}}\left( \mathbf{x}\right) $ satisfies an LSI with constant $\alpha $ for
all $k$ and $\mathbf{x}$, then 
\begin{equation*}
\Delta \leq \frac{2m}{\alpha}\sum_{k=1}^{K}(\beta_{k}-\beta_{k-1}){KL\left( \nu _{k-1}\left( \mathbf{x}%
\right) ,G_{\beta _{k-1}}\left( \mathbf{x}\right) \right) }.
\end{equation*}
\end{lemma}

\begin{proof}
By the last assertion of Lemma \ref{Lemma Derivatives} the function $\beta \mapsto \mathbb{E}_{g\sim G_{\beta
}(\mathbf{x})}\big[\hat{L}( g,\mathbf{x})\big]$
is non-increasing, so
\begin{eqnarray*}
\Delta  &=&\int_{0}^{\beta }\mathbb{E}_{h\sim G_{\gamma }\left( \mathbf{x}%
\right) }\left[ \hat{L}\left( h,\mathbf{x}\right) \right] d\gamma -\Gamma(\nu_0^{K-1},\mathbf{x},\beta_0^{K}) \\
&\leq& \sum_{k=1}^{K}\left( \beta
_{k}-\beta _{k-1}\right) \mathbb{E}_{h\sim G_{\beta _{k-1}}\left( \mathbf{x}%
\right) }\left[ \hat{L}\left( h,\mathbf{x}\right) \right] -\Gamma(\nu_0^{K-1},\mathbf{x},\beta_0^{K})  \\
&=&\sum_{k=1}^{K}\left( \beta _{k}-\beta _{k-1}\right) \left( \mathbb{E}%
_{h\sim G_{\beta _{k-1}}\left( \mathbf{x}\right) }\left[ \hat{L}\left( h,%
\mathbf{x}\right) \right] -\mathbb{E}_{h\sim \nu _{k-1}\left( \mathbf{x}%
\right) }\left[ \hat{L}\left( h,\mathbf{x}\right) \right] \right) .
\end{eqnarray*}%
Now if $\mathbb{E}_{h\sim G_{\beta _{k-1}}\left( \mathbf{x}\right) }\left[ 
\hat{L}\left( h,\mathbf{x}\right) \right] \leq \mathbb{E}_{h\sim \nu
_{k-1}\left( \mathbf{x}\right) }\left[ \hat{L}\left( h,\mathbf{x}\right) %
\right] $ then (i) is immediate. If $\ell $ is bounded by $m$ then $\left\Vert 
\hat{L}\left( .,\mathbf{x}\right) \right\Vert _{\infty }\leq m$ and 
\begin{equation*}
\left( \mathbb{E}_{h\sim G_{\beta _{k-1}}\left( \mathbf{x}\right) }-\mathbb{E%
}_{h\sim \nu _{k-1}\left( \mathbf{x}\right) }\right) \left[ \hat{L}\left( h,%
\mathbf{x}\right) \right] \leq m\ d_{TV}\left( G_{\beta _{k-1}}\left( \mathbf{x%
}\right) ,\nu _{k-1}\left( \mathbf{x}\right) \right) \leq m\sqrt{%
KL\left( \nu _{k-1},G_{\beta _{k-1}}\right) /2},
\end{equation*}%
by Pinsker's inequality, which gives (ii). If $\ell $ is Lipschitz in the 1st argument with constant $%
m$, then so is $\hat{L}$, and 
\begin{equation*}
\left( \mathbb{E}_{h\sim G_{\beta _{k-1}}\left( \mathbf{x}\right) }-\mathbb{E%
}_{h\sim \nu _{k-1}\left( \mathbf{x}\right) }\right) \left[ \hat{L}\left( h,%
\mathbf{x}\right) \right] \leq m\ W_{2}\left( G_{\beta _{k-1}}\left( \mathbf{x}%
\right) ,\nu _{k-1}\left( \mathbf{x}\right) \right) \leq 2mKL\left(
\nu _{k-1},G_{\beta _{k-1}}\right) /\alpha ,
\end{equation*}%
which gives (ii).
\end{proof}

    \begin{figure}[h!]
    \centering
    \includegraphics{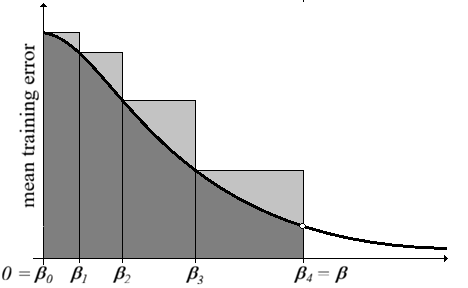}
    \caption{An illustration of the computation of the functional $\Gamma(\nu_0^{K-1},\mathbf{x},\beta_0^{K})$.}
    \label{Gamma plot}
    \end{figure}

Restatement of Theorem \ref{Theorem BV approximations}:
\begin{theorem}
Let $F:\mathcal{H}\times \mathcal{X}^{n}\rightarrow \mathbb{R}$ be some
measurable function and $\beta_0^{K}$ and $\nu_0^{K-1}$ as in Definition \ref{Definition Gamma}. Let $\nu(\mathbf x)$ be any data-dependent distribution on $\mathcal{H}$. Let $\Delta$ be bounded as in Lemma \ref{lemma delta}, depending on which of the conditions is fulfilled by $\ell$. Then

(i) with probability at least $1-\delta $ as $\mathbf{x}\sim \mu ^{n}$ and 
$h\sim \nu ( \mathbf{x}) $
\begin{eqnarray*}
F\left( h,\mathbf{x}\right) \hspace{-.15truecm} &\leq \hspace{-.2truecm}&-\beta \hat{L}\left( h,\mathbf{x}\right)
+\Gamma(\nu_0^{K-1},\mathbf{x},\beta_0^{K}) +\ln \mathbb{%
E}_{\mathbf{x}}\mathbb{E}_{h\sim \pi }\left[ e^{F\left( h,\mathbf{x}\right) }%
\right]  + \ln \frac{1}{\delta}\\
&&+R_{\infty }\left( \nu\left( \mathbf{x}\right) ,G_{\beta }\left( 
\mathbf{x}\right) \right) +\Delta .
\end{eqnarray*}
If $F$ and $\ell$ are bounded, then  $R_{\infty }\left( \nu \left( \mathbf{x}\right) ,G_{\beta }\left( 
\mathbf{x}\right) \right)$ can be replaced by 
$$\max\left\{0,\beta 
  \left\Vert \ell\right\Vert _{\infty}+\left\Vert F\right\Vert _{\infty}+
\ln \sqrt{2
  KL\left(
   \nu\left( \mathbf{x}\right) ,G_{\beta }
    \left(\mathbf{x}\right)
  \right)} \right\}.$$

(ii) with probability at least $1-\delta $ as $\mathbf{x}\sim \mu ^{n}$ 
\begin{eqnarray*}
\mathbb{E}_{\nu\left( \mathbf{x}\right)}\!\left[ F( h,\mathbf{x}) \right]  &\hspace{-.55truecm}\leq \hspace{-.5truecm}&\hspace{-.1truecm}-\beta \mathbb{E}_{\nu\left( \mathbf{x}\right) }\big[ \hat{L}\left( h,\mathbf{x}\right) \big]+\Gamma(\nu_0^{K-1},\mathbf{x},\beta_0^{K}) +\ln \mathbb{%
E}_{\mathbf{x}}\mathbb{E}_{h\sim \pi }\left[ e^{F\left( h,\mathbf{x}\right) }%
\right]  + \ln \frac{1}{\delta}\\
&&+KL\left( \nu\left( \mathbf{x}\right) ,G_{\beta }\left( \mathbf{x%
}\right) \right) +\Delta .
\end{eqnarray*} 

\end{theorem}

\begin{proof}

Proof of (ii). By equation (\ref{rbf}) and Lemma \ref{lemma delta} we have%
\begin{eqnarray*}
KL\left( \nu \left( \mathbf{x}\right) ,\pi \right)  &\leq &KL\left(
\nu\left( \mathbf{x}\right) ,G_{\beta }\left( \mathbf{x}\right)
\right) -\beta \mathbb{E}_{h\sim \nu\left( \mathbf{x}\right) }%
\left[ \hat{L}\left( h,\mathbf{x}\right) \right] +\int_{0}^{\beta }\mathbb{E}%
_{h\sim G_{\gamma }\left( \mathbf{x}\right) }d\gamma  \\
&\leq &KL\left( \nu\left( \mathbf{x}\right) ,G_{\beta }\left( 
\mathbf{x}\right) \right) -\beta \mathbb{E}_{h\sim \nu \left( 
\mathbf{x}\right) }\left[ \hat{L}\left( h,\mathbf{x}\right) \right] +\Gamma(\nu_0^{K-1},\mathbf{x},\beta_0^{K}) +\Delta .
\end{eqnarray*}%
Substitution in (ii) of the PAC-Bayesian theorem proves (ii).

Proof of (i) without amendment. Similarly, we have from Lemma \ref{lemma delta}
\begin{eqnarray*}
\ln \frac{d\nu \left( \mathbf{x}\right) }{d\pi }\left( h\right) 
&=&\ln \frac{d\nu \left( \mathbf{x}\right) }{dG_{\beta }\left( 
\mathbf{x}\right) }\left( h\right) +\ln \frac{dG_{\beta }\left( \mathbf{x}%
\right) }{d\pi }\left( h\right)  \\
&\leq &R_{\infty }\left( \nu\left( \mathbf{x}\right) ,G_{\beta
}\left( \mathbf{x}\right) \right) -\beta \hat{L}\left( h,\mathbf{x}\right)
+\int_{0}^{\beta }\mathbb{E}_{h\sim G_{\gamma }\left( \mathbf{x}\right)
}d\gamma  \\
&=&R_{\infty }\left( \nu \left( \mathbf{x}\right) ,G_{\beta }\left( 
\mathbf{x}\right) \right) -\beta \hat{L}\left( h,\mathbf{x}\right) +\Gamma(\nu_0^{K-1},\mathbf{x},\beta_0^{K}) +\Delta .
\end{eqnarray*}%
Then substitute in the PAC-Bayesian theorem (i).

For the proof of the amendment to (i), we have to return to the proof of the PAC-Bayesian theorem. From Markov's inequality, we have (in analogy to the proof of Theorem \ref{theorem pac bayes}) with probability at least $1-\delta $ as $x\sim \mu ^{n}$
and $h\sim \nu \left( \mathbf{x}\right) $, that%
\begin{eqnarray*}
&&F\left( h,\mathbf{x}\right) +\beta \hat{L}\left( h,\mathbf{x}\right) +\ln
Z_{\beta }\left( \mathbf{x}\right)  \\
&\leq &\ln \mathbb{E}_{\mathbf{x}}\mathbb{E}_{h\sim \nu \left( 
\mathbf{x}\right) }\left[ e^{F\left( h,\mathbf{x}\right) +\beta \hat{L}%
\left( h,\mathbf{x}\right) +\ln Z_{\beta }\left( \mathbf{x}\right) }\right]
+\ln \left( 1/\delta \right)  \\
&\leq &\ln \left( \mathbb{E}_{\mathbf{x}}\mathbb{E}_{h\sim G_{\beta }\left( 
\mathbf{x}\right) }\left[ e^{F\left( h,\mathbf{x}\right) +\beta \hat{L}%
\left( h,\mathbf{x}\right) +\ln Z_{\beta }\left( \mathbf{x}\right) }\right]
+e^{\beta 
  \left\Vert \ell\right\Vert _{\infty}+\left\Vert F\right\Vert _{\infty}}d_{TV}\left( \nu\left( \mathbf{x}\right) ,G_{\beta }\left( 
\mathbf{x}\right) \right)\right) +\ln \left( 1/\delta \right)  \\
&=&\ln \left( \max \left\{ \mathbb{E}_{\mathbf{x}}\mathbb{E}_{h\sim \pi }%
\left[ e^{F\left( h,\mathbf{x}\right) }\right] ,1\right\} +\max \left\{
e^{\beta 
  \left\Vert \ell\right\Vert _{\infty}+\left\Vert F\right\Vert _{\infty}}d_{TV}\left( \nu \left( \mathbf{x}\right) ,G_{\beta }\left( 
\mathbf{x}\right) \right),1\right\} \right) +\ln \left( 1/\delta
\right)  \\
&\leq &\max \left\{ \ln \mathbb{E}_{\mathbf{x}}\mathbb{E}_{h\sim \pi }\left[
e^{F\left( h,\mathbf{x}\right) }\right] ,0\right\} +\max \left\{ \beta 
  \left\Vert \ell\right\Vert _{\infty}+\left\Vert F\right\Vert _{\infty}+\ln \left(2d_{TV}\left( \nu \left( \mathbf{x}\right) ,G_{\beta }\left( 
\mathbf{x}\right) \right)\right),0\right\} \\
&&\ \ \ +\ln \left( 1/\delta \right) .
\end{eqnarray*}%
In the second inequality we used $\ln Z_{\beta }\left( \mathbf{x}\right)
\leq 0$ and the property of the total variation metric. In the next inequality we used Lemma \ref{Lemma Derivatives}, and in the last line we used for $a,b\geq 1$ that $\ln \left(
a+b\right) \leq \ln \max \left\{ a,b\right\} +\ln 2\leq \ln a+\ln b+\ln 2$.
Subtract $\beta \hat{L}\left( h,\mathbf{x}\right) +\ln Z_{\beta }\left( 
\mathbf{x}\right) $, use Lemma \ref{Lemma Derivatives}, Lemma \ref{lemma delta}, and Pinsker's inequality to bound the total variation distance.

\end{proof}

\subsection{Miscellaneous Lemmata\label{Section miscellaneous lemmata}%
\protect\bigskip}

\begin{lemma}
\label{Lemma KLG2G}For $0<\beta <\infty $%
\begin{equation*}
\max \left\{ KL\left( G_{\beta },G_{2\beta }\right) ,KL\left( G_{2\beta
},G_{2\beta}\right) \right\} \leq \beta \left( \mathbb{E}_{h\sim G_{\beta }}\left[
\hat{L}\left( h\right) \right] -\mathbb{E}_{h\sim G_{2\beta }}\left[ \hat{L}%
\left( h\right) \right] \right)
\end{equation*}%
\end{lemma}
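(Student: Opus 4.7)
The plan is to write both KL divergences in closed form using the Radon--Nikodym derivatives of the Gibbs posteriors with respect to the prior, then to exploit monotonicity of $\gamma \mapsto \mathbb{E}_{h\sim G_\gamma}[\hat{L}(h)]$ already established in Lemma \ref{Lemma Derivatives} (the second derivative of $A(\gamma)=-\ln Z_\gamma$ is nonpositive, hence $A'$ is non-increasing). I assume the second term in the statement is $KL(G_{2\beta},G_\beta)$.

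First I would compute, using $dG_\gamma/d\pi \propto e^{-\gamma \hat{L}(h)}/Z_\gamma$,
\begin{equation*}
KL(G_\beta,G_{2\beta}) = \ln\frac{Z_{2\beta}}{Z_\beta} + \beta\,\mathbb{E}_{h\sim G_\beta}[\hat{L}(h)], \qquad KL(G_{2\beta},G_\beta) = \ln\frac{Z_\beta}{Z_{2\beta}} - \beta\,\mathbb{E}_{h\sim G_{2\beta}}[\hat{L}(h)].
\end{equation*}
Then I would express the log-ratio of partition functions via the integral representation of Lemma \ref{Lemma Derivatives}:
\begin{equation*}
\ln\frac{Z_\beta}{Z_{2\beta}} = \int_\beta^{2\beta} \mathbb{E}_{h\sim G_\gamma}[\hat{L}(h)]\,d\gamma.
\end{equation*}

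Next I would sandwich this integral using the monotonicity of the integrand. Since $\gamma \mapsto \mathbb{E}_{h\sim G_\gamma}[\hat{L}(h)]$ is non-increasing, on $[\beta,2\beta]$ it lies between $\mathbb{E}_{h\sim G_{2\beta}}[\hat{L}]$ and $\mathbb{E}_{h\sim G_\beta}[\hat{L}]$, so
\begin{equation*}
\beta\,\mathbb{E}_{h\sim G_{2\beta}}[\hat{L}(h)] \;\leq\; \ln\frac{Z_\beta}{Z_{2\beta}} \;\leq\; \beta\,\mathbb{E}_{h\sim G_\beta}[\hat{L}(h)].
\end{equation*}
Substituting the upper bound into the expression for $KL(G_{2\beta},G_\beta)$ and the lower bound (with reversed sign) into the one for $KL(G_\beta,G_{2\beta})$ yields in both cases the common upper bound $\beta(\mathbb{E}_{G_\beta}[\hat{L}] - \mathbb{E}_{G_{2\beta}}[\hat{L}])$, which is exactly what is claimed.

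The argument is essentially mechanical once Lemma \ref{Lemma Derivatives} is in hand; no step poses a real obstacle. The only mild subtlety is keeping the signs straight and noting that the nonnegativity of both KL divergences (which forces the RHS to be nonnegative) is automatically consistent with the fact that $\mathbb{E}_{G_\beta}[\hat{L}] \geq \mathbb{E}_{G_{2\beta}}[\hat{L}]$ by the same monotonicity. The proof uses nothing beyond the closed-form densities and the already-established convexity of $-\ln Z_\gamma$.
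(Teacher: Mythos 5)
Your proof is correct and follows essentially the same route as the paper's: express both KL divergences via the Gibbs densities relative to the prior, rewrite $\ln(Z_\beta/Z_{2\beta})$ as $\int_\beta^{2\beta}\mathbb{E}_{h\sim G_\gamma}[\hat L(h)]\,d\gamma$ using Lemma \ref{Lemma Derivatives}, and bound the integral by monotonicity of the integrand. Your reading of $G_2$ as a typo for $G_\beta$ matches what the paper's own computation actually uses.
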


\begin{proof}
\begin{eqnarray*}
KL\left( G_{\beta },G_{2\beta }\right) &=&\mathbb{E}_{h\sim G_{\beta }}\left[
-\beta \hat{L}\left( h\right) -\ln Z_{\beta }+2\beta \hat{L}\left( h\right)
+\ln Z_{2\beta }\right] \\
&=&\mathbb{E}_{h\sim G_{\beta }}\left[ \beta \hat{L}\left( h\right) \right]
-\int_{\beta }^{2\beta }\mathbb{E}_{h\sim G_{\gamma }}\left[ \hat{L}\left(
h\right) \right] d\gamma \\
&\leq &\beta \left( \mathbb{E}_{h\sim G_{\beta }}\left[ \hat{L}\left(
h\right) \right] -\mathbb{E}_{h\sim G_{2\beta }}\left[ \hat{L}\left(
h\right) \right] \right)
\end{eqnarray*}%
Similarly,
\begin{eqnarray*}
KL\left( G_{2\beta },G_{\beta}\right) &=&\mathbb{E}_{h\sim G_{2\beta }}\left[
-2\beta \hat{L}\left( h\right) -\ln Z_{2\beta }+\beta \hat{L}\left( h\right)
+\ln Z_{\beta }\right] \\
&=&-\mathbb{E}_{h\sim G_{\beta }}\left[ \beta \hat{L}\left( h\right) \right]
+\int_{\beta }^{2\beta }\mathbb{E}_{h\sim G_{\gamma }}\left[ \hat{L}\left(
h\right) \right] d\gamma \\
&\leq &\beta \left( \mathbb{E}_{h\sim G_{\beta }}\left[ \hat{L}\left(
h\right) \right] -\mathbb{E}_{h\sim G_{2\beta }}\left[ \hat{L}\left(
h\right) \right] \right)
\end{eqnarray*}
\end{proof}

\subsection{The Calibration Factor}
\label{calibration factor}
Let $\bar{A}=-\beta \mathbb{E}_{h\sim \nu _{\beta }}\left[ \hat{L}\left( h,%
\mathbf{\bar{x}}\right) \right] +\Gamma \left( \nu _{0}^{K-1},\mathbf{\bar{x}%
},\beta _{0}^{K}\right) $ be the area estimate obtained for the random
labels (corrected by $-\beta \mathbb{E}_{h\sim \nu _{\beta }}\left[ \hat{L}%
\left( h,\mathbf{\bar{x}}\right) \right] $, which should play little role
for large $\beta >n$). Then%
\begin{equation*}
r\left( \mathbf{x}\right) =\min \left\{ r:\forall k\in \left[ K\right] ,%
\text{ }\kappa ^{-1}\left( \mathbb{E}_{h\sim \nu _{\beta _{k}}\left( \mathbf{%
x}\right) }\left[ \hat{L}_{01}\left( h,\mathbf{\bar{x}}\right) \right] ,%
\frac{1}{n}\left( r\bar{A}+\ln \frac{2\sqrt{n}}{\delta }\right) \right) \geq 
\frac{1}{2}\right\},
\end{equation*}
where $\mathbf{\bar{x}}$ is the training set $\mathbf{x}$ with random
labels and $\hat{L}_{01}$ the empirical 01-error. The calibration value $r$
is thus the smallest factor of $\bar A$, for which we obtain a
correct upper bound on the 01-error with random labels for all the $\beta
_{k}$.

\subsection{Calibration Justification}
\label{sec:calibration justification}
Let's assume the Bayesian Linear regression setting, where we are considering a quadratic loss function with an isotropic Gaussian prior. It will be easy to verify that the Gibbs posterior is Gaussian. \citet{pmlr-v75-wibisono18a} in Example 2.1 shows that the stationary distribution of ULA is also Gaussian with the same mean but inflated variance. In this setting, both the KL divergence (between ULA's invariant distribution and the prior) and the integral term derived from Lemma \ref{Lemma Derivatives} can be computed analytically. Crucially, the label dependency appears only in a constant term, which becomes negligible in the low-temperature regime. Therefore, our assumed ratio is theoretically correct. The same argument shows that ULA, due to its inflation in estimating covariance, overestimates the mean training error relative to the correct distribution. 
Furthermore, assuming the ULA approximation bound by \cite{vempala2019rapid} is sharp, similar reasoning demonstrates this ratio is fixed for Generalized Linear Models (GLMs) and the Neural Tangent Kernel (NTK) regime. The more detailed version of this argument will be provided in the extended version of the paper.
 
\section{Experimental Details and Additional Results}

\subsection{Experimental Details}
All the codes to reproduce the results are provided through this \url{https://github.com/erfunmirzaei/Gibbs-Generalization}. For all the experiments, we use an isotropic Gaussian prior with $\mu =0$, for bounded loss with $\sigma = 5$ and for unbounded loss with $\sigma =  0.1$. This induces an L2-regularization term in the energy function that is stated in the proof of Corollary~\ref{Corollary approximations}. The confidence parameter $\delta$ appearing in our bounds is set to $0.01$ for all experiments

We use either standard SGLD or ULA with a constant step size and without additional correction terms. When ULA has been used, we use a step size of 0.01 for both datasets. However, with SGLD, we set the step size to 0.01 for MNIST and 0.005 for CIFAR-10. For both datasets, MNIST and CIFAR-10, we use neural networks with ReLU activation functions.

\subsubsection{Networks Architecture} \label{sec:net_arch}
The fully connected networks consist of one, two, or three hidden layers, each containing a constant number of units. Besides that, we are using the LeNet-5 architecture for MNIST and the VGG16 architecture for CIFAR-10 to achieve low test error. For loss function $\ell $, we are mostly using bounded loss functions such as bounded binary cross-entropy (BBCE) as described in Appendix D of \cite{dziugaite2018data} or the Savage loss \citep{masnadi2008design}. As an unbounded loss function, we tried binary cross-entropy (BCE) (Section \ref{sec:unbounded_loss}), but with a smaller value of $\sigma$, so as
to avoid excessive training errors for small values of $\beta$. 

The LeNet-5 network follows a systematic pattern of alternating convolutional and pooling layers, followed by fully connected layers \citep{lecun2002gradient}. It begins with an input layer that accepts \(32 \times 32\) grayscale images. Thus, we pad our images to fit. 
The first convolutional layer (C1) applies 6 filters of size \(5 \times 5\) to extract low-level features, followed by a \(2 \times 2\) average pooling layer (S2) for spatial downsampling. The second convolutional layer (C3) uses 16 filters of size \(5 \times 5\) to capture more complex feature combinations, followed again by a \(2 \times 2\) average pooling layer (S4). 
A third convolutional layer (C5) with 120 filters of size \(5 \times 5\) acts as a feature extractor, producing 120 feature maps, each of size \(1 \times 1\). The architecture concludes with two fully connected layers: F6 with 84 neurons and a final output layer with 10 neurons for the original digit classification task. However, for our binary classification task, we modify F6 to have 420 neurons and use a single-neuron output layer. Throughout the network, ReLU activation functions replace the original tanh activations, which improves gradient flow and training performance in modern implementations.

VGG-16 is a widely used deep convolutional neural network architecture known for its simplicity and strong performance in image classification tasks \citep{simonyan2014very}. 
The architecture follows a consistent design using only \(3 \times 3\) convolutional filters and \(2 \times 2\) max pooling operations throughout the network. In our implementation, VGG-16 is adapted to handle CIFAR-10's smaller \(32 \times 32\) RGB images.  The network consists of 13 convolutional layers organized into five blocks: the first two blocks contain two convolutional layers each with 64 and 128 filters, respectively, while the last three blocks contain three convolutional layers each with 256, 512, and 512 filters, respectively. 
Each block is followed by a \(2 \times 2\) max pooling layer for spatial downsampling. All convolutional layers employ \(3 \times 3\) kernels with padding to preserve spatial dimensions, and ReLU activation functions introduce non-linearity. The convolutional feature extractor is followed by a classifier head consisting of three fully connected layers: two hidden layers with 1024 neurons each, using ReLU activation, and a final output layer with 1 neuron for binary classification. We also removed dropout to ensure that SGLD minimizes the defined energy function without any additional terms.

For MNIST, the input is a 784-dimensional vector, and the output is a scalar since we perform binary classification between digits 0--4 and 5--9. For CIFAR-10, the input dimension is 3072, and the output is again scalar, corresponding to binary classification between vehicles and animals. For evaluating our models, we are using all 10,000 test examples for both datasets.

\subsubsection{Minibatches} 
When using SGLD, we adopt minibatches of size proportional to $\sqrt{n}$. Thus, for $n=2000$ the minibatch size is 50, and for $n=8000$ it is 100.
\subsubsection{Moving Average Filters} \label{sec:mov_avg}
As we explained in Section \ref{sec:ergodic_mean}, we are using a running mean $\mathbb{M}(x_1, \cdots, x_t)$ of $\hat{L}(h_j, \mathbf{x})$ from $j = 1, \cdots, t$ both as a criterion to stop the experiment and an estimation for $\mathbb{E}_{h\sim G_{\beta _{k}}}\left[ \hat{L}\left( h,\mathbf{x}\right) %
\right] $. We define the running mean recursively in one of two ways:

\begin{align*}
\mathbb{M}_t &= \tfrac{\alpha}{2} \hat{L}(h_t, \mathbf{x}) 
             + \tfrac{\alpha}{2} \hat{L}(h_{t-1}, \mathbf{x}) 
             + (1-\alpha)\,\mathbb{M}_{t-1}, \\
\mathbb{M}_t &= \alpha \hat{L}(h_t, \mathbf{x}) 
             + (1-\alpha)\,\mathbb{M}_{t-1},
\end{align*}
with $\mathbb{M}_0 = 1$ and small $\alpha$. 
We use the first (symmetric) form in the experiments with ULA, and 
the second (standard exponential moving average) form with SGLD 
for convenience. We set different values of $\alpha$ for the two roles: 
$\alpha = 0.0025$ for the stopping criterion ($\mathbb{M}_{\mathrm{stop}}$) 
and $\alpha = 0.01$ for approximating the ergodic mean ($\mathbb{M}_{\mathrm{erg}}$). 
The stopping rule is triggered when
\[
\mathbb{M}_t - \mathbb{M}_{t-1} \geq \epsilon,
\]
with $\epsilon = 10^{-7}$. 
To avoid premature termination, we impose a minimum of 4000 steps 
before applying this criterion. 
As $\alpha \to 0$ and $t \to \infty$, the quantity 
$\mathbb{M}_t$ converges to the ergodic mean.


\subsection{Experimental Results} \label{sec:exp_res}
\subsubsection{Illustration of Bound Computation} \label{sec:illustration}
In this section, we discuss Figure \ref{MC8k} in the main body in more detail. 
Figure \ref{MC8k_kl} illustrates how our bounds are computed. The sequence of mean
training losses in $\ell $ is used to compute for each $\beta $ the
functional $\Gamma $ and the "KL-Bound", which corresponds to the right-hand side of the inequalities in Corollary \ref{actual bound}. Our bound on the test loss is
then computed by applying the function $\kappa ^{-1}$ to the empirical 0-1 error and to this kl-bound. The graph of "KL(Train, Test)" corresponds to the
left-hand side in Corollary \ref{actual bound}. Remarkably, the close fit of the upper bound on the random labels is achieved by the adjustment of a single calibration parameter.

\begin{figure}[htbp]
  \centering
    \makebox[0.495\linewidth]{Random Labels} \hfill \makebox[0.495\linewidth]{True labels}
  \vspace{1pt}
  \begin{subfigure}[b]{0.425\linewidth} 
    \includegraphics[width=\linewidth]{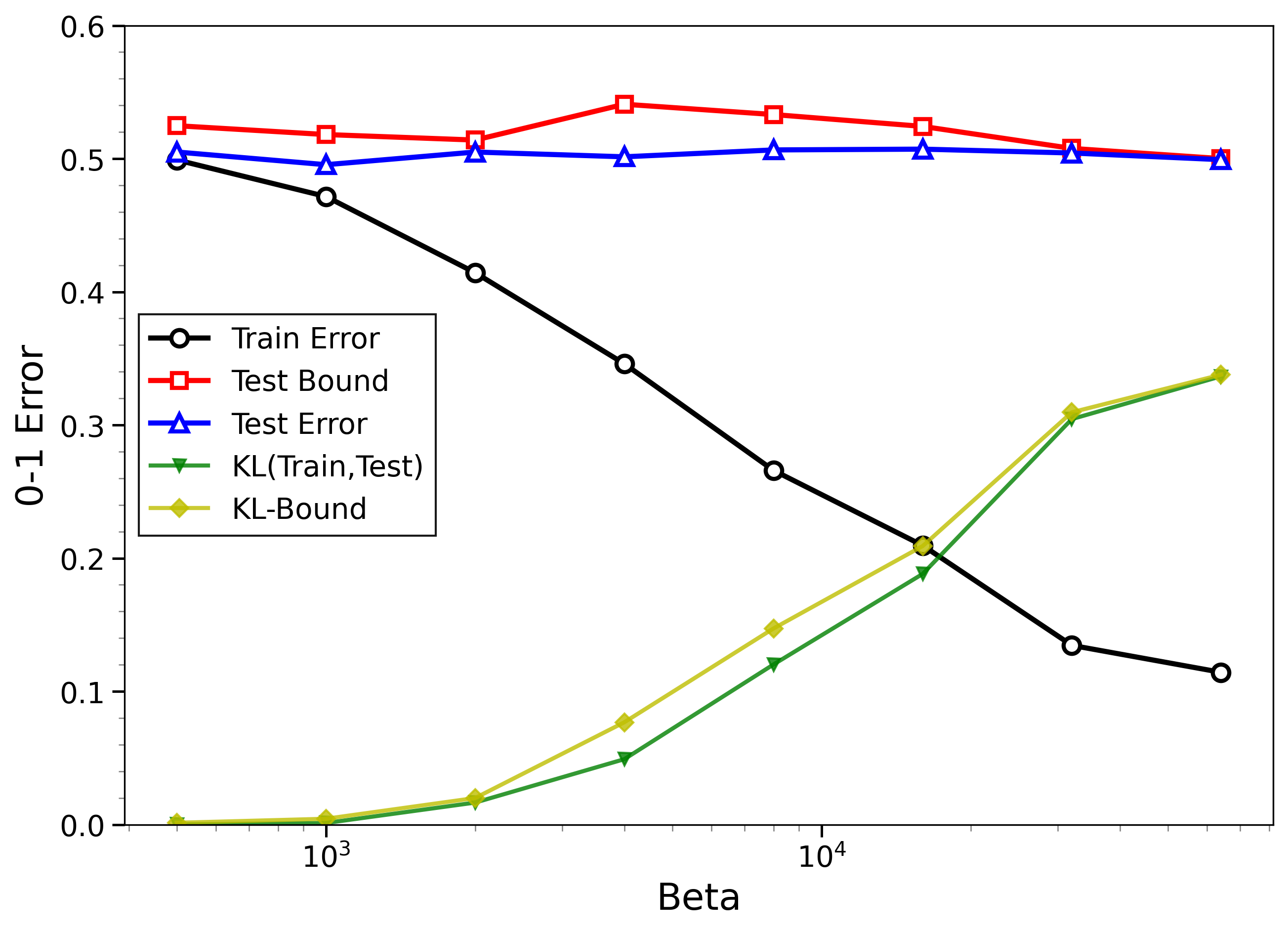}
  \end{subfigure}  
  \begin{subfigure}[b]{0.425\linewidth} 
	\includegraphics[width=\linewidth]{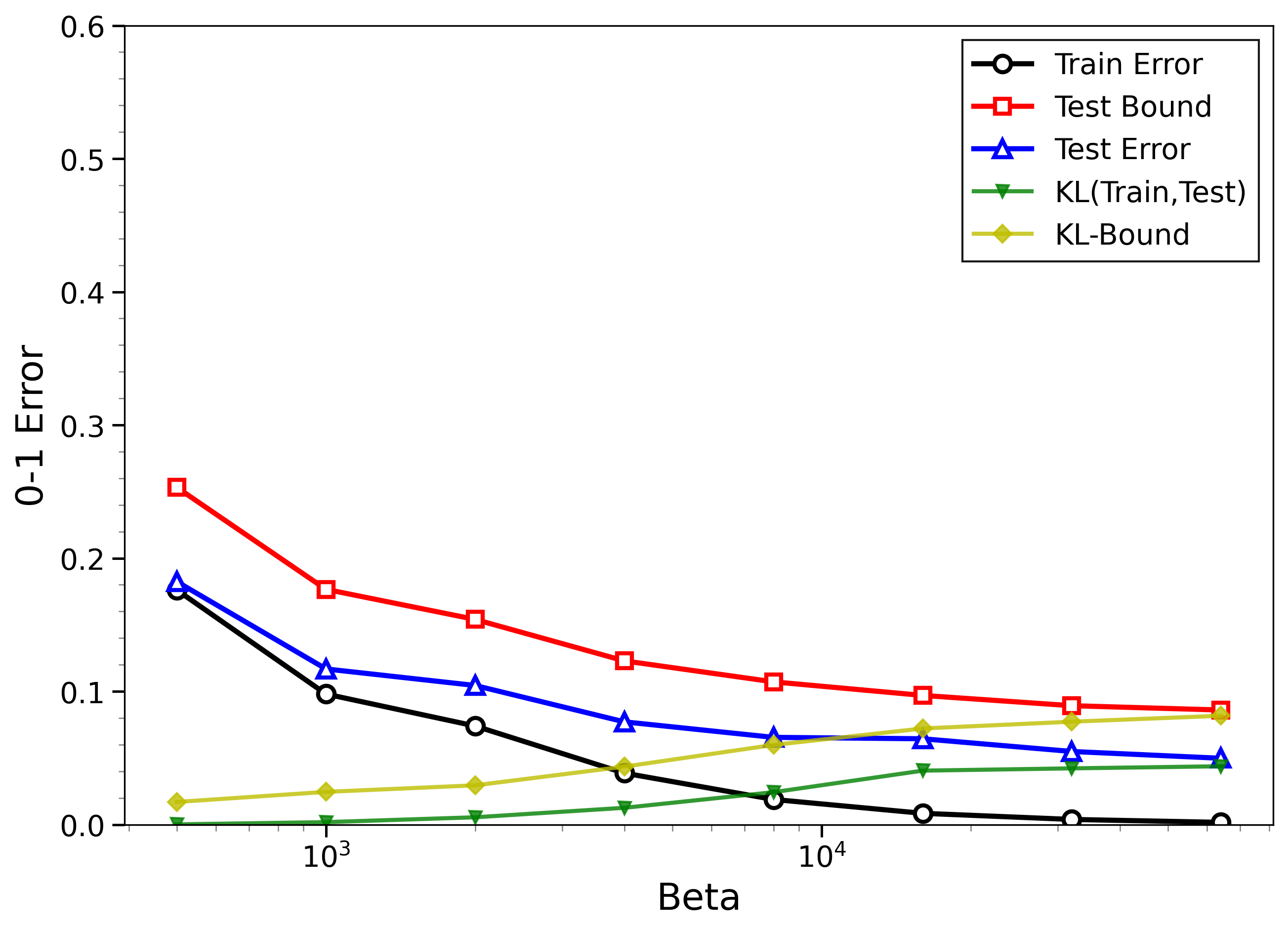}
  \end{subfigure}  
    \begin{subfigure}[b]{0.425\linewidth} 
    \includegraphics[width=\linewidth]{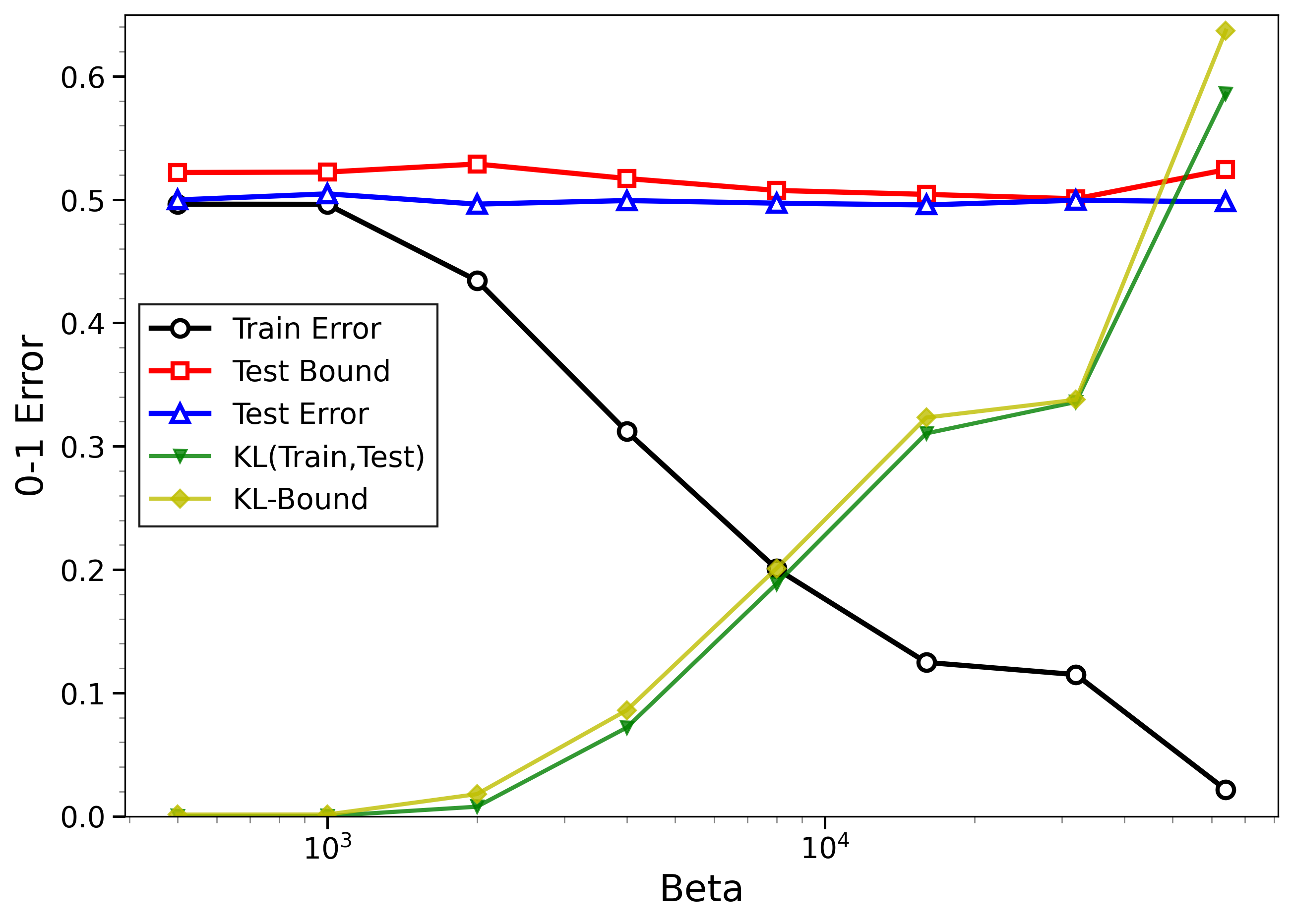}
  \end{subfigure}  
  \begin{subfigure}[b]{0.425\linewidth} 
	\includegraphics[width=\linewidth]{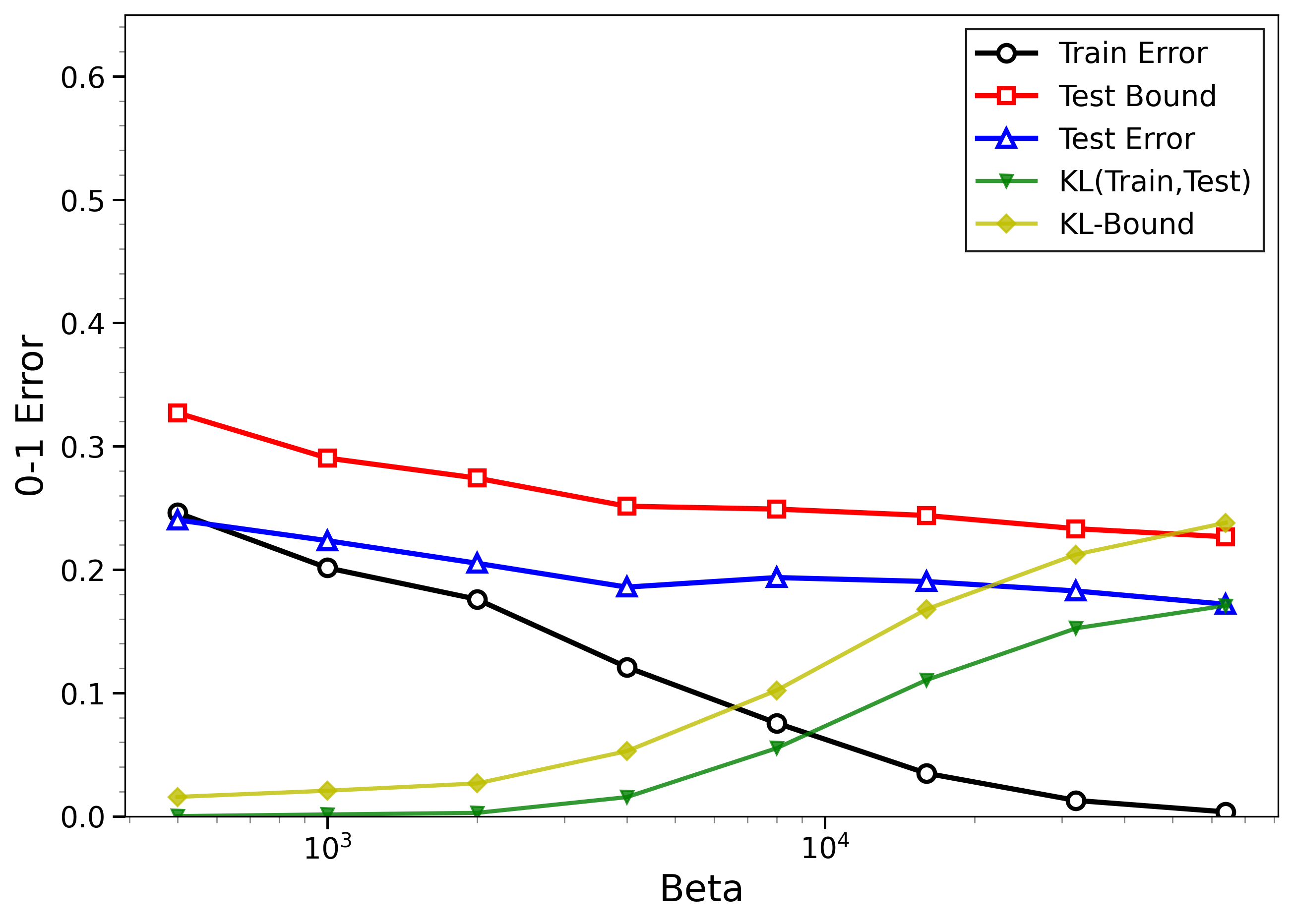}
  \end{subfigure}  
  \caption{A more detailed version of Figure \ref{MC8k} to illustrate how the bounds are computed.}
  \label{MC8k_kl}
\end{figure}
\FloatBarrier
\subsubsection{Single-draws}
For the setting described in Section~\ref{sec:results}, we also present the bounds for the single-draw case in Figure~\ref{MC8k_singledraw}. It is noteworthy that, although the theoretical guarantees for this scenario are rather weak, the empirical bounds behave well. However, as visible in the plots, the results exhibit fluctuations and irregularities caused by stochastic effects, which make them less reliable.
\begin{figure}[htbp]
  \centering
    \makebox[0.495\linewidth]{Random Labels} \hfill \makebox[0.495\linewidth]{True labels}
  \vspace{1pt}
  \begin{subfigure}[b]{0.425\linewidth} 
    \includegraphics[width=\linewidth]{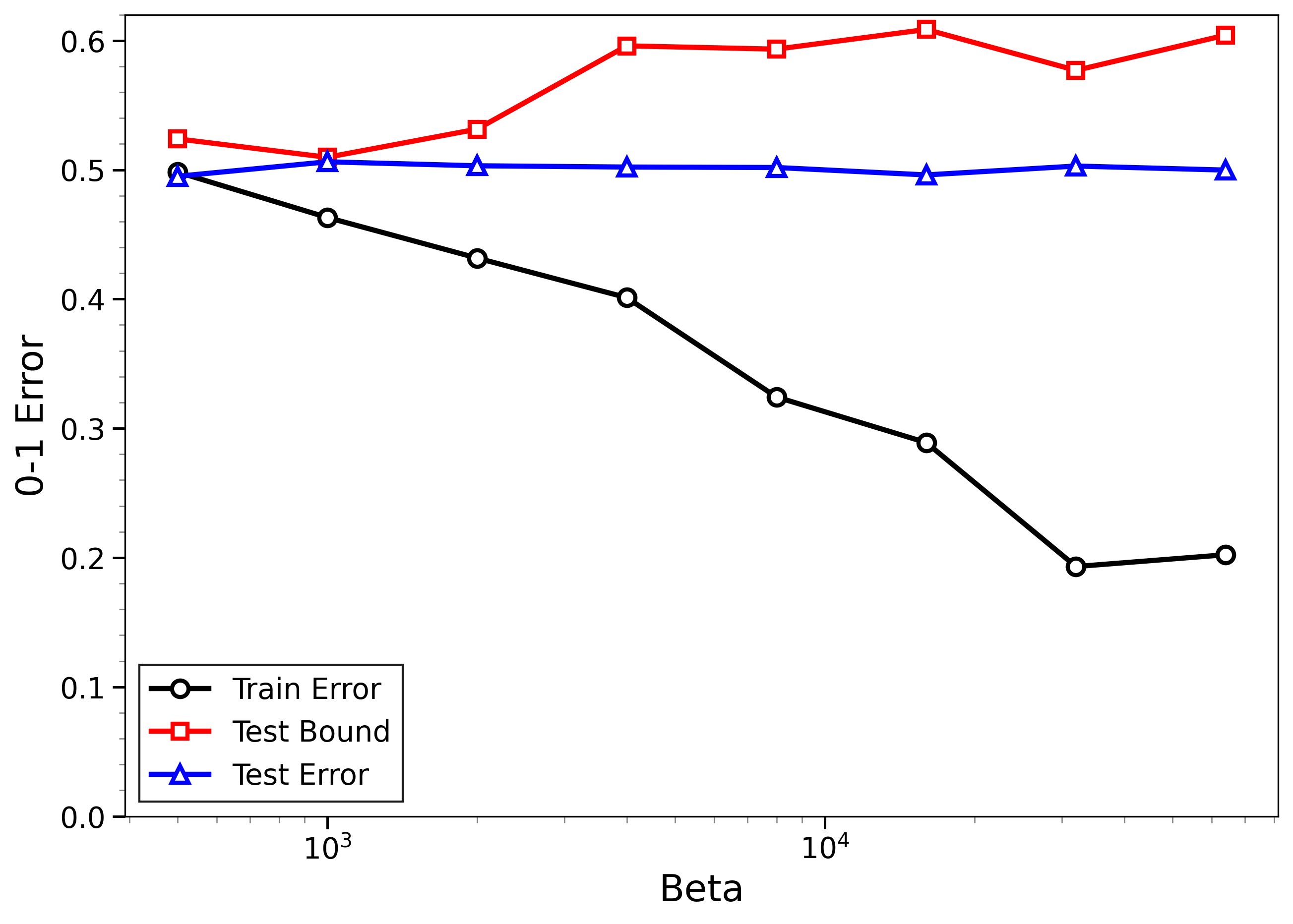}
  \end{subfigure}  
  \begin{subfigure}[b]{0.425\linewidth} 
	\includegraphics[width=\linewidth]{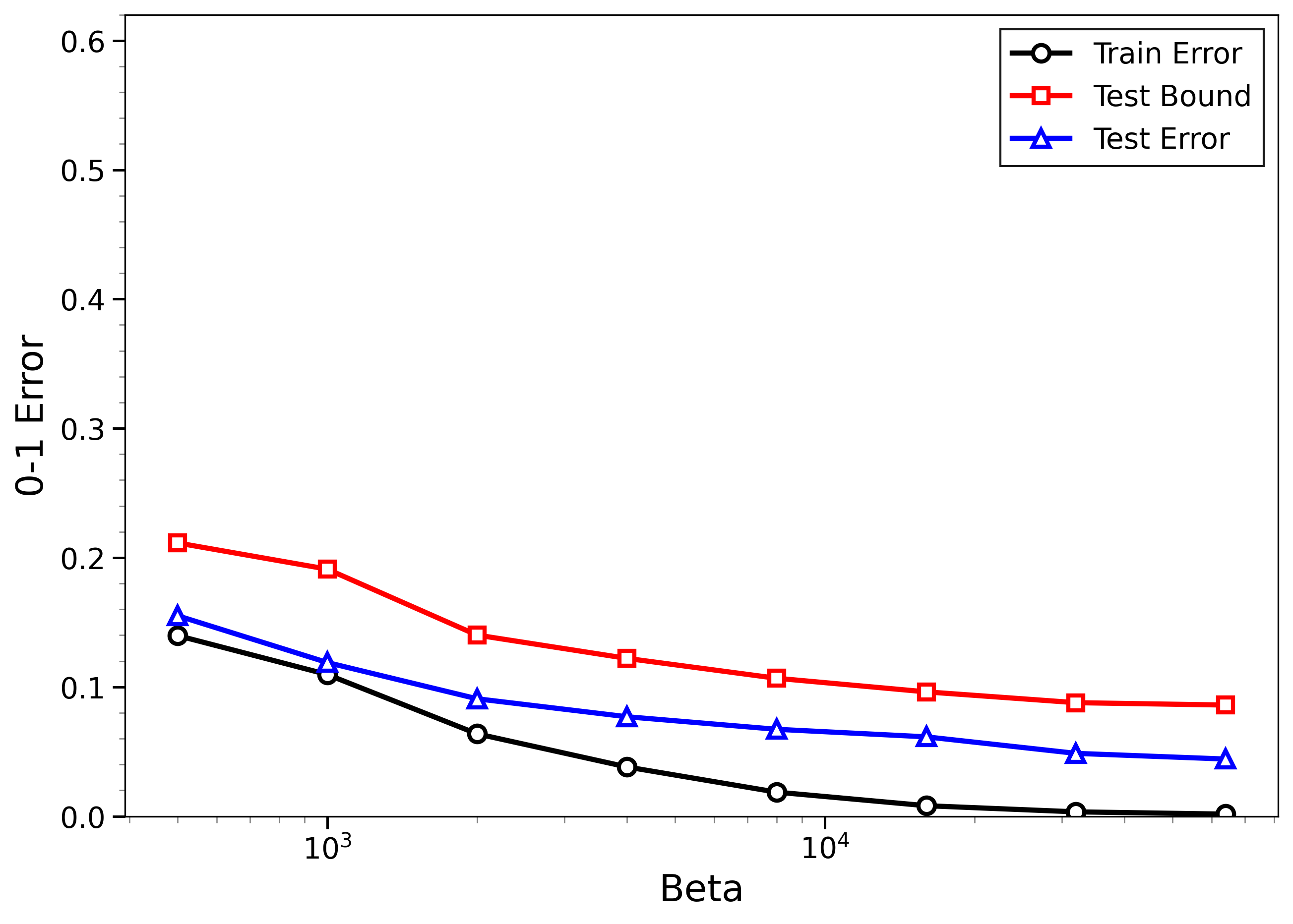}
  \end{subfigure}  
    \begin{subfigure}[b]{0.425\linewidth} 
    \includegraphics[width=\linewidth]{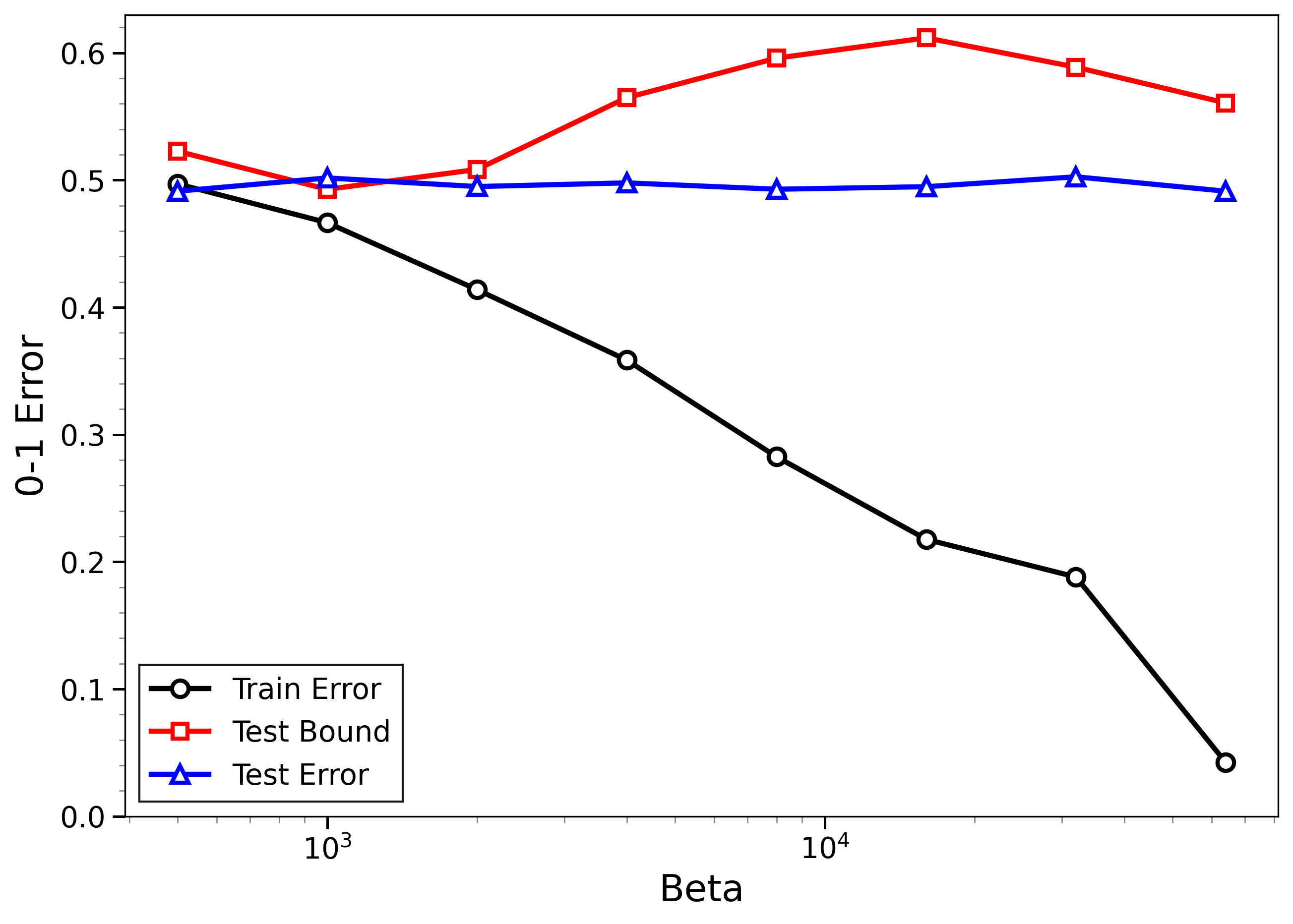}
  \end{subfigure}  
  \begin{subfigure}[b]{0.425\linewidth} 
	\includegraphics[width=\linewidth]{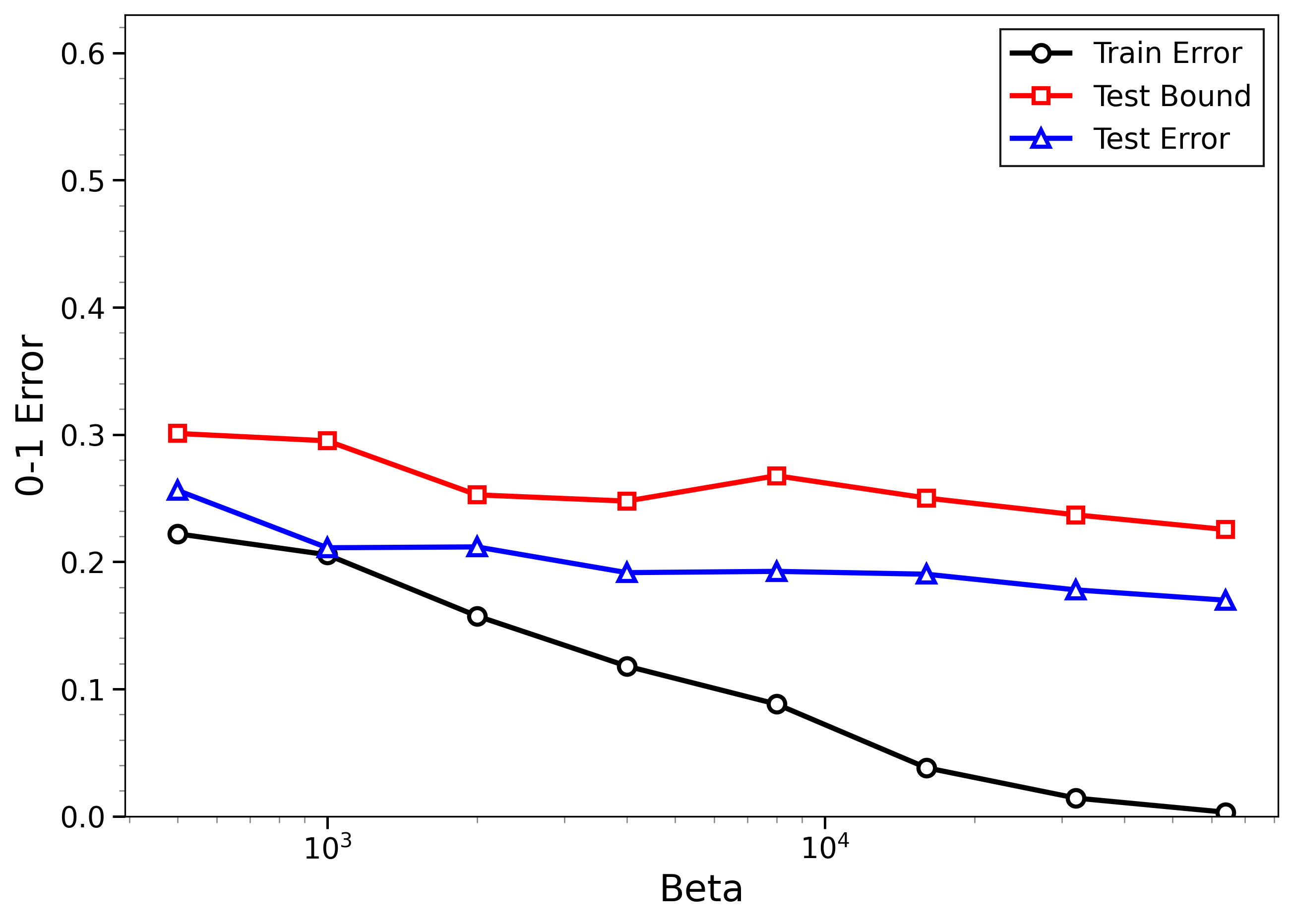}
  \end{subfigure}  
  \caption{SGLD on MNIST and CIFAR-10 with 8000 training examples using the BBCE loss function. The first row corresponds to MNIST and the second row to CIFAR-10. Random labels are shown on the left, true labels on the right. Both random and true labels are trained with exactly the same algorithm and parameters on a fully connected ReLU network with two hidden layers of 1000 (respectively 1500) units. The calibration factor for MNIST is 0.77, for CIFAR-10, 0.89. Train error, test error, and our bound for a single draw of the 0-1 loss are plotted against $\beta$.}
  \label{MC8k_singledraw}
\end{figure}
\FloatBarrier

\subsubsection{Different Architectures}
In this section, we evaluate the performance of different models and architectures on both MNIST and CIFAR-10, demonstrating that our bound can be used to guide model selection. In addition to the two-hidden-layer neural networks described in Section~\ref{MC8k}, we consider fully connected neural networks with three hidden layers, containing 500 and 1000 units for MNIST and CIFAR-10, respectively. Furthermore, we employ the LeNet-5 architecture for MNIST and the VGG-16 for CIFAR-10 to achieve high test accuracy. Detailed descriptions of these architectures are provided in Section~\ref{sec:net_arch}.

Figure~\ref{MC8k_more} demonstrates the robustness of our bound across different models.  We observe that the bounds can be very tight even when the test error is small. For convolutional neural networks, especially on the MNIST dataset, we observe strong performance with the true labels, but relatively poor performance with random labels, despite having more parameters than training examples. This can be explained by the fact that convolutional architectures are still far from being highly overparameterized. For the MNIST dataset, we use fully connected neural networks with two or three hidden layers, containing 1000 or 500 units per layer, respectively. This corresponds to a total of approximately 1,787,000 and 893,000 parameters, resulting in a parameter-to-training-example ratio of roughly 200 and 100, respectively. In contrast, LeNet-5 has around 100,000 parameters, yielding a ratio of approximately 12.5.

\begin{figure}[htbp]
  \centering
    \makebox[0.495\linewidth]{Random Labels} \hfill \makebox[0.495\linewidth]{True labels}
  \vspace{1pt}
  \begin{subfigure}[b]{0.425\linewidth} 
    \includegraphics[width=\linewidth]{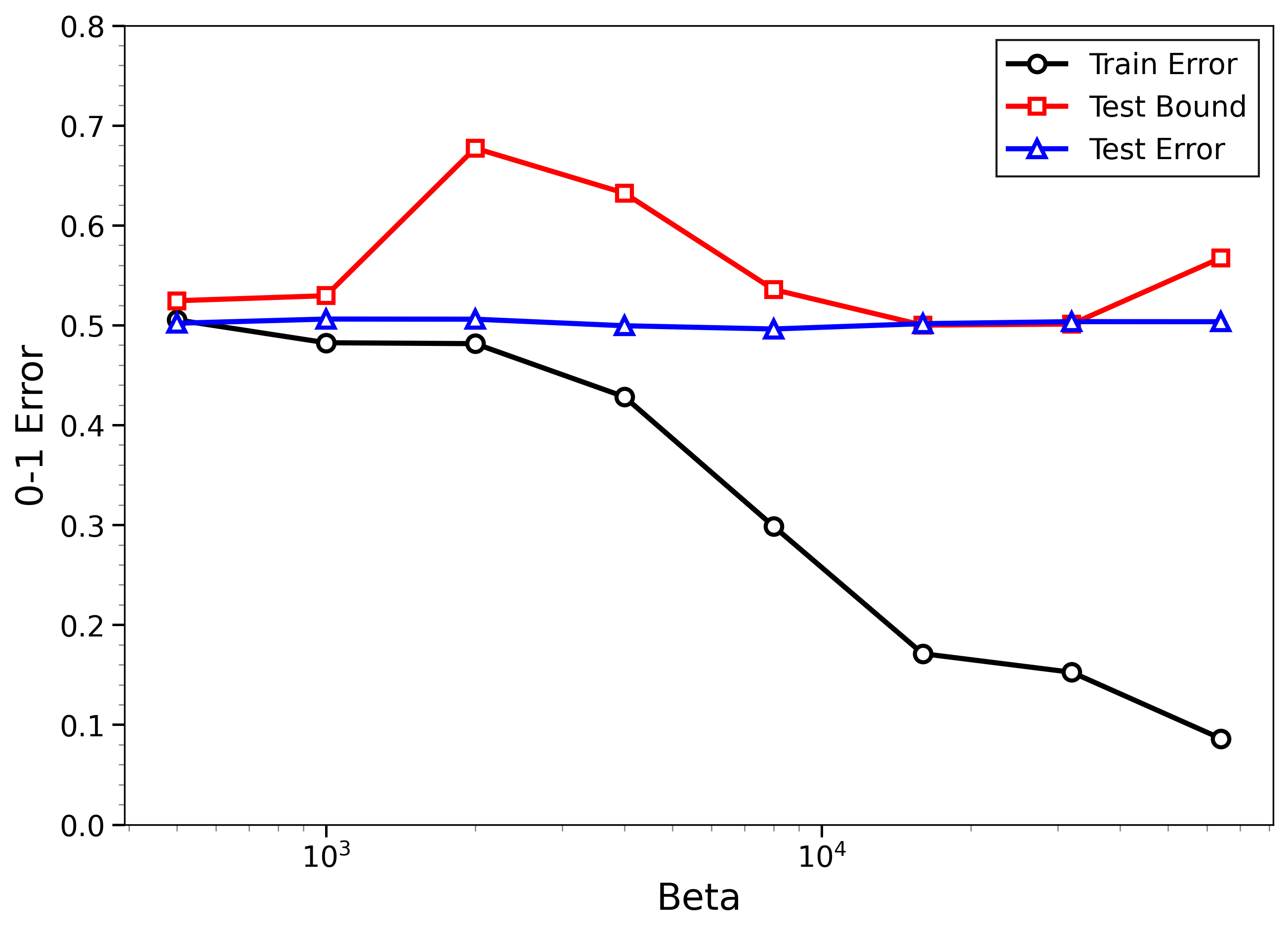}
  \end{subfigure}  
  \begin{subfigure}[b]{0.425\linewidth} 
	\includegraphics[width=\linewidth]{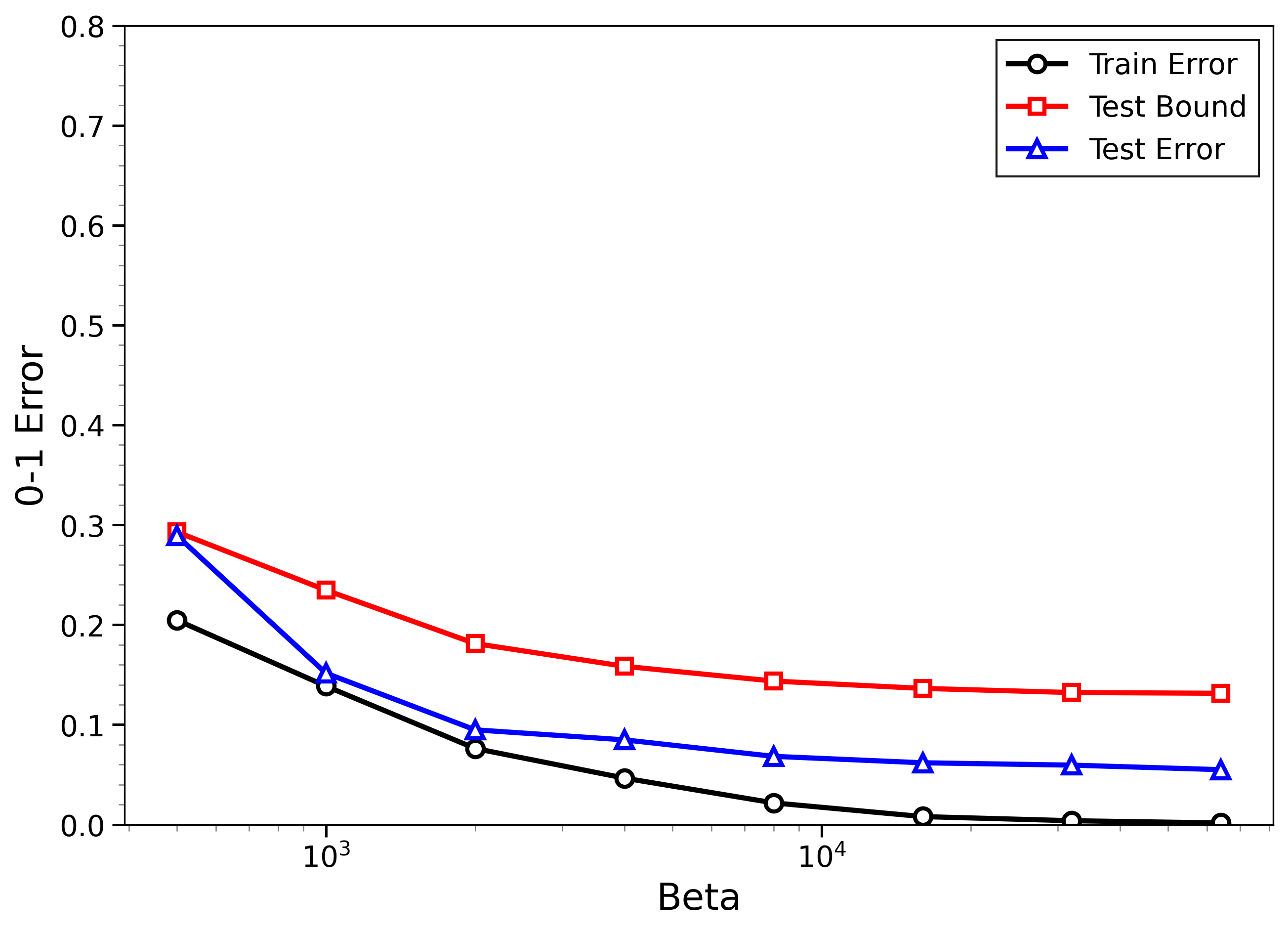}
  \end{subfigure}  
    \begin{subfigure}[b]{0.425\linewidth} 
    \includegraphics[width=\linewidth]{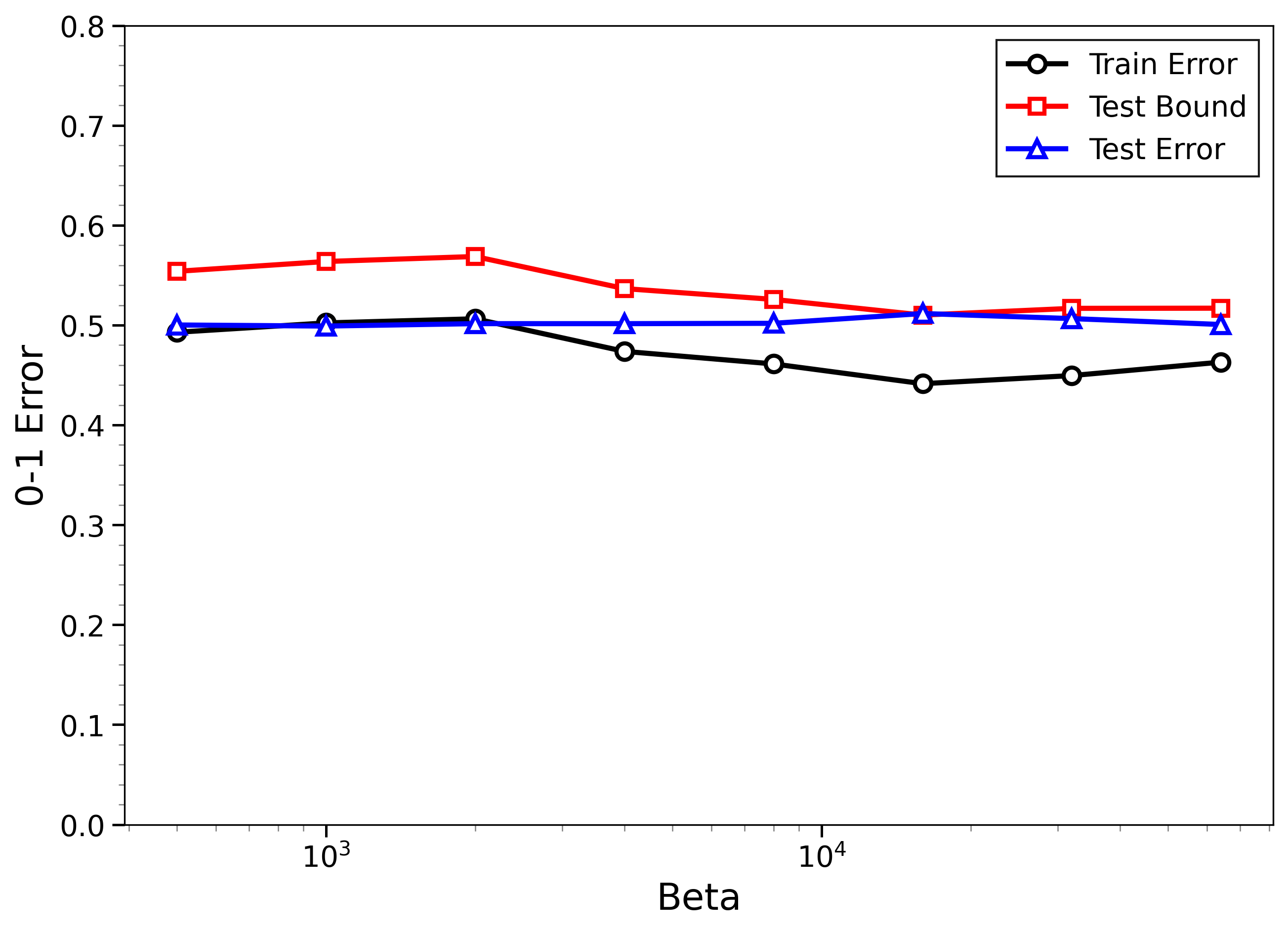}
  \end{subfigure}  
  \begin{subfigure}[b]{0.425\linewidth} 
	\includegraphics[width=\linewidth]{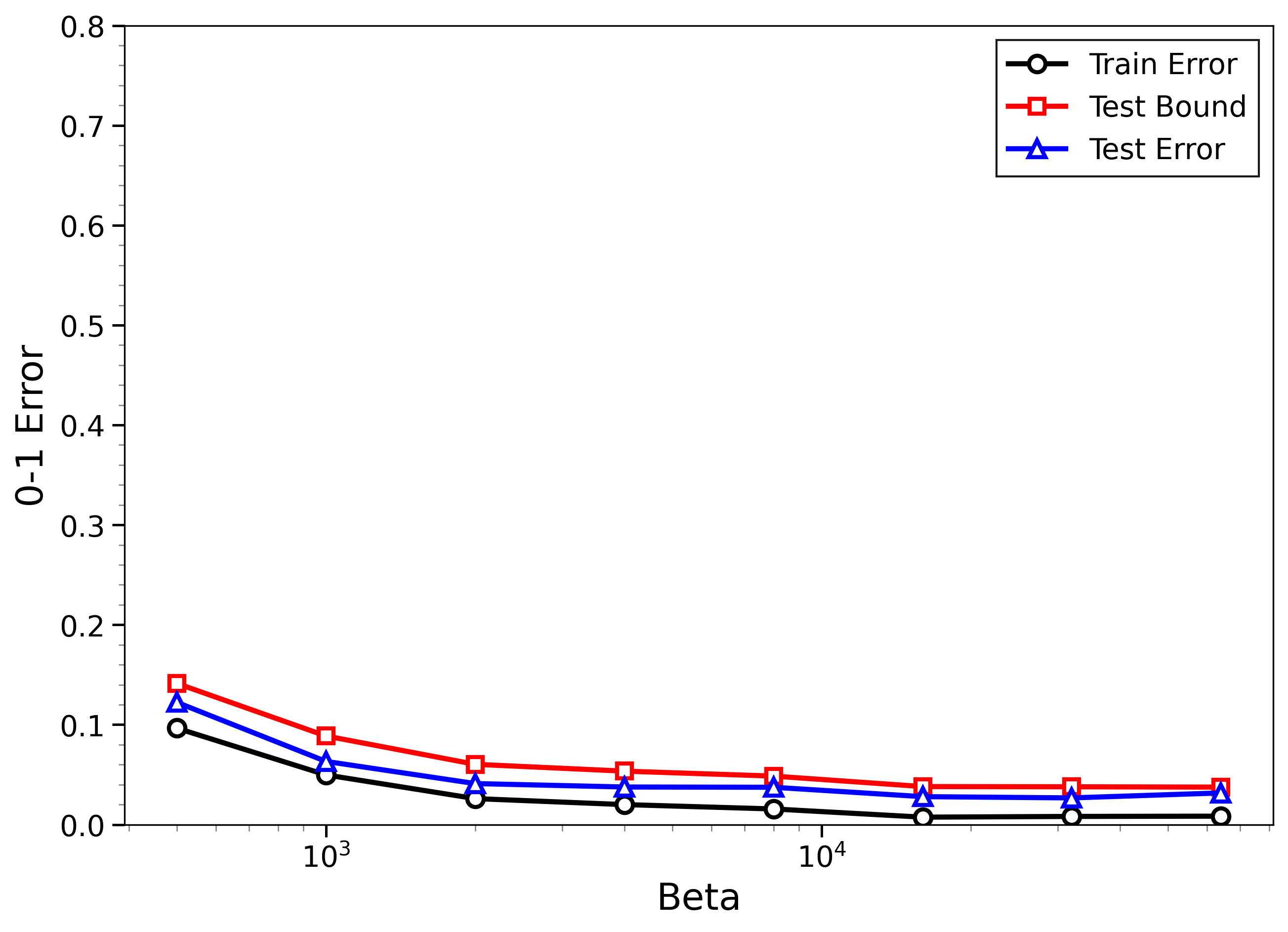}
  \end{subfigure}  
    \begin{subfigure}[b]{0.425\linewidth} 
    \includegraphics[width=\linewidth]{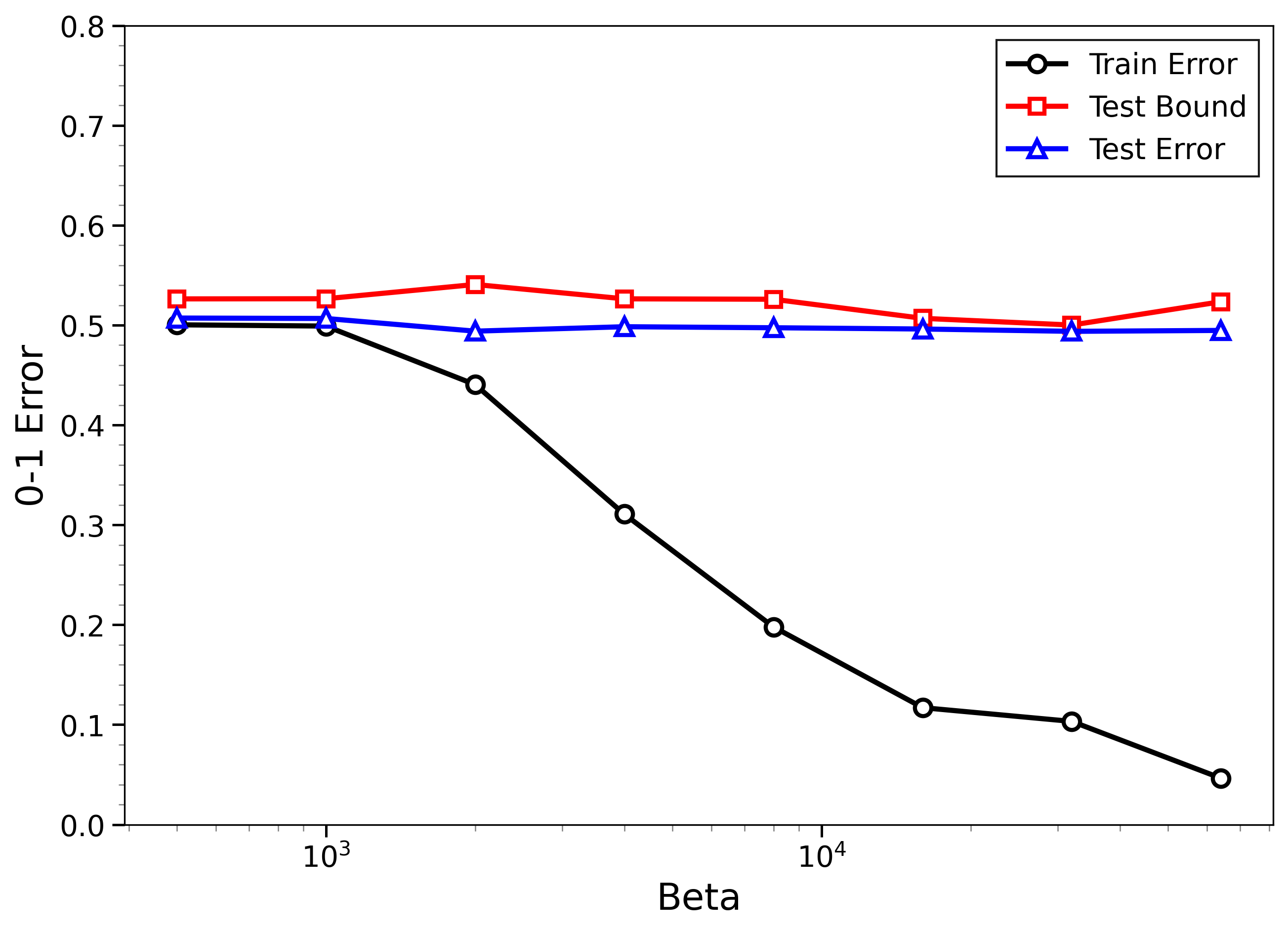}
  \end{subfigure}  
  \begin{subfigure}[b]{0.425\linewidth} 
	\includegraphics[width=\linewidth]{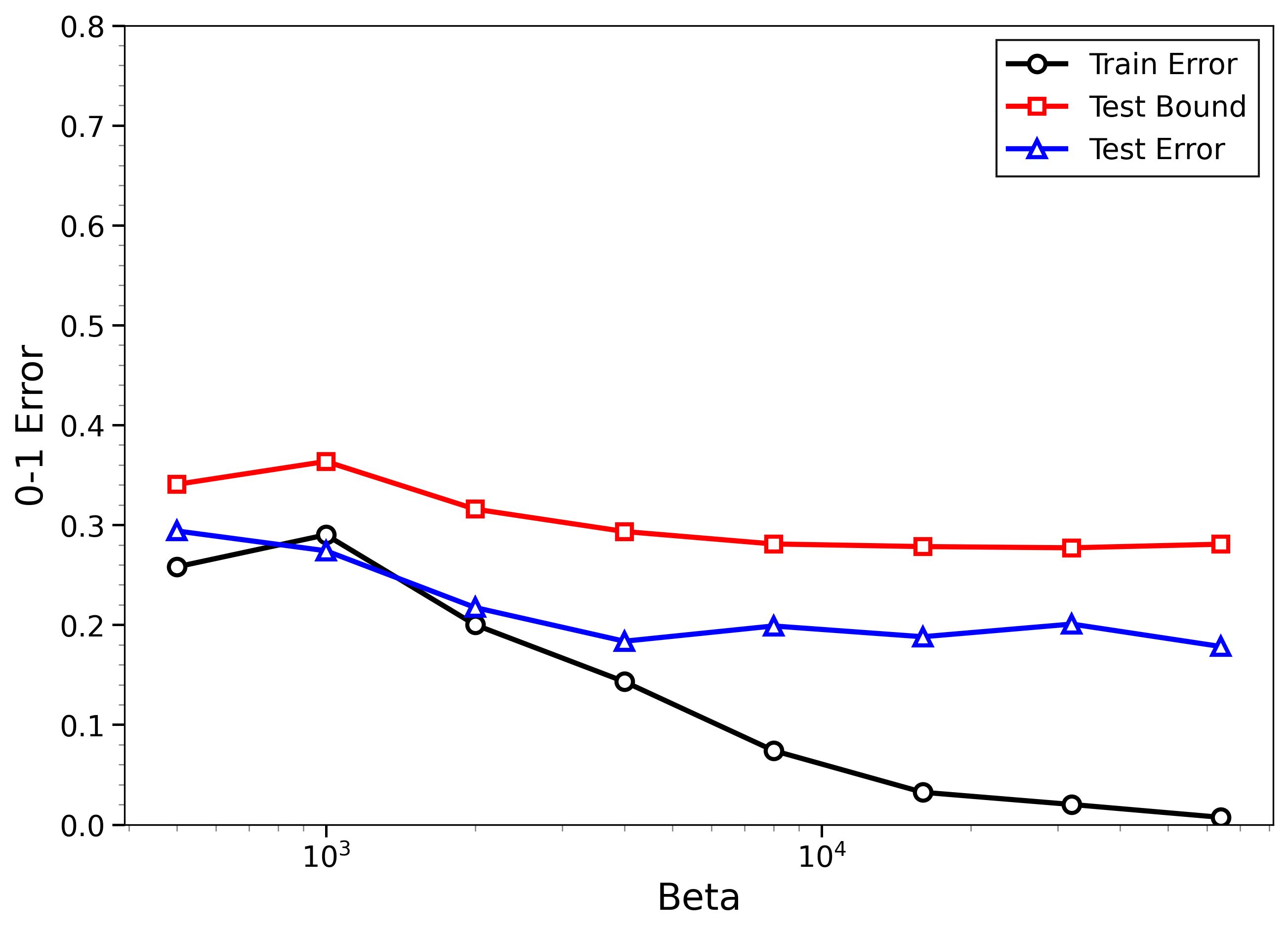}
  \end{subfigure}  
    \begin{subfigure}[b]{0.425\linewidth} 
    \includegraphics[width=\linewidth]{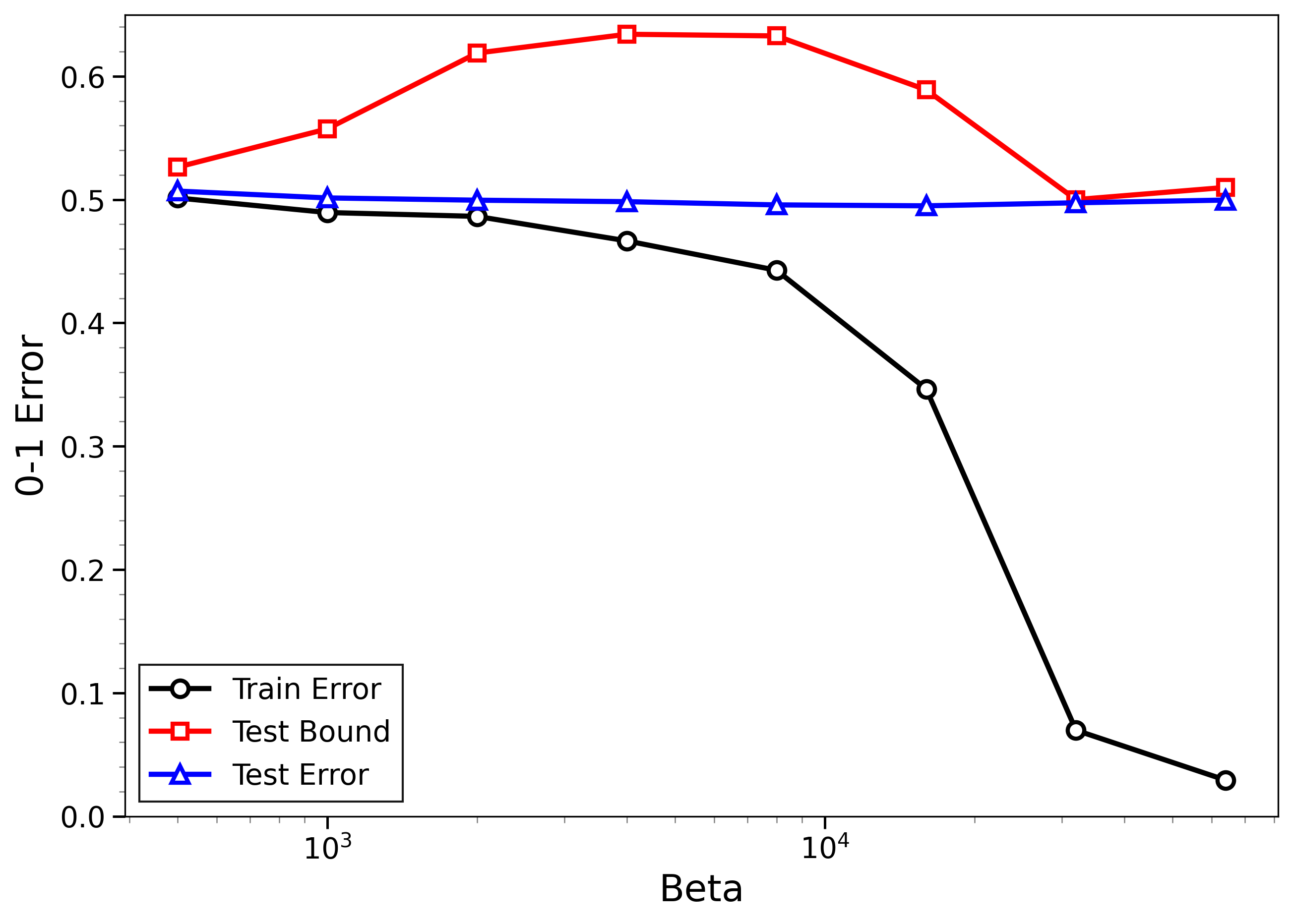}
  \end{subfigure}  
  \begin{subfigure}[b]{0.425\linewidth} 
	\includegraphics[width=\linewidth]{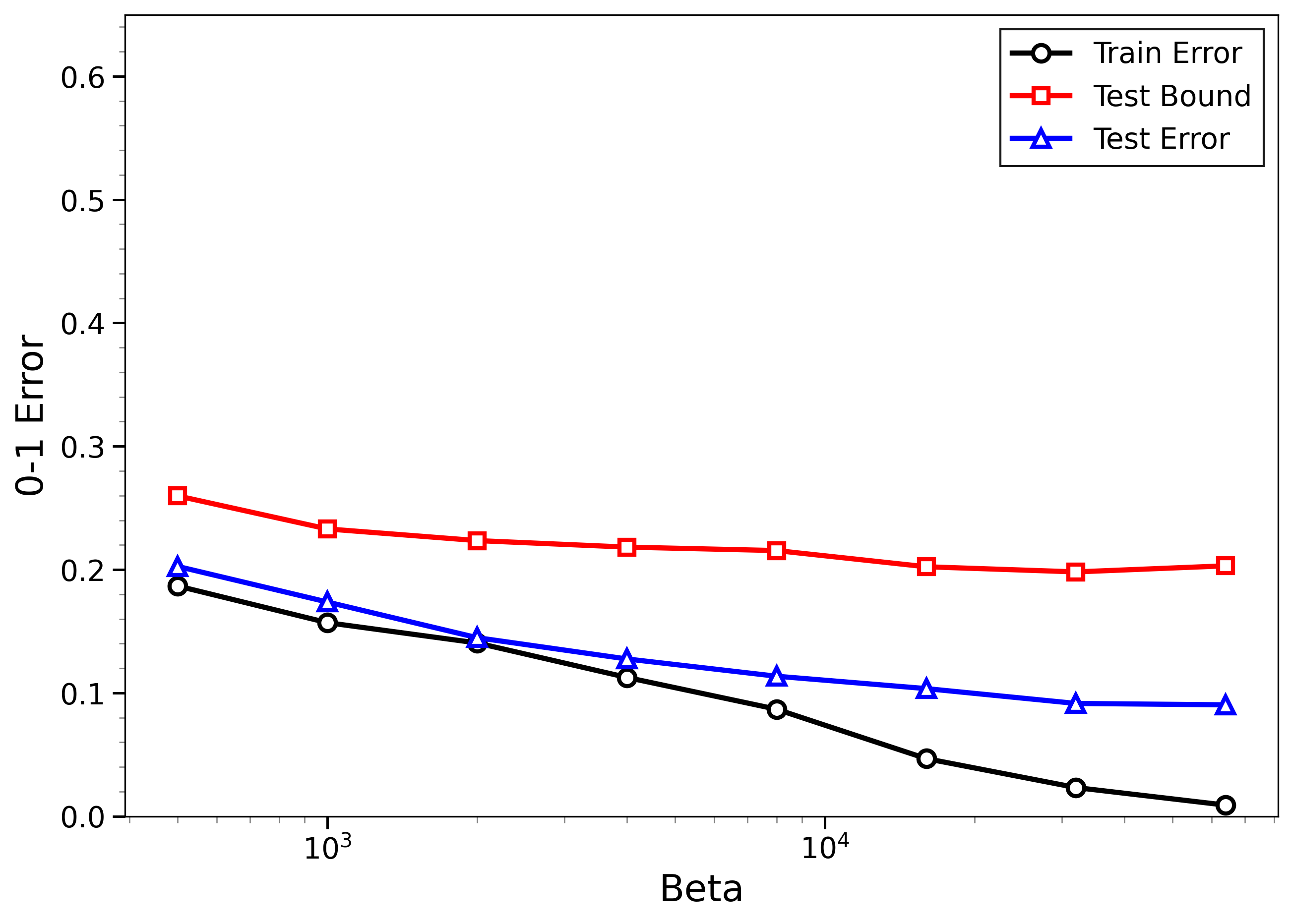}
  \end{subfigure}  
  
  \caption{SGLD on MNIST and CIFAR-10 with 8000 training examples using the BBCE loss function. The first two rows correspond to MNIST, and the remaining rows to CIFAR-10. Random labels are shown on the left, and true labels on the right. Both random and true labels are trained using the same algorithm and hyperparameters on a fully connected ReLU network with three hidden layers of 500 (MNIST) or 1000 (CIFAR-10) units, followed by LeNet-5 (MNIST) or VGG-16 (CIFAR-10), shown in the subsequent row. The calibration factors for MNIST are 0.26 and 0.08, for CIFAR-10, 0.24 and 0.18. The training error, test error, and our bound for the Gibbs posterior average of the 0–1 loss are plotted against \(\beta\).}
  \label{MC8k_more}
\end{figure}

The empirical test bounds can serve as a selection criterion among different models. Table \ref{tab:combined_results} show that test bounds at low temperature are useful for model selection, and that bounds at high temperature can also predict the behavior of the model at low temperature.

\begin{table}[htbp]
\centering

\begin{subtable}[t]{0.8\textwidth}
\centering
\begin{tabular}{lccc}
\toprule
 & 2HL (W=1000) & 3HL (W=500) & LeNet-5 \\
\midrule
Test Bound at $\beta = 1k$ & 0.1766 & 0.2347 & 0.0887 \\
Test Error at $\beta = 64k$ & 0.0498 & 0.0549 & 0.0317 \\
Test Bound at $\beta = 64k$ & 0.0860 & 0.1314 & 0.0375 \\
\bottomrule
\end{tabular}
\caption{MNIST, 8k training examples (true labels).}
\label{tab:mnist_results}
\end{subtable}

\vspace{0.5em}

\begin{subtable}[t]{0.8\textwidth}
\centering
\begin{tabular}{lccc}
\toprule
 & 2HL (W=1500) & 3HL (W=1000) & VGG-16 \\
\midrule
Test Bound at $\beta = 1k$ & 0.2905 & 0.3635 & 0.2330 \\
Test Error at $\beta = 64k$ & 0.1719 & 0.1782 & 0.0903 \\
Test Bound at $\beta = 64k$ & 0.2266 & 0.2807 & 0.2030 \\
\bottomrule
\end{tabular}
\caption{CIFAR-10, 8k training examples (true labels).}
\label{tab:cifar_results}
\end{subtable}

\caption{Test bounds and test errors for different neural network architectures on MNIST and CIFAR-10. The bounds at both low and high temperatures reliably reflect test error performance at low temperature.}
\label{tab:combined_results}
\end{table}
\FloatBarrier

\subsubsection{ULA} \label{sec:ULA}
We have also conducted experiments using ULA for both datasets. The main difference from SGLD is that we use all the information to compute the gradient at each step. The results are shown in Figure~\ref{MC2k_ULA_BBCE}. 

\begin{figure}[htbp]
  \centering
    \makebox[0.495\linewidth]{Random Labels} \hfill \makebox[0.495\linewidth]{True labels}
  \vspace{1pt}
  \begin{subfigure}[b]{0.425\linewidth} 
    \includegraphics[width=\linewidth]{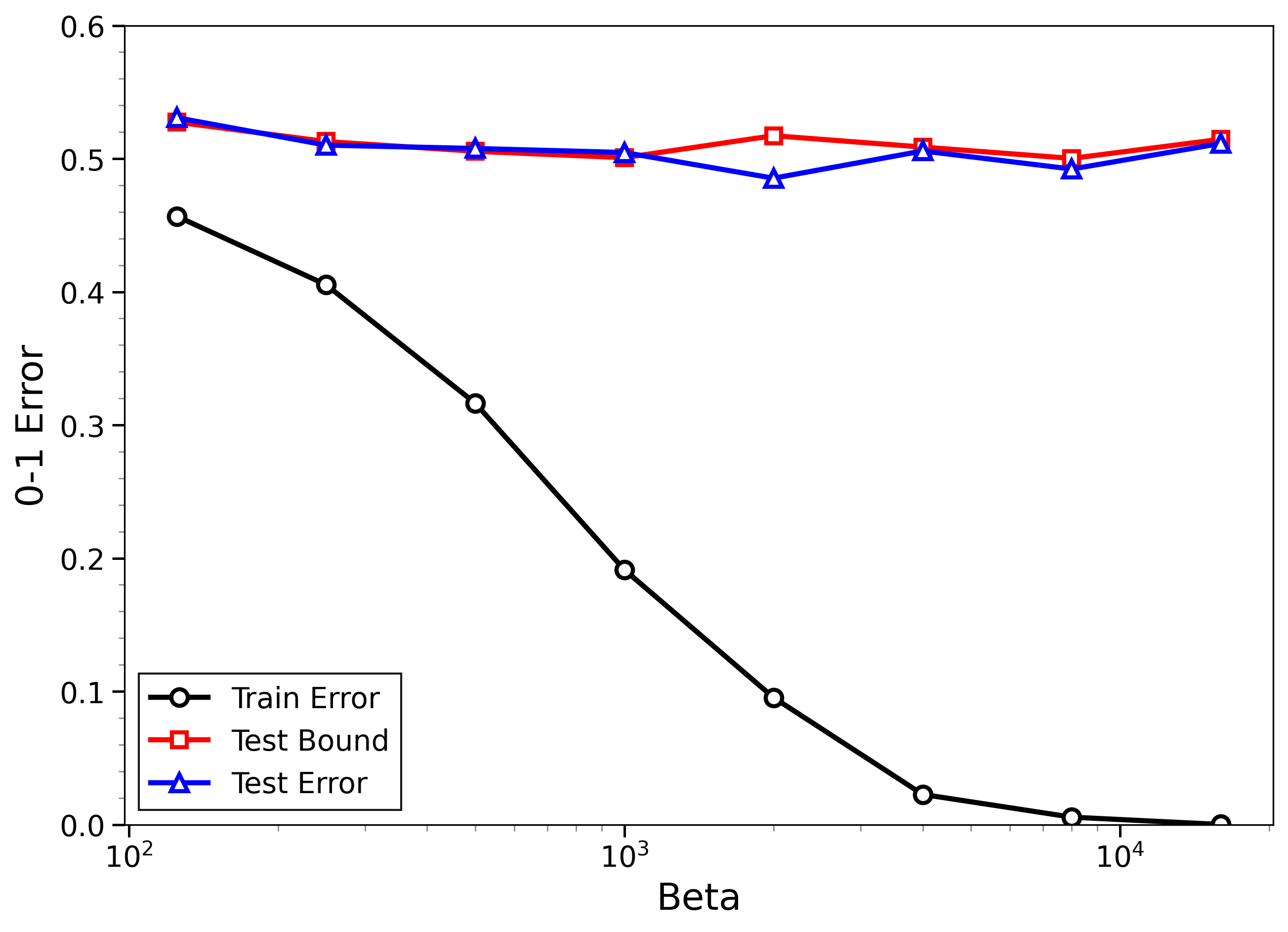}
  \end{subfigure}  
  \begin{subfigure}[b]{0.425\linewidth} 
	\includegraphics[width=\linewidth]{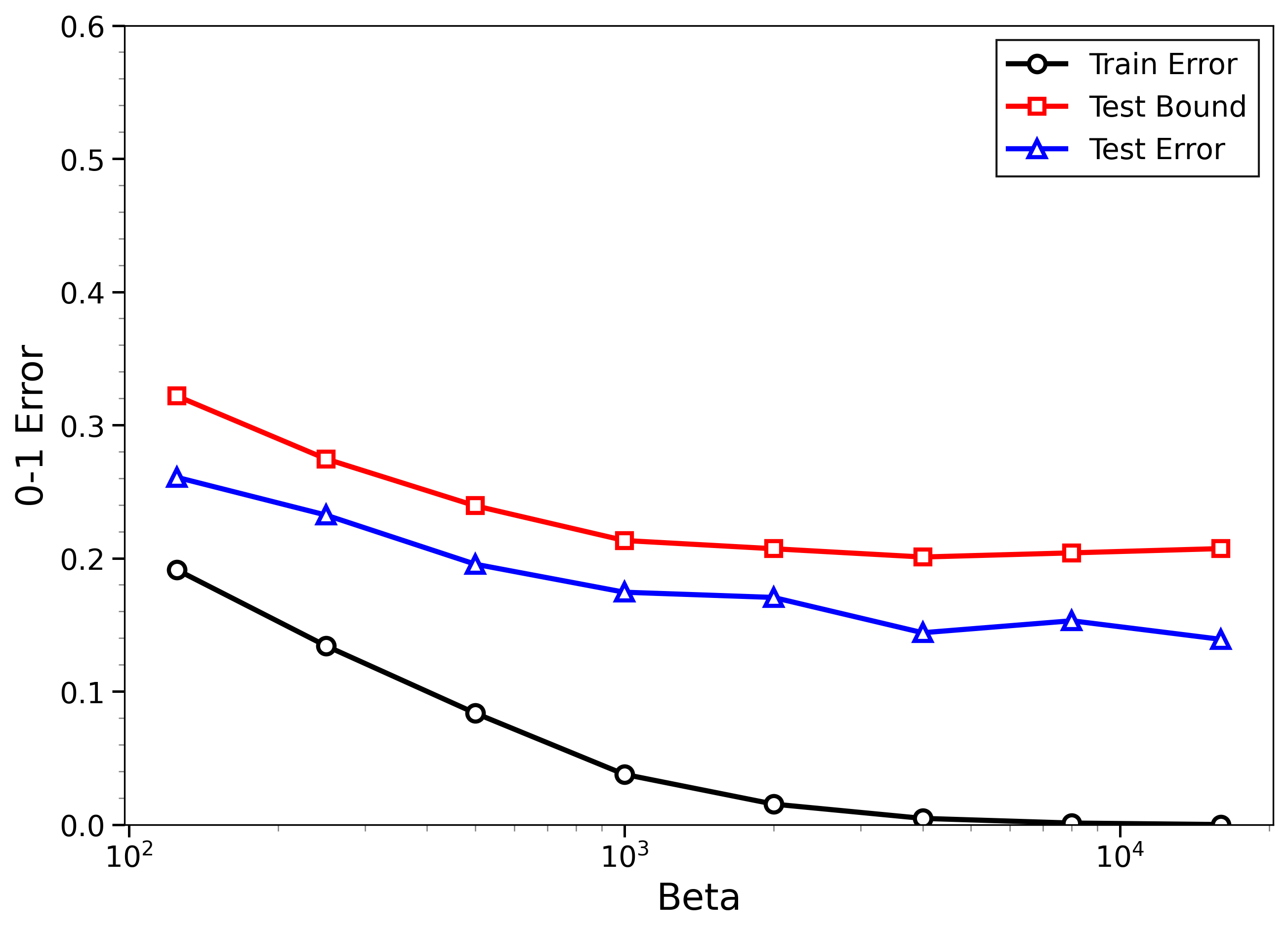}
  \end{subfigure}  
    \begin{subfigure}[b]{0.425\linewidth} 
    \includegraphics[width=\linewidth]{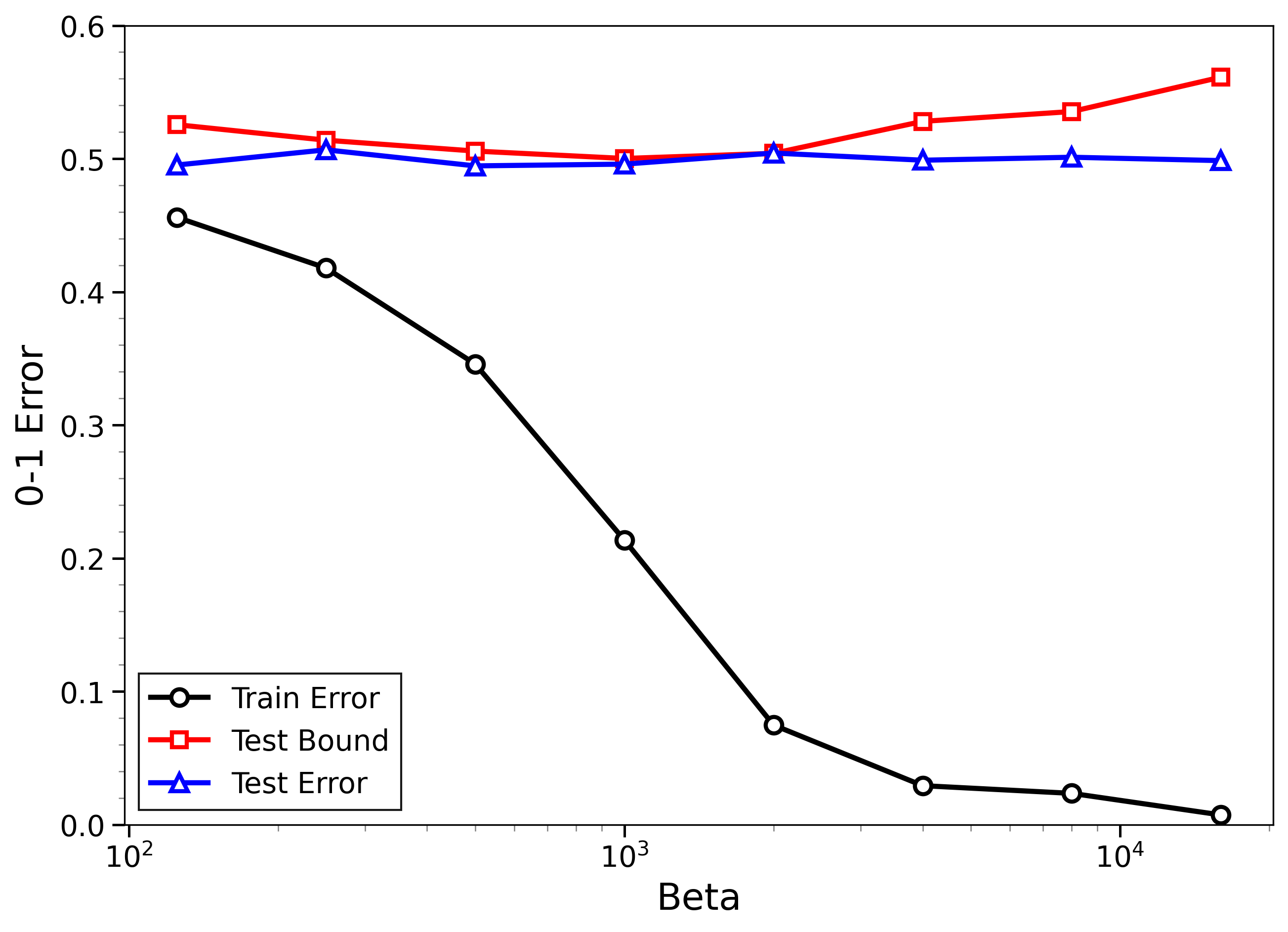}
  \end{subfigure}  
  \begin{subfigure}[b]{0.425\linewidth} 
	\includegraphics[width=\linewidth]{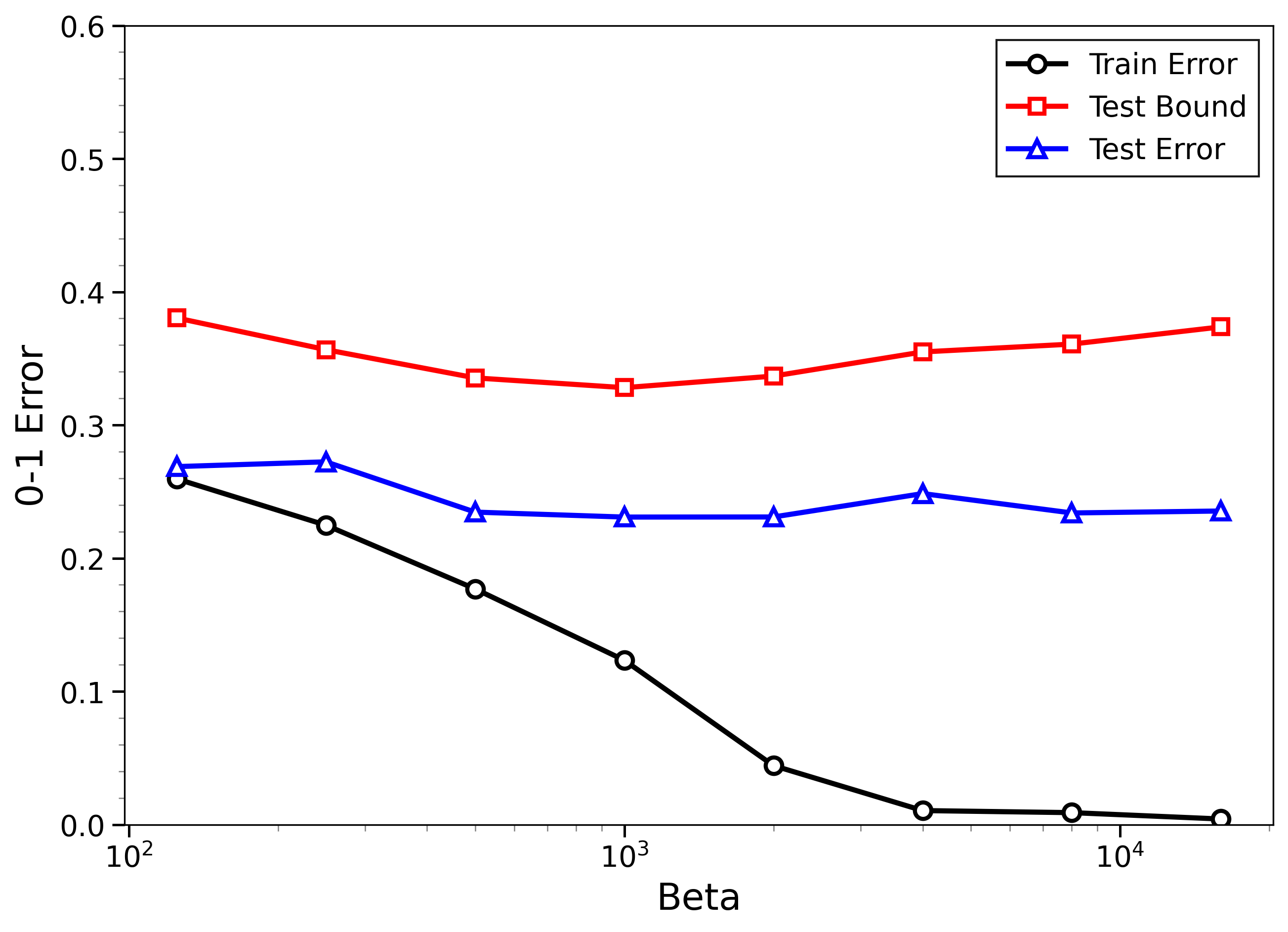}
  \end{subfigure}  
  \caption{ULA on MNIST and CIFAR-10 with 2000 training examples using the BBCE loss function. The first row corresponds to MNIST and the second row to CIFAR-10. Random labels are shown on the left, true labels on the right. Both random and true labels are trained with the same algorithm and parameters on a fully connected ReLU network with one (respectively two) hidden layers of 500 (respectively 1000) units. The calibration factor for MNIST is 0.49, for CIFAR-10, 0.46. Train error, test error, and our bound for the Gibbs posterior average of the 0-1 loss are plotted against $\beta$.}
  \label{MC2k_ULA_BBCE}
\end{figure}
\FloatBarrier

\subsubsection{Savage Loss Function} \label{sec:SAVAGE}
We additionally performed experiments using the Savage loss to verify the robustness of our results across different loss functions. Following the same setup as in the previous section, the outcomes are reported in Figure~\ref{MC2k_ULA_SAVAGE}. 
\begin{figure}[htbp]
  \centering
    \makebox[0.495\linewidth]{Random Labels} \hfill \makebox[0.495\linewidth]{True labels}
  \vspace{1pt}
  \begin{subfigure}[b]{0.425\linewidth} 
    \includegraphics[width=\linewidth]{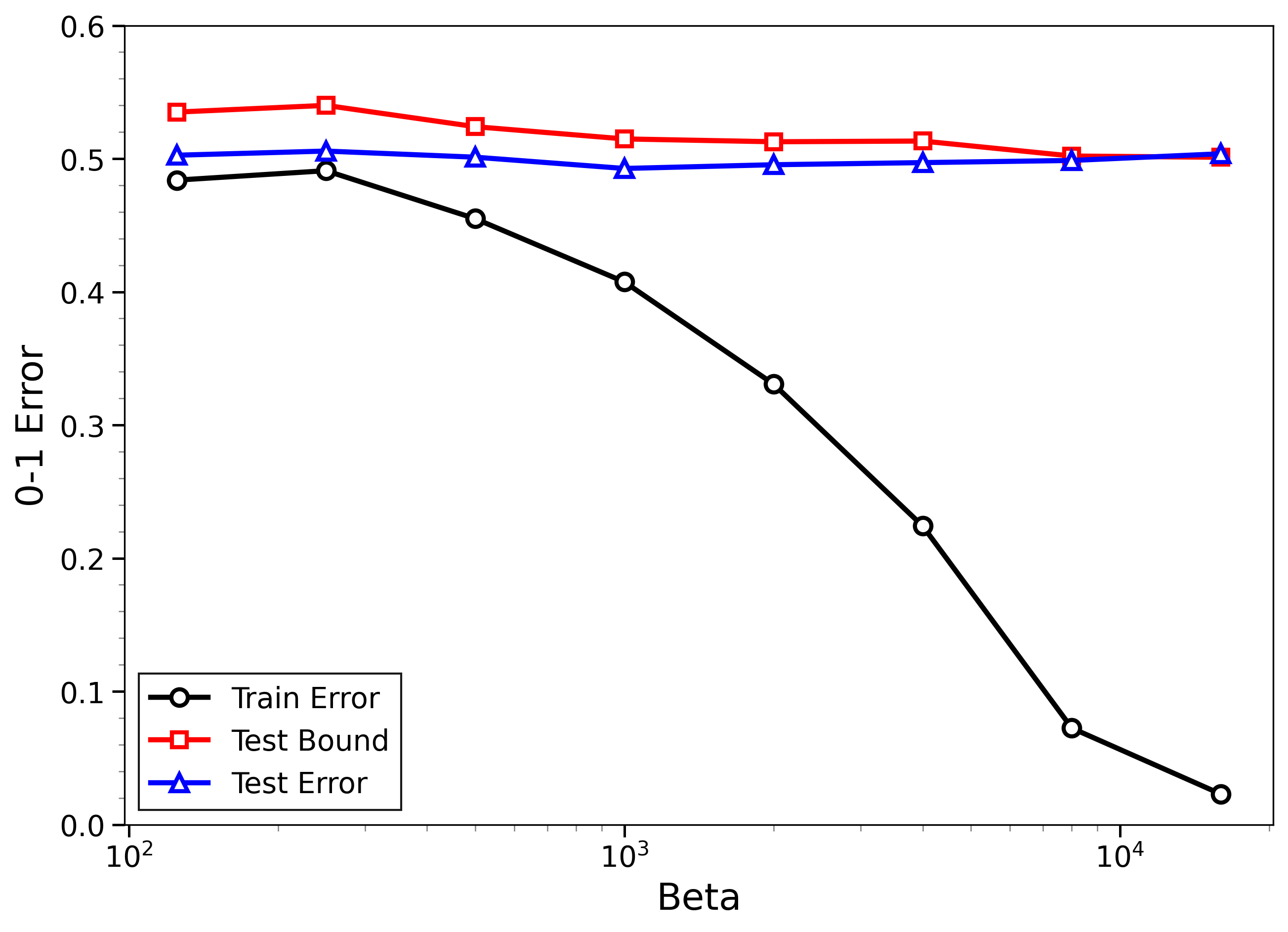}
  \end{subfigure}  
  \begin{subfigure}[b]{0.425\linewidth} 
	\includegraphics[width=\linewidth]{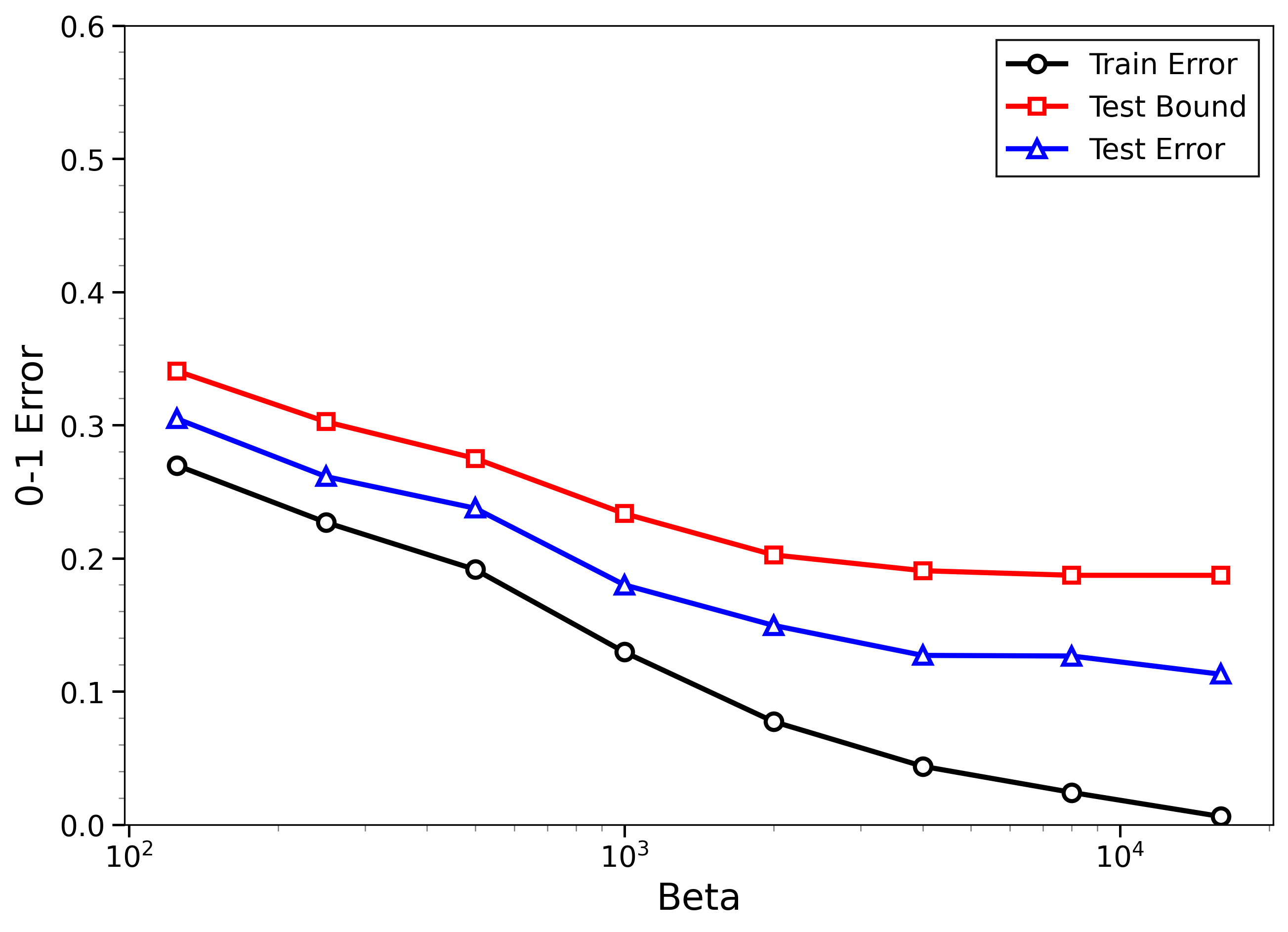}
  \end{subfigure}  
    \begin{subfigure}[b]{0.425\linewidth} 
    \includegraphics[width=\linewidth]{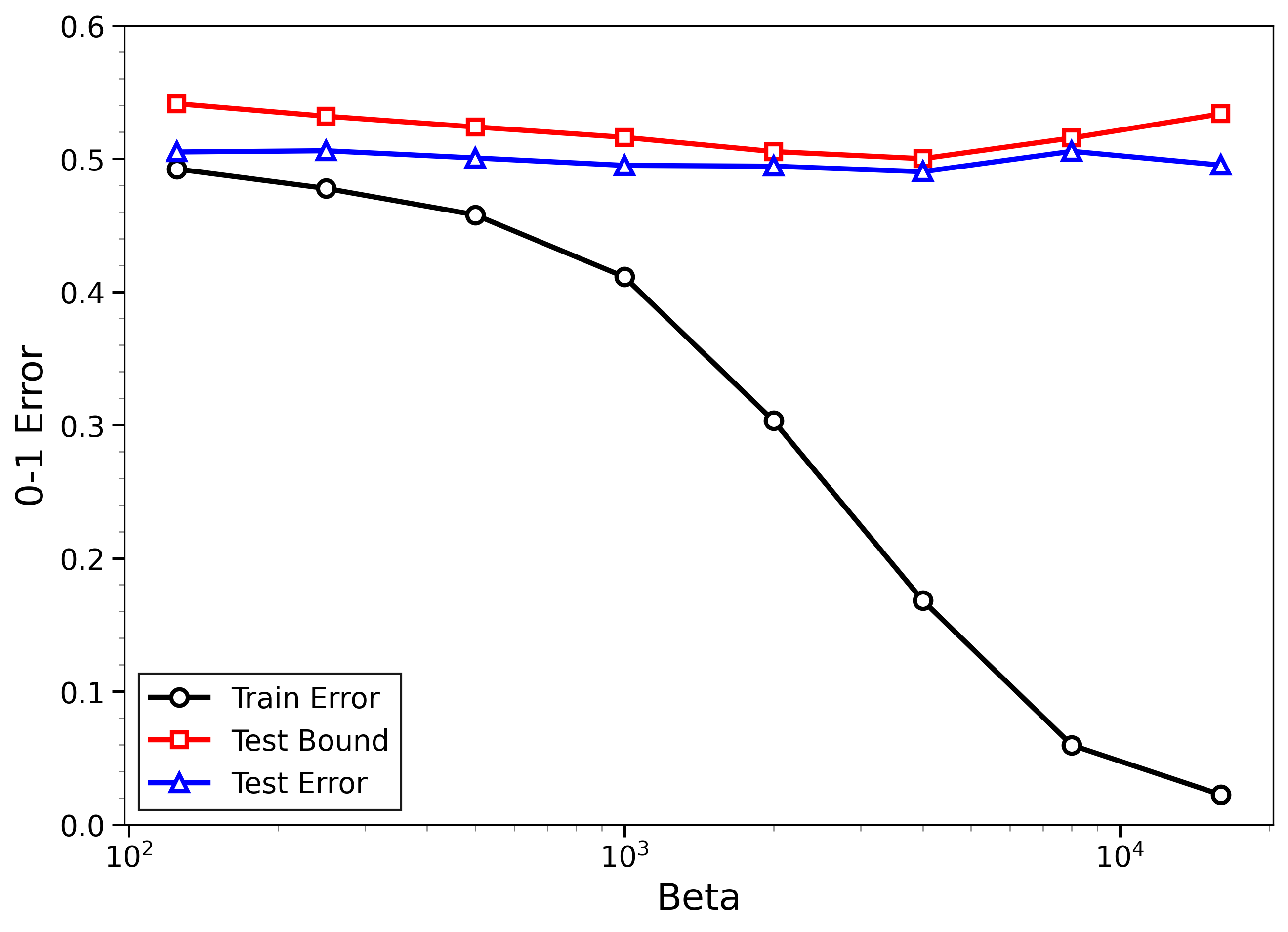}
  \end{subfigure}  
  \begin{subfigure}[b]{0.425\linewidth} 
	\includegraphics[width=\linewidth]{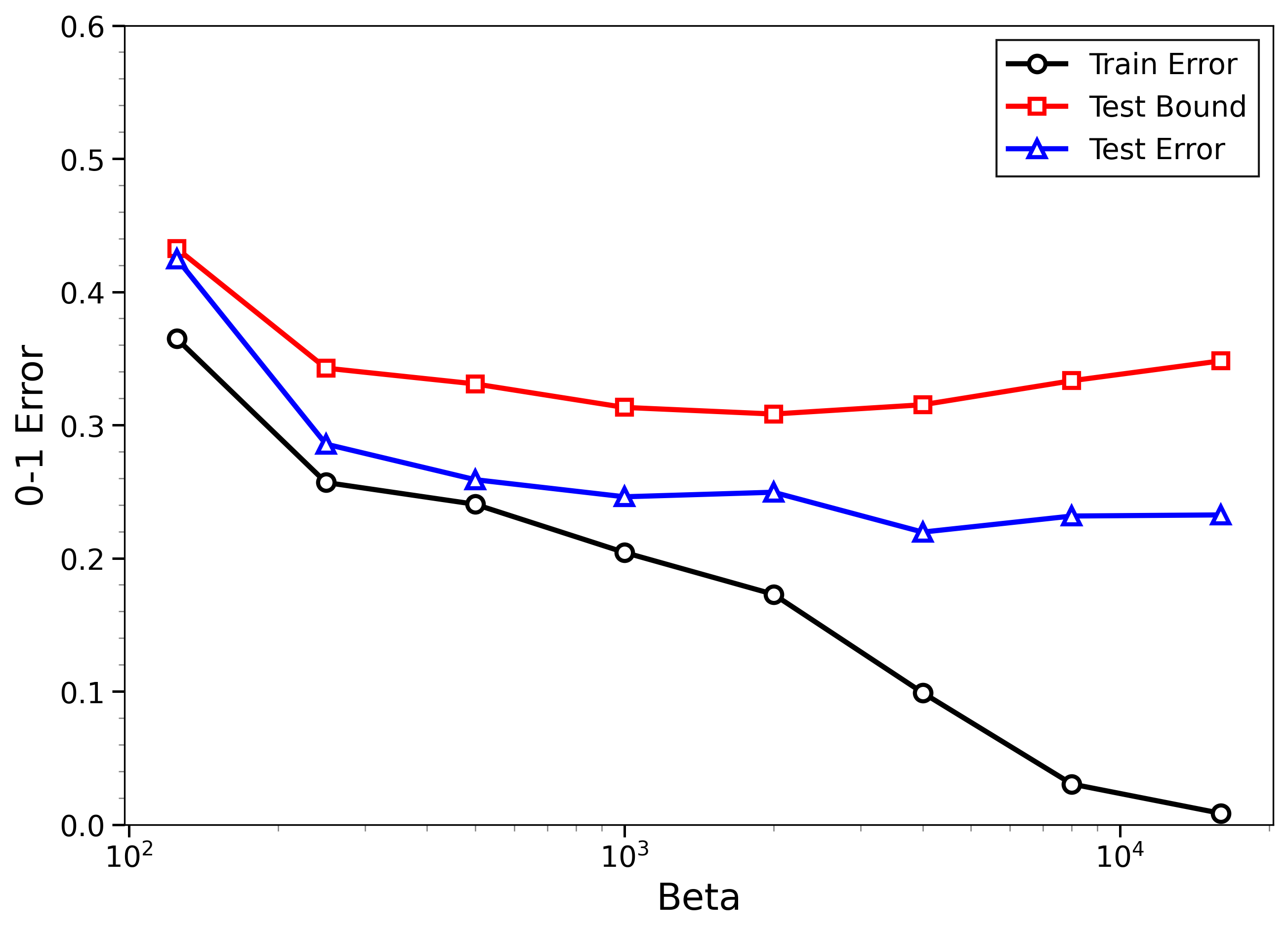}
  \end{subfigure}  
  \caption{ULA on MNIST and CIFAR-10 with 2000 training examples using the Savage loss function. The first row corresponds to MNIST and the second row to CIFAR-10. Random labels are shown on the left, true labels on the right. Both random and true labels are trained with the same algorithm and parameters on a fully connected ReLU network with one (respectively two) hidden layers of 500 (respectively 1000) units. The calibration factor for MNIST is 0.49, for CIFAR-10, 0.59. Train error, test error, and our bound for the Gibbs posterior average of the 0-1 loss are plotted against $\beta$.}
  \label{MC2k_ULA_SAVAGE}
\end{figure}
\FloatBarrier

\subsubsection{Unbounded Loss Function} \label{sec:unbounded_loss}
In this section, we use the binary cross-entropy loss to compute the $\Gamma$ functional. Since binary cross-entropy is unbounded, the loss can become very large at high temperatures. To avoid this issue, we set the standard deviation of the Gaussian prior to 0.1 in this section. The following plot shows the results under the same setup as Section \ref{sec:SAVAGE}, except that we use binary cross-entropy instead of the Savage loss.

\begin{figure}[htbp]
  \centering
    \makebox[0.495\linewidth]{Random Labels} \hfill \makebox[0.495\linewidth]{True labels}
  \vspace{1pt}
  \begin{subfigure}[b]{0.425\linewidth} 
    \includegraphics[width=\linewidth]{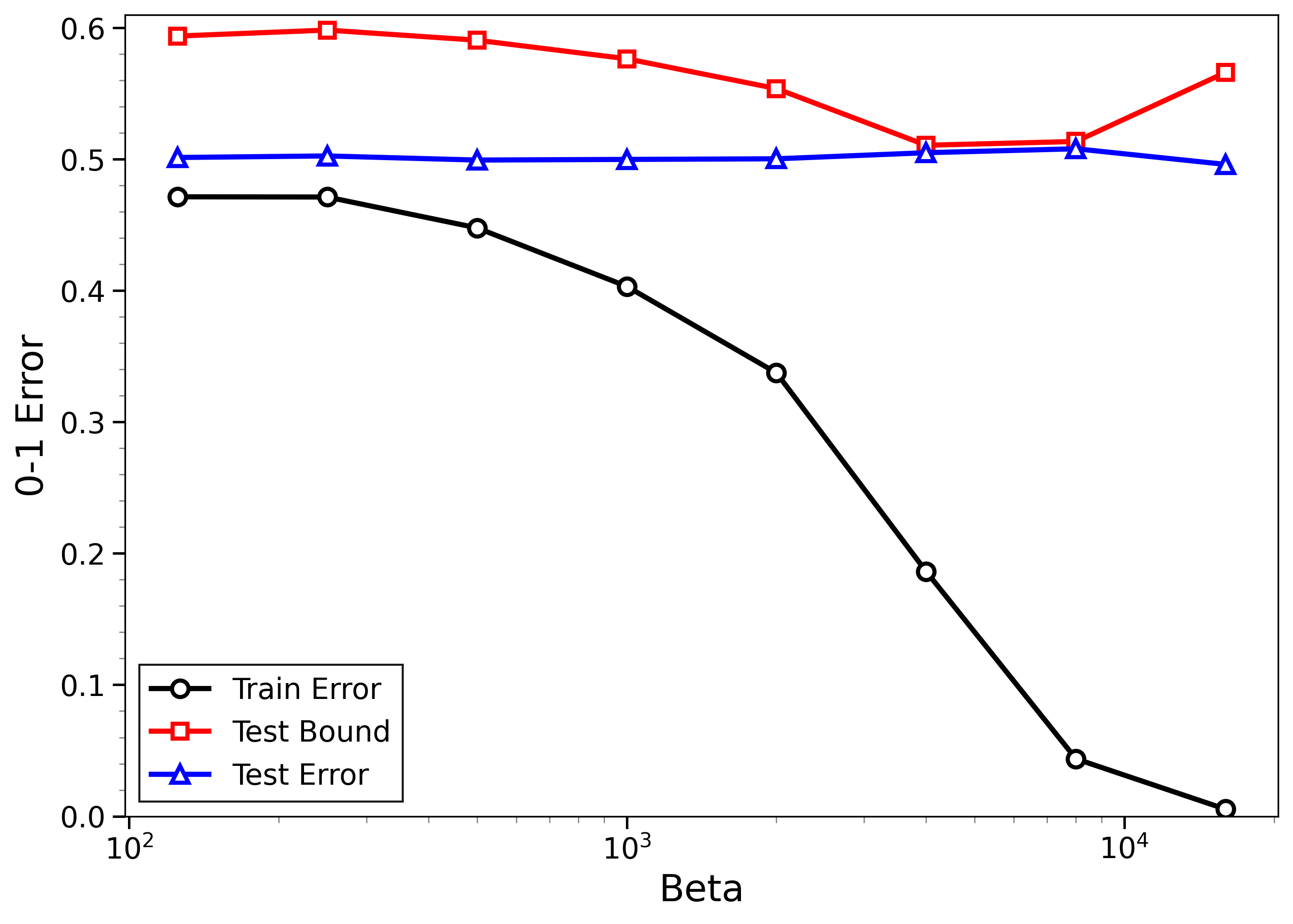}
  \end{subfigure}  
  \begin{subfigure}[b]{0.425\linewidth} 
	\includegraphics[width=\linewidth]{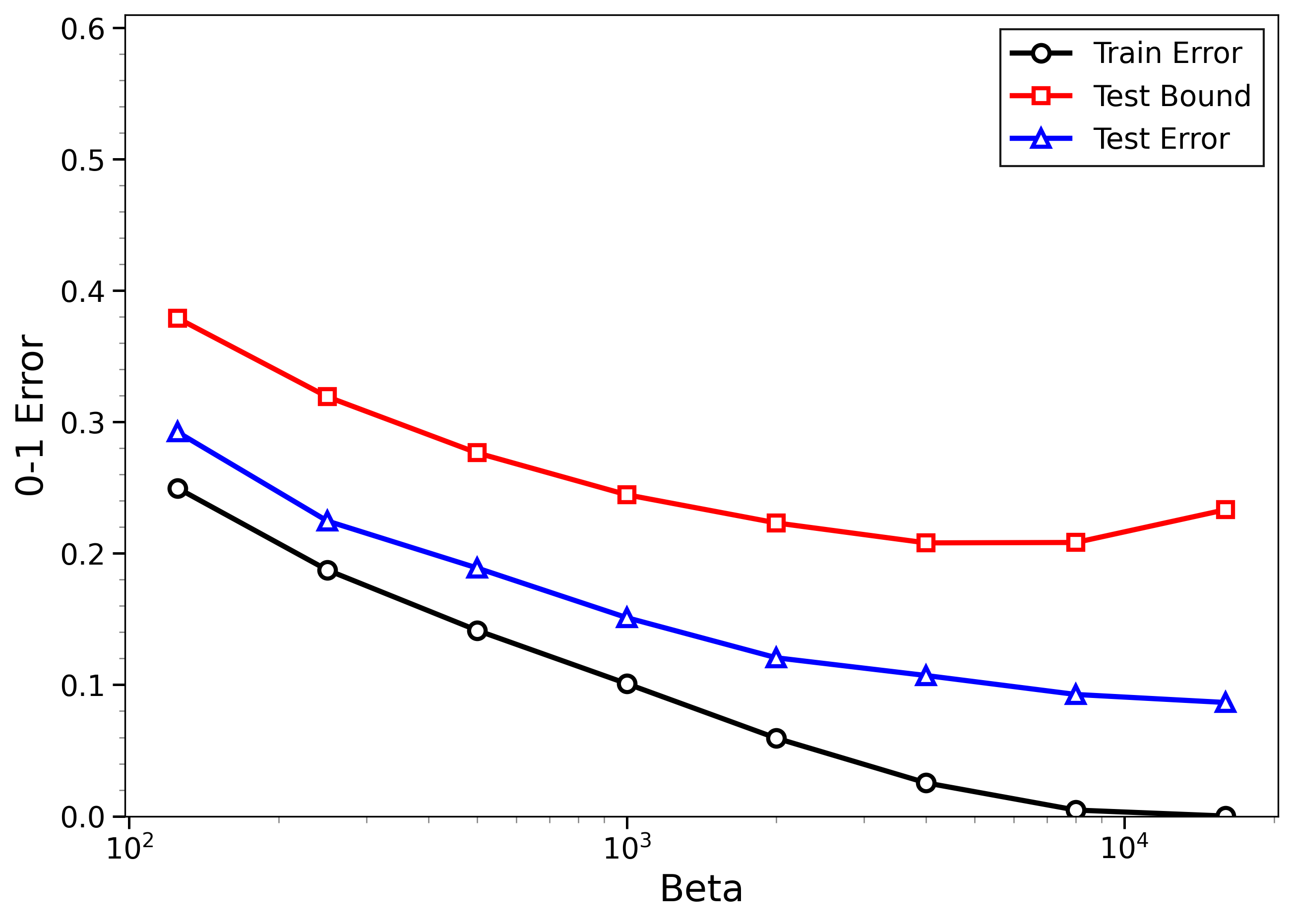}
  \end{subfigure}  

  \caption{ULA on MNIST with 2000 training examples using binary cross-entropy loss function. Random labels are shown on the left, true labels on the right. Both random and true labels are trained with the same algorithm and parameters on a fully connected ReLU network with one hidden layer of 500 units. The calibration factor is 0.34. Train error, test error, and our bound for the Gibbs posterior average of the 0-1 loss are plotted against $\beta$.}
  \label{MC2k_ULA_BCE}
\end{figure}
\FloatBarrier

\subsubsection{SVHN Dataset}
A new dataset has been added to our empirical results in this section, Street View House Numbers (SVHN). Following the exact setting of Figure \ref{MC8k}, we trained a 3-layer fully connected ReLU network (1000 units/layer) and a VGG-16 CNN using SGLD on 8,000 SVHN examples. Plotting the training error, test error, and our bound against 
 gives the same predictive behavior observed in Figure \ref{MC8k}, as one can see the results in Figures \ref{fig:MLP_SVHN}, \ref{fig:CNN_SVHN}. The results confirm the generalizability of our main principle to diverse real-world datasets.
\begin{figure}[htbp]
  \centering
  \makebox[0.495\linewidth]{Random Labels} \hfill \makebox[0.495\linewidth]{True labels}
  \vspace{1pt}

  \begin{subfigure}[b]{0.495\linewidth} 
    \includegraphics[width=\linewidth]{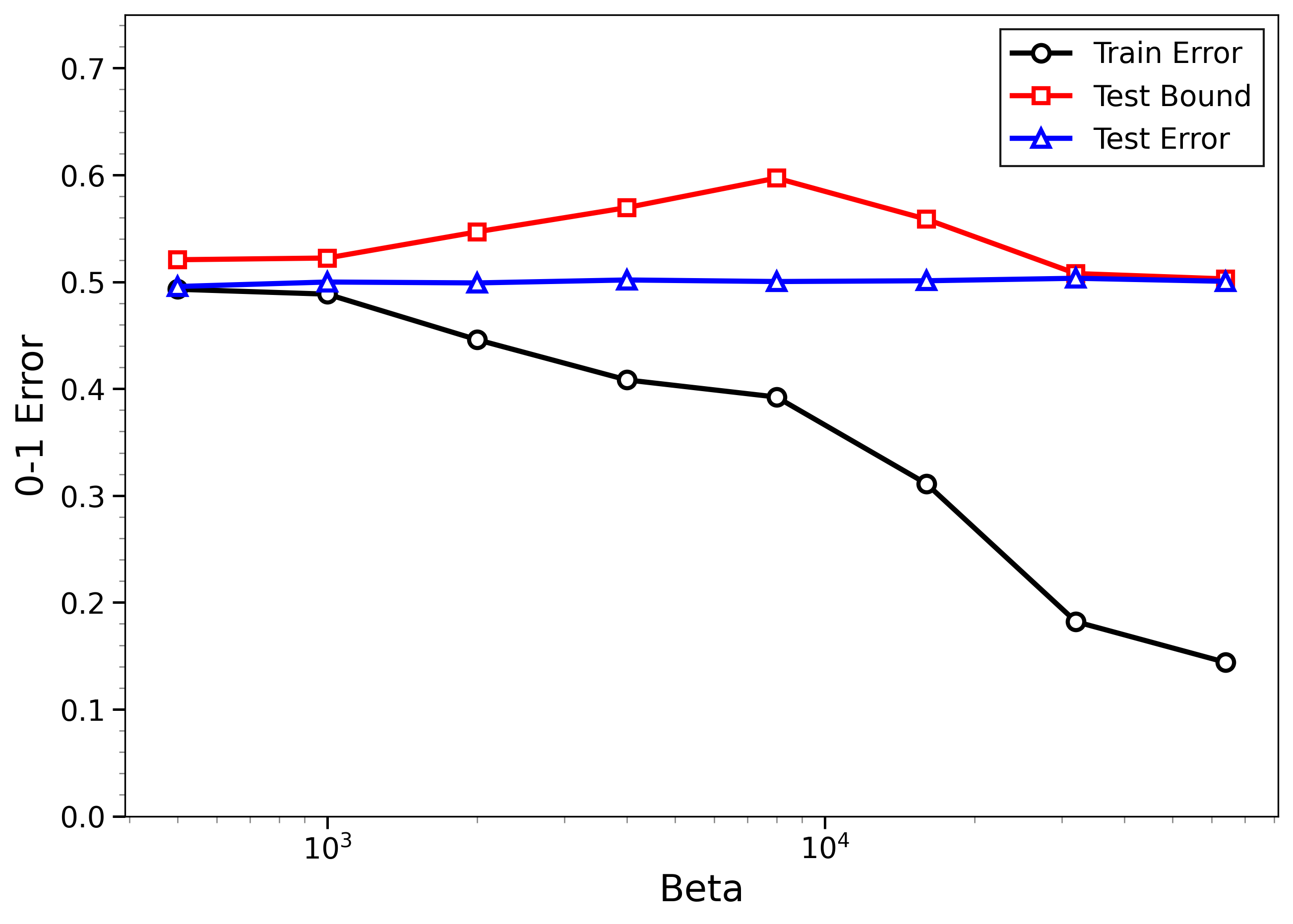}
  \end{subfigure} 
  \begin{subfigure}[b]{0.495\linewidth} 
    \includegraphics[width=\linewidth]{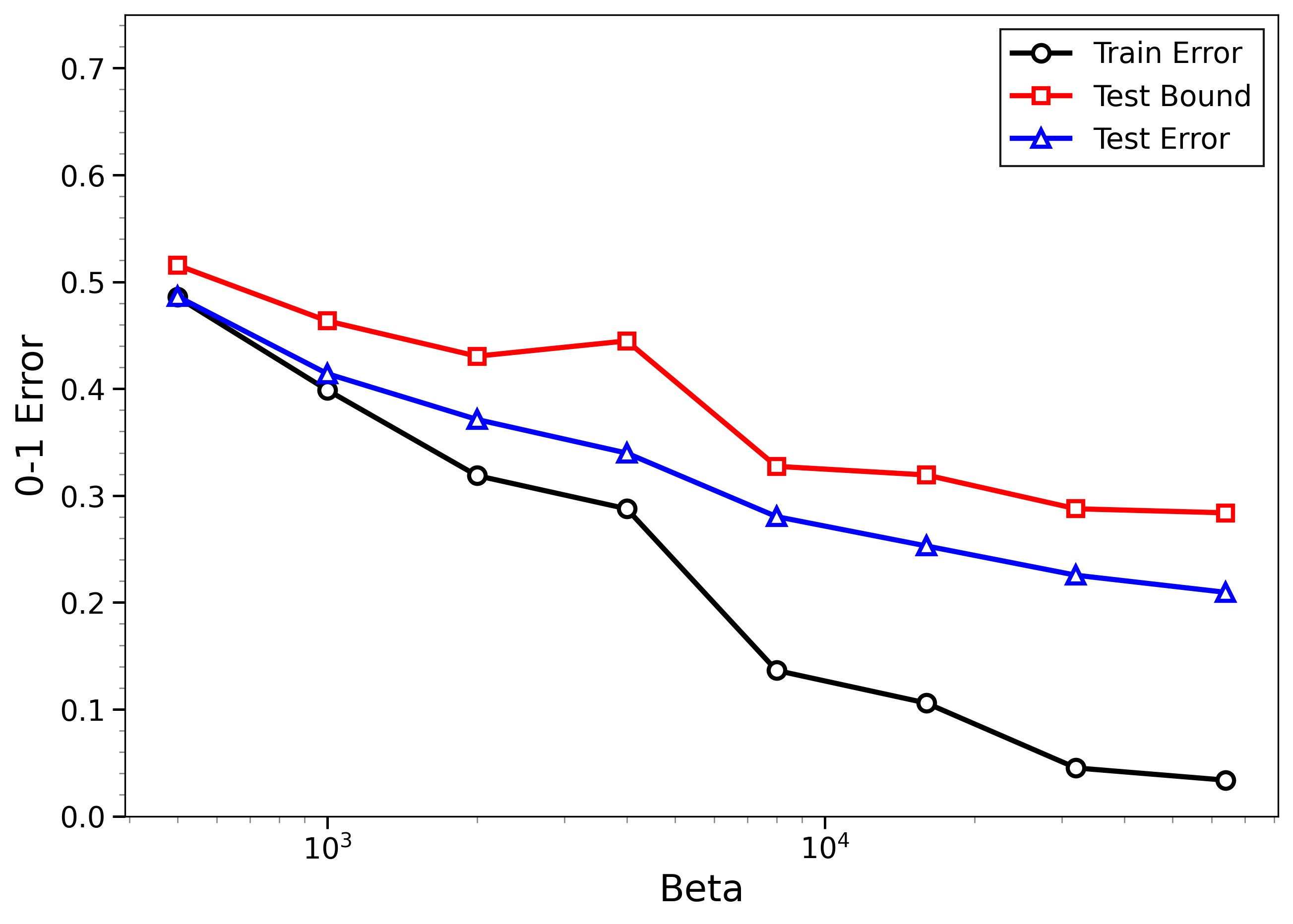}
  \end{subfigure} 

  \caption{SGLD on SVHN with 8000 training examples. Both random and true labels are trained with the same algorithm and parameters on a fully connected ReLU network with three hidden layers of 1000 units. The calibration factor is 0.14. Train error, test error, and our bound for the Gibbs posterior average of the 0-1 loss are plotted against $\beta$.}
  \label{fig:MLP_SVHN}

\end{figure}
\FloatBarrier

\begin{figure}[htbp]

  \centering
  \makebox[0.495\linewidth]{Random Labels} \hfill \makebox[0.495\linewidth]{True labels}
  \vspace{1pt}

  \begin{subfigure}[b]{0.495\linewidth} 
    \includegraphics[width=\linewidth]{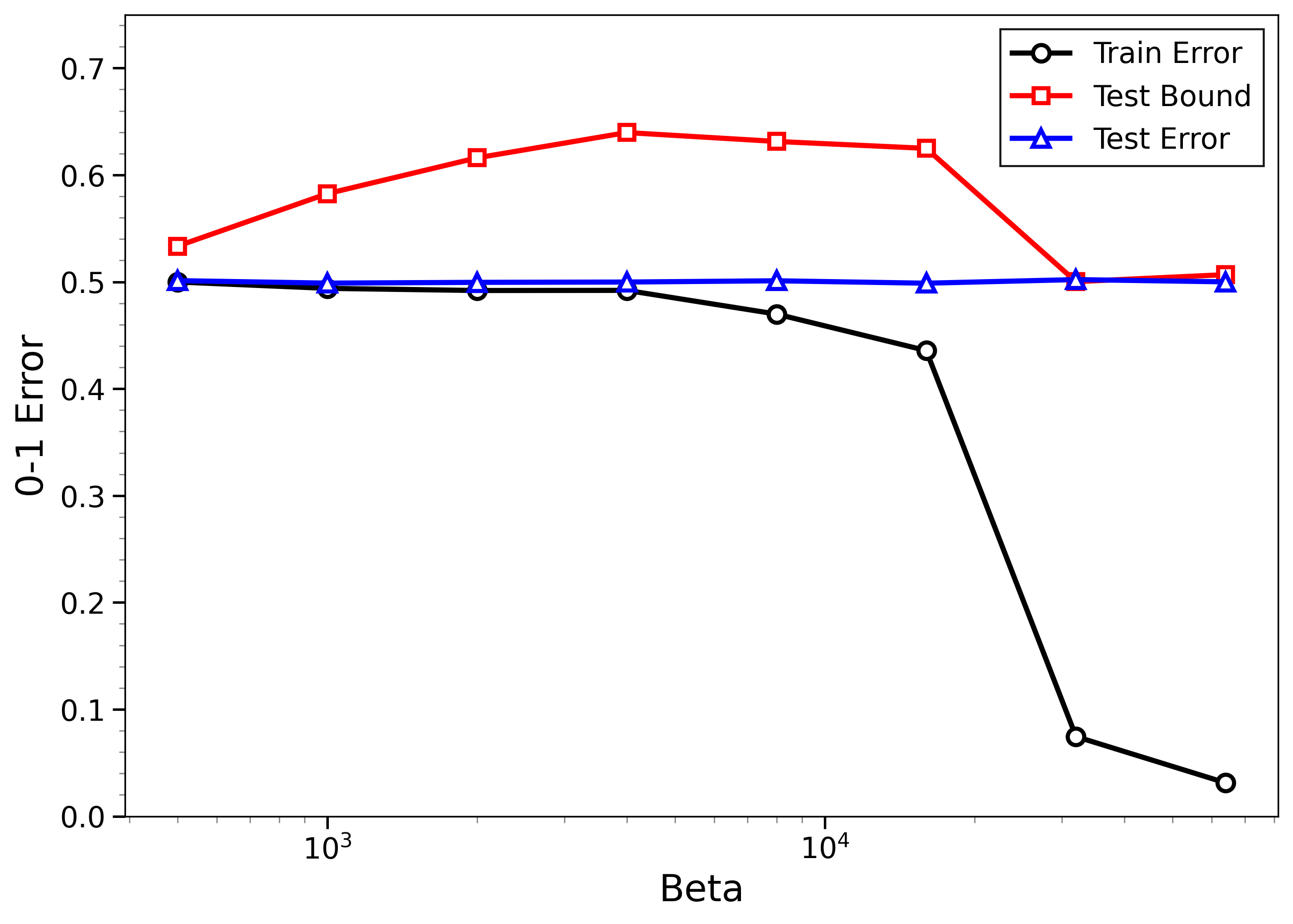}
  \end{subfigure} 
  \begin{subfigure}[b]{0.495\linewidth} 
    \includegraphics[width=\linewidth]{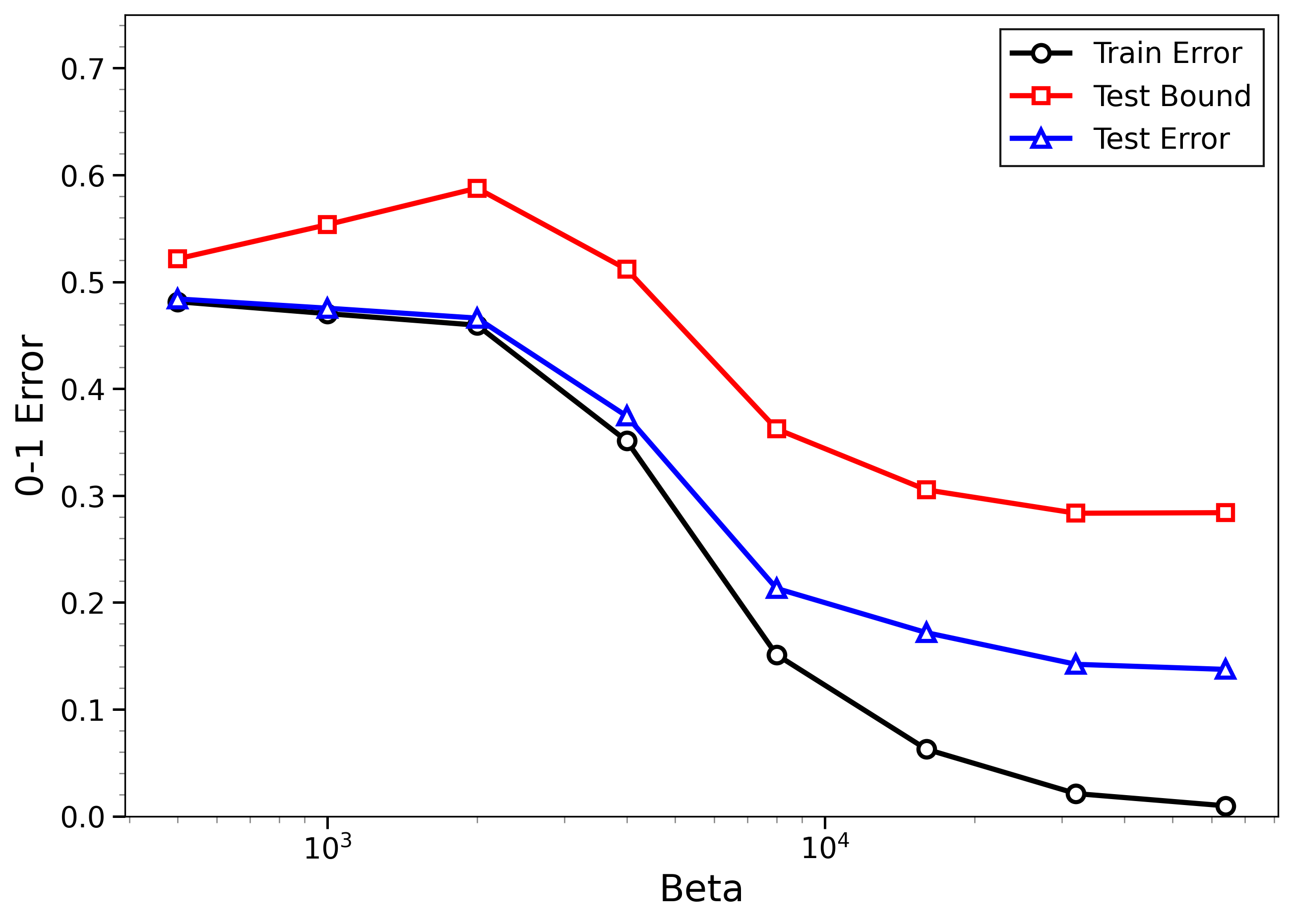}
  \end{subfigure} 

  \caption{SGLD on SVHN with 8000 training examples. Both random and true labels are trained with the same algorithm and parameters on the VGG-16 architecture. The calibration factor is 0.17. Train error, test error, and our bound for the Gibbs posterior average of the 0-1 loss are plotted against $\beta$.}
  \label{fig:CNN_SVHN}
\end{figure}
\FloatBarrier

\subsubsection{Real-World Use Cases}
\label{sec:transfer_sgd}
We further evaluated Stochastic Gradient Descent (SGD) to examine the practical relevance of our bounds in real-world interpolation regimes. 

Based on our observations, we suggest the following procedure for practitioners who wish to train overparameterized neural networks with standard SGD while also obtaining generalization guarantees. First, randomly permute the labels, train the network at different temperatures, and compute the bound together with the calibration factor. Then, repeat the same procedure using the true labels. At very low temperatures, this approach provides generalization guarantees that may transfer to SGD. The corresponding results are presented in Table~\ref{tab:combined_results_sgd}.

\begin{table}[htbp]
\centering

\begin{subtable}[t]{0.8\textwidth}
\centering
\begin{tabular}{lccc}
\toprule
 & 2HL (W=1000) & 3HL (W=500) & LeNet-5 \\
\midrule
Test Error, SGD & 0.0364 & 0.0363 & 0.0308 \\
Test Error, SGLD ($\beta = 64k$) & 0.0498 & 0.0549 & 0.0317 \\
Test Bound, SGLD ($\beta = 64k$) & 0.0860 & 0.1314 & 0.0375 \\
\bottomrule
\end{tabular}
\caption{MNIST, 8k training examples (true labels).}
\label{tab:mnist_results_sgd}
\end{subtable}

\vspace{0.5em}

\begin{subtable}[t]{0.8\textwidth}
\centering
\begin{tabular}{lccc}
\toprule
 & 2HL (W=1500) & 3HL (W=1000) & VGG-16 \\
\midrule
Test Error, SGD & 0.1423 & 0.1415 & 0.0933 \\
Test Error, SGLD ($\beta = 64k$) & 0.1719 & 0.1782 & 0.0903 \\
Test Bound, SGLD ($\beta = 64k$) & 0.2266 & 0.2807 & 0.2030 \\
\bottomrule
\end{tabular}
\caption{CIFAR-10, 8k training examples (true labels).}
\label{tab:cifar_results_sgd}
\end{subtable}

\caption{Comparing SGD test error with SGLD test errors and bounds for different neural network architectures on MNIST and CIFAR-10.}
\label{tab:combined_results_sgd}
\end{table}
\FloatBarrier
\subsubsection{Sensitivity of the Calibration Scheme}
Our empirical results are highly robust to the calibration scheme. The calibration factor is derived entirely from random-label data by finding the multiplier that pushes the random-label bound to the known true error ($\frac{1}{2}$ for binary classification). To test stability, we repeated the SGLD experiments as in Figure \ref{MC8k} across 10 different random seeds. The variance of the integrated area is remarkably low: the mean calibration factor for MNIST is $0.75 \pm 0.02$, and for CIFAR-10 it is $0.84 \pm 0.02$. This confirms that the calibration factor is consistently stable and not an artifact of specific random initializations.


\end{document}